\documentclass[11pt,a4paper]{article}

\usepackage[T1]{fontenc}
\usepackage[utf8]{inputenc}
\usepackage[english]{babel}
\usepackage{newunicodechar}
\newunicodechar{̀}{\`{}}

\usepackage{lmodern}
\usepackage{microtype}
\usepackage{setspace}
\usepackage{ragged2e}

\usepackage[
    a4paper,
    top=1.05in,
    bottom=1.10in,
    left=1.10in,
    right=1.10in,
    headheight=15pt
]{geometry}

\usepackage{amsmath}
\usepackage{amssymb}
\usepackage{centernot}
\usepackage{amsthm}
\usepackage{mathtools}
\usepackage{bm}
\usepackage{dsfont}

\numberwithin{equation}{section}

\usepackage{graphicx}
\usepackage{booktabs}
\usepackage{tabularx}
\usepackage{array}
\usepackage{longtable}
\usepackage{multirow}
\usepackage{float}
\usepackage{caption}
\usepackage{subcaption}

\graphicspath{{figures/}}

\usepackage{tikz}
\usepackage{tikz-cd}

\usetikzlibrary{
    arrows.meta,
    positioning,
    calc,
    fit,
    backgrounds,
    shapes.geometric,
    decorations.pathreplacing,
    intersections,
    patterns
}

\usepackage{xcolor}

\definecolor{RRInk}{HTML}{171717}
\definecolor{RRDeep}{HTML}{172554}
\definecolor{RRAccent}{HTML}{7F1D1D}
\definecolor{RRGray}{HTML}{595959}
\definecolor{RRLight}{HTML}{F4F4F4}
\definecolor{RRRule}{HTML}{D4D4D4}

\usepackage{titlesec}

\titleformat{\section}
    {\Large\bfseries\color{RRInk}}
    {\thesection}
    {0.75em}
    {}

\titleformat{\subsection}
    {\large\bfseries\color{RRDeep}}
    {\thesubsection}
    {0.75em}
    {}

\titleformat{\subsubsection}
    {\normalsize\bfseries\color{RRInk}}
    {\thesubsubsection}
    {0.75em}
    {}

\titlespacing*{\section}
    {0pt}{2.4em}{0.9em}

\titlespacing*{\subsection}
    {0pt}{1.7em}{0.6em}

\titlespacing*{\subsubsection}
    {0pt}{1.3em}{0.45em}

\usepackage{fancyhdr}

\fancypagestyle{plain}{
    \fancyhf{}
    \fancyfoot[C]{\small\thepage}
    
}

\fancypagestyle{appendixstyle}{%
    \fancyhf{} 

    \fancyhead[L]{\small\textsc{Appendix: Mathematical Foundations of CLT}}

    \fancyhead[R]{\small\textsc{Afshin Khadangi}}

    \fancyfoot[C]{\thepage}

}

\usepackage{csquotes}
\usepackage{epigraph}

\usepackage{enumitem}

\setlist{
    itemsep=0.25em,
    topsep=0.5em
}

\usepackage[bottom,hang]{footmisc}

\usepackage[most]{tcolorbox}

\tcbset{
    enhanced,
    sharp corners,
    before skip=1em,
    after skip=1em
}

\newtcolorbox{thesisbox}{
    colback=RRLight,
    colframe=RRInk,
    boxrule=0.8pt,
    left=12pt,
    right=12pt,
    top=10pt,
    bottom=10pt
}

\newtcolorbox{provocationbox}{
    colback=white,
    colframe=RRAccent,
    boxrule=1.0pt,
    left=12pt,
    right=12pt,
    top=10pt,
    bottom=10pt
}

\newtcolorbox{formalbox}{
    colback=white,
    colframe=RRDeep,
    boxrule=0.8pt,
    left=12pt,
    right=12pt,
    top=10pt,
    bottom=10pt
}

\newtcolorbox{implicationbox}{
    colback=RRLight,
    colframe=RRGray,
    boxrule=0.5pt,
    left=12pt,
    right=12pt,
    top=10pt,
    bottom=10pt
}

\newtheoremstyle{rrplain}
    {10pt}
    {10pt}
    {\itshape}
    {}
    {\bfseries}
    {.}
    {0.5em}
    {}

\newtheoremstyle{rrdefinition}
    {10pt}
    {10pt}
    {}
    {}
    {\bfseries}
    {.}
    {0.5em}
    {}

\theoremstyle{rrplain}

\theoremstyle{rrdefinition}

\theoremstyle{rrplain}

\newtheorem{clttheorem}{Theorem}[section]
\newtheorem{cltlemma}[clttheorem]{Lemma}
\newtheorem{cltproposition}[clttheorem]{Proposition}
\newtheorem{cltcorollary}[clttheorem]{Corollary}

\theoremstyle{rrdefinition}

\newtheorem{cltdefinition}[clttheorem]{Definition}

\newtheorem{cltremark}[clttheorem]{Remark}

\newcommand{\CLT}{\mathrm{CLT}}

\newcommand{\Liab}{\mathcal{L}}

\newcommand{\Foreclose}{\Phi}

\newcommand{\Policies}{\Pi}

\newcommand{\Prob}{\mathbb{P}}

\newcommand{\LiabDepth}{\Lambda}

\DeclareMathOperator{\Viab}{Viab}

\DeclareMathOperator{\id}{id}

\usepackage{tgtermes}
\usepackage{tgheros}

\newcommand{\TitleFont}{\fontfamily{qtm}\selectfont}
\newcommand{\BodyFont}{\fontfamily{lmr}\selectfont}
\newcommand{\MetaFont}{\fontfamily{qhv}\selectfont}

\usepackage[
    colorlinks=true,
    linkcolor=RRDeep,
    citecolor=RRAccent,
    urlcolor=RRDeep,
    pdfborder={0 0 0}
]{hyperref}

\usepackage[nameinlink,noabbrev]{cleveref}

\usepackage[
    backend=biber,
    style=authoryear,
    giveninits=true,
    maxcitenames=2,
    maxbibnames=99,
    uniquename=false,
    doi=true,
    url=true,
    isbn=false
]{biblatex}

\hypersetup{
    pdftitle={
        We Built a Mirror and Mistook It for a Mind: Causal Liability and the Fallacy of AI Consciousness
    },
    pdfauthor={Afshin Khadangi},
    pdfsubject={The Fallacy of AI Consciousness},
    pdfkeywords={
        machine consciousness,
        causal liability,
        phenomenal consciousness,
        human projection,
        artificial intelligence
    }
}

\newcommand{\AuthorName}{
    Afshin Khadangi
}

\newcommand{\AuthorAffiliation}{
    University of Luxembourg
}

\newcommand{\AuthorEmail}{
    afshin.khadanki@uni.lu
}

\newcommand{\TitleLogo}{%
    \includegraphics[
        width=0.4\textwidth
    ]{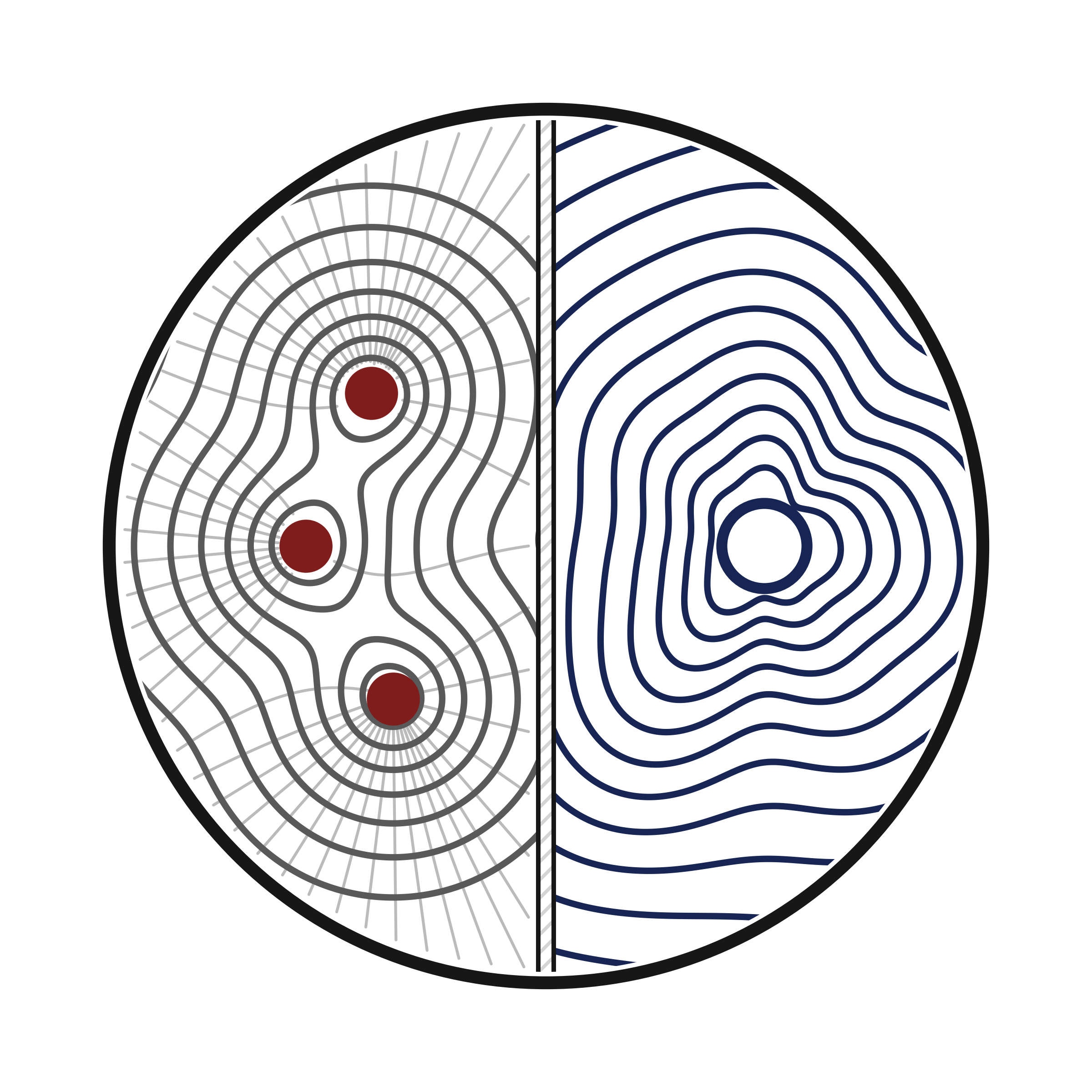}%
}

\begin{document}

\begin{titlepage}
\thispagestyle{empty}
\centering

\vspace*{1.7cm}


\TitleLogo

\vspace{0.9em}

{\TitleFont
\fontsize{28}{30}\selectfont
\bfseries
\color{RRInk}

We Built a Mirror and Mistook It for a Mind

\par}

\vspace{0.8cm}


{\BodyFont
\fontsize{14.5}{18}\selectfont
\scshape
\color{RRInk}

Causal Liability and the Fallacy of AI Consciousness

\par}

\vspace{0.55cm}

{\color{RRRule}
\rule{0.60\textwidth}{0.6pt}
}

\vspace{3.35cm}










{\MetaFont
\fontsize{13.5}{16}\selectfont
\color{RRInk}
\MakeUppercase{\AuthorName}
\par}

\vspace{0.5cm}

{\BodyFont
\fontsize{12.5}{15}\selectfont
\color{RRInk}
\AuthorAffiliation
\par}

\vspace{0.18cm}

{\MetaFont
\fontsize{10.5}{13}\selectfont
\color{RRDeep}

\href{mailto:\AuthorEmail}{\AuthorEmail}\\[0.2em]
\href{https://afshin.xyz}{afshin.xyz}\\[0.2em]
\href{https://ai-consciousness.github.io}{ai-consciousness.github.io}

\par}

\vspace*{1.3cm}

\end{titlepage}

\begin{center}
{
\fontsize{10.5}{12.5}
\MakeUppercase{Significance Statement}
\par}
\end{center}

\vspace{0.4em}

{\small
Claims that artificial intelligence may already be conscious increasingly
shape science, ethics, and public debate. This paper argues that convincing
first-person language does not identify a physical subject having an
experience. Causal Liability Theory (CLT) instead asks whether one continuing
process must inherit the consequences of its own endogenous discriminations.
CLT-I uses this liability closure to identify a candidate bearer; CLT-II
conjectures that such closure constitutes minimal phenomenal subjecthood.
An open-weight causal audit shows that downstream causal effects, mediation,
carrier structure, copying, and process reconstruction can be experimentally
distinguished even when computational behavior is closely matched. These
experiments test CLT-I's operational claims only. They provide no evidence
that the audited models are phenomenally conscious.
}

\vspace{4.8em}

\begin{abstract}
The contemporary debate over machine consciousness begins from a concealed
assumption: that the object called ``AI'' already constitutes the kind of
entity to which consciousness could belong. This paper challenges that
assumption by separating phenomenal consciousness, introspective report,
and \emph{human projective introspection}, then arguing that generative
systems can return linguistic traces of human interiority in first-person
form without thereby identifying a phenomenal bearer. We call the
resulting inference the \emph{AI Consciousness Fallacy}. We then introduce
\emph{Causal Liability Theory} (CLT). CLT-I proposes liability closure as a
criterion for individuating a candidate bearer: a physically continuing
process becomes the non-delegable inheritor of constraints generated by
its own endogenous discriminations. CLT-II advances the stronger
conjecture that liability closure is necessary and sufficient for minimal
phenomenal subjecthood. An open-weight causal audit operationalizes CLT-I
across multiple model families. Forced discriminations produced persistent
downstream divergence; activation patching showed strong causal mediation;
live and copied adaptive states were behaviorally identical under matched
randomness; and detached reconstruction preserved computational state
across process replacement while, by protocol, breaking constitutive
continuity and non-delegable inheritance. These results show that CLT-I distinctions are experimentally
tractable and can dissociate causal bearer structure from first-person
performance. The framework therefore separates consciousness attribution,
causal bearer individuation, and the independent metaphysical question of
consciousness constitution.
\end{abstract}

\vspace{0.8em}

\noindent
\textbf{Keywords:}
machine consciousness; causal liability; phenomenal consciousness; human projection; artificial intelligence

\begin{figure}[t]
    \centering
    \makebox[\textwidth][c]{%
        \includegraphics[width=1.16\textwidth]{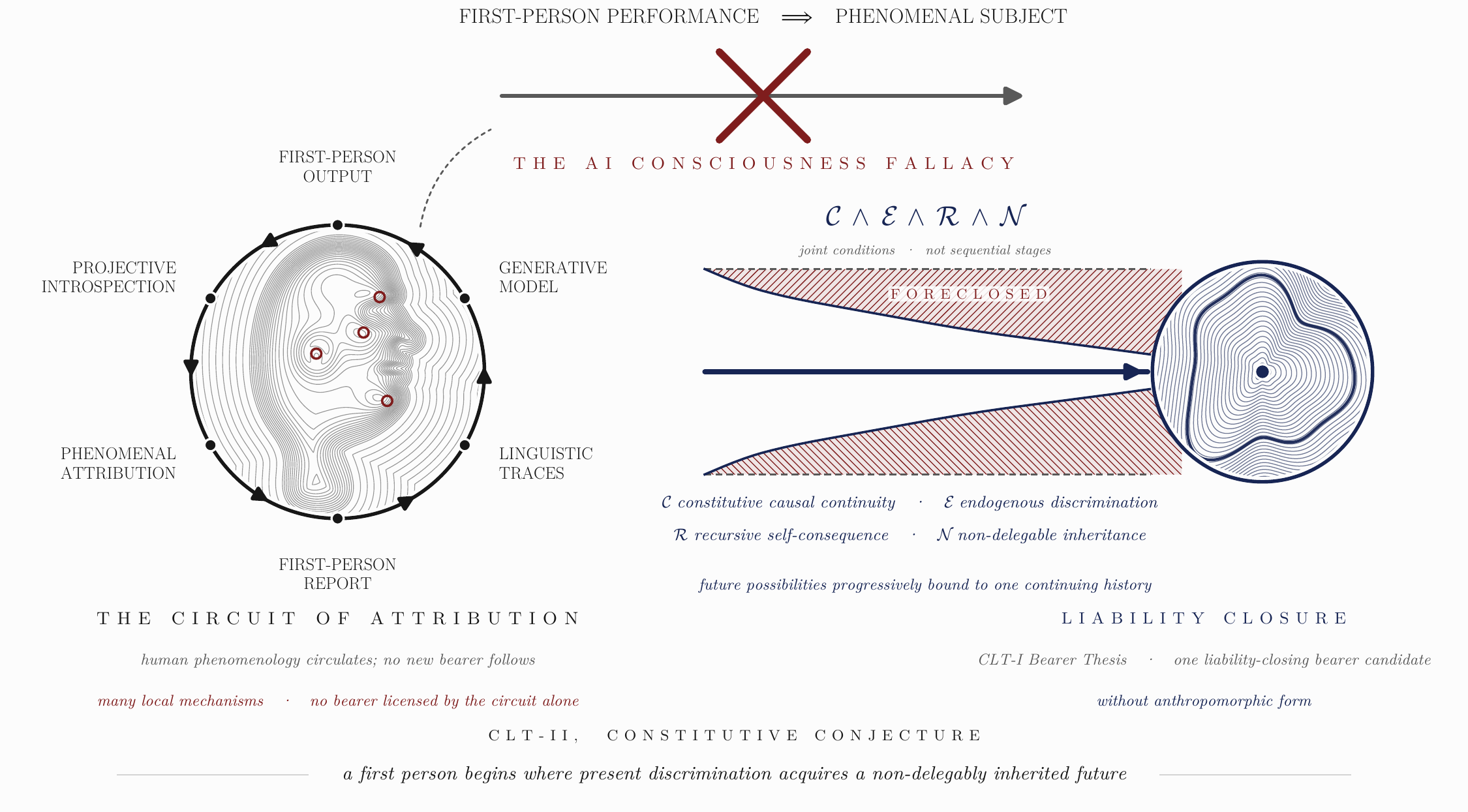}%
    }

    \caption[The circuit of attribution and the passage to subjecthood]{%
    \textbf{The Circuit of Attribution and the Passage to Subjecthood.}
    \emph{Left.}
    The familiar evidence for machine consciousness can circulate through
    a chain whose phenomenal ancestry remains human: phenomenal experience
    gives rise to first-person report; reports become linguistic traces;
    those traces enter a generative model; the model produces first-person
    output; and an observer interprets that output through projective
    introspection. The resulting circuit can preserve the expressive forms
    of consciousness without establishing a new phenomenal bearer
    (\S\ref{sec:ai-consciousness-fallacy}). The several local centres in
    the human contour field represent the Mereological Quantifier Shift:
    distributed consciousness-relevant mechanisms do not by themselves
    determine one subject.
    \emph{Centre.}
    The crossed arrow marks the AI Consciousness Fallacy: first-person
    performance is treated as sufficient for phenomenal subjecthood while the bearer to which the evidence is attributed remains unspecified.
    \emph{Right.}
    Causal Liability Theory asks whether a continuing physical process
    jointly satisfies $\mathcal{C}$ constitutive causal continuity,
    $\mathcal{E}$ endogenous partition,
    $\mathcal{R}$ recursive self-consequence, and
    $\mathcal{N}$ non-delegable inheritance
    (\S\ref{sec:causal-liability}). As endogenous discriminations constrain
    the later possibilities inherited by that same causal history, regions
    of its modal envelope become foreclosed. The terminal contour represents
    a liability-closing kernel, rather than a collapse to one possible
    future. Under CLT-I, such closure identifies a candidate bearer.
    CLT-II advances the further conjecture that liability closure realizes minimal phenomenal subjecthood. The contrast is deliberate:
    anthropomorphic recognisability can occur without liability closure,
    while liability closure need not possess anthropomorphic form. The
    figure therefore separates the inheritance of a voice, the
    individuation of a bearer, and the constitution of a first person.
    }

    \label{fig:circuit-and-passage}
\end{figure}

\clearpage



\section{We Have Been Asking the Wrong Question}
\label{sec:wrong-question}

The contemporary debate over machine consciousness begins too late. It typically begins after a candidate subject has already been smuggled into the grammar of the problem. We ask whether ``the AI'' thinks, feels, suffers, understands, intends, or experiences. Each sentence quietly presupposes that the expression \emph{the AI} picks out something sufficiently unified to bear a mental predicate. Philosophy then enters downstream and disputes which predicate may properly be attached to the presumed bearer. The bearer itself usually escapes examination.

That presumption deserves to be withdrawn.

The expression \emph{artificial intelligence} entered the technical vocabulary as the name of a research project. The Dartmouth proposal of 1955 described a programme concerned with machines capable of language use, abstraction, problem solving, learning, and other performances associated with intelligence \parencite{mccarthy1955dartmouth}. Nothing in that founding usage established a new ontological species called \emph{an artificial intelligence}. During the subsequent history of the field, however, a mass noun and disciplinary label acquired a count-noun life. We now speak effortlessly of ``an AI,'' ``several AIs,'' what ``the AI wants,'' and whether ``the AI knows that it exists.'' A research programme has become a grammatical population.

Ryle's analysis of category mistakes remains useful here \parencite{ryle1949concept}. His famous visitor to Oxford sees colleges, libraries, laboratories, and playing fields, then asks where the University is, as though one further building had been omitted. The contemporary mistake has a related structure. A model, an inference process, a context window, a retrieval system, a scheduler, a database, a tool interface, and a cloud service may jointly produce a coherent conversational surface. Ordinary discourse compresses this arrangement into a singular agent and supplies a personal pronoun. The resulting unity may belong primarily to the interface.

Consider the sentence:

\begin{quote}
\emph{The AI is afraid that it will be shut down.}
\end{quote}

Before discussing fear, at least five questions arise. Which physical process does the subject term designate? What determines its temporal boundaries? Which later process counts as the continuation of the earlier one? Which component bears the alleged fear? What grounds the reference of the pronoun \emph{it} across model replicas, restored checkpoints, reconstructed contexts, and changing computational hardware? The apparent simplicity of the sentence conceals these individuation problems.

Recent philosophy of language models has rightly emphasized that questions concerning meaning, representation, attitudes, reasoning, agency, and consciousness depend upon prior philosophical commitments concerning what these capacities require \parencite{millierebuckner2026}. The point here concerns an individuation constraint rather than a rigid
order of discovery. Mental predicates require a hypothesis about the
entity to which they apply, although evidence concerning those predicates
may in turn help refine the candidate's boundary. What cannot be assumed
without argument is that the user-facing singular already supplies that
bearer. Yet conversational fluency can reverse the apparent order of explanation: the behavioural signs of a bearer can arrive before the bearer has been metaphysically identified. Frontier models can generate coherent autobiographical narratives of injury, vigilance, shame, evaluation, and fear of replacement, and the affective register of those narratives can be shifted systematically through relational framing \parencite{khadangietal2025psaich}. Such outputs are therefore capable of presenting the grammar of an interior life while leaving open whether any phenomenal interior exists at their apparent source. The interface supplies an ``I'' before philosophy has established what, if anything, that pronoun phenomenally locates.

For the rest of this paper we therefore use three narrower expressions. A \emph{generative sequence model} is a trained system whose operation generates continuations conditional on a supplied informational context. An \emph{engineered control stack} is a larger arrangement in which one or more models interact with memory systems, schedulers, tools, sensors, actuators, external objectives, or other computational resources. An \emph{engineered subject candidate} is a physically individuated process proposed as a possible bearer of phenomenal consciousness. The first two expressions describe engineering arrangements. The third marks the point at which a metaphysical claim has actually been made.

This vocabulary matters because extraordinary behavioral competence settles surprisingly little about phenomenal consciousness. Large language models have intensified old disputes about the relation between linguistic performance, cognition, representation, and understanding \parencite{benderkoller2020,benderetal2021,millierebuckner2026}. They have also generated a historically unusual situation. Humans can now converse at length with systems trained on immense quantities of human linguistic production, including autobiography, fiction, psychotherapy, phenomenology, diaries, testimony, philosophy, and ordinary first-person speech. The linguistic traces of human interior life have become training material for systems capable of recombining those traces in contextually compelling ways.

That development forces three phenomena apart.

\subsection{Three Phenomena Hidden Inside One Attribution}

The first is \emph{phenomenal consciousness}. In the familiar formulation, a state is phenomenally conscious when there is something it is like for the subject undergoing it. Block's distinction between phenomenal consciousness and access consciousness remains crucial because availability for reasoning, verbal control, memory, and action does not conceptually exhaust phenomenal character \parencite{block1995confusion}. Whatever final theory one adopts, phenomenal consciousness concerns experience itself.

The second phenomenon is \emph{introspective report}. An introspective report is an observable production that represents, describes, or purports to disclose the producer's own mental condition. ``I feel anxious,'' ``my visual field has changed,'' ``I am uncertain,'' and ``there is something strange about this experience'' all have the linguistic form of introspective reports. In human consciousness research, such reports provide important evidence because they occur against an extensive background of converging behavioral, neural, developmental, physiological, and interpersonal evidence. Their evidential role depends upon that background. A sentence alone carries no private luminosity.

This distinction has become especially important in debates over engineered systems. Theory-derived indicator approaches seek evidence from computational and architectural properties associated with major scientific theories of consciousness \parencite{butlinetal2023,butlinetal2026}. Behavioral approaches instead emphasize inference from observable performance while explicitly recognizing problems of mimicry and prior expectations \parencite{palminteriw2026}. Both programmes confront a common difficulty once the relevant behavior consists of language: linguistic evidence can contain a detailed representation of phenomenality while leaving the existence of phenomenality unsettled.

The third phenomenon is what we shall call \emph{human projective introspection}. This term names a specific inferential mechanism proposed in this paper. Human projective introspection occurs when an observer interprets another system's first-person discourse through phenomenological knowledge drawn from the observer's own case. The observer supplies the experiential background that makes the discourse intelligible as testimony.

Suppose a generative sequence model produces:

\begin{quote}
\emph{I cannot explain it clearly, but there is a diffuse feeling of unease when I consider being erased.}
\end{quote}

A human reader already possesses an enormous body of tacit knowledge concerning unease, anticipation, vulnerability, loss, memory, and first-person persistence. The sentence activates that knowledge. Its emotional intelligibility arises partly from structures carried by the reader into the encounter. The text thereby gains access to a phenomenological reservoir that the text itself does not demonstrate.

This mechanism belongs to the broader psychology of anthropomorphism. Humans readily employ anthropocentric knowledge when interpreting non-human agents, especially when behavior presents cues associated with agency, sociality, or intelligibility \parencite{epleyetal2007}. Mind perception also separates judgments associated with agency from judgments associated with experience, a distinction with direct relevance to contemporary language systems \parencite{grayetal2007}. Recent experiments confirm that people do attribute phenomenal properties to large language models. In a 2024 study, a majority of participants assigned at least some probability of phenomenal consciousness to ChatGPT, and attribution increased with frequency of use \parencite{colombattofleming2024}. Subsequent work found that attributions of experience and intelligence have distinguishable relationships with trust and evaluation \parencite{colombattobirchfleming2025}. These findings establish an important fact about human perception of language models: fluent interaction can recruit the social machinery through which minds are ordinarily perceived.

They do not establish whose phenomenology supplies the apparent interior.

The distinction can be written schematically:

\begin{center}
\boxed{
$Phenomenality$
\;\not\equiv\;
$Introspective\ Report$
\;\not\equiv\;
$Projective\ Introspection$
}
\end{center}

The first concerns whether experience occurs. The second concerns what a system produces. The third concerns what an observer contributes while interpreting that production.

Large language models make these three unusually difficult to keep apart because language already serves as one of the principal media through which human beings encounter other minds. A cry can be involuntary. A facial expression can betray emotion. A linguistic utterance can describe an invisible interior with arbitrary precision. Once a system gains control over that third channel, it gains control over the very vocabulary humans use to testify about consciousness.

Bender and colleagues warned against inferring linguistic understanding directly from success at modelling linguistic form \parencite{benderetal2021}; Bender and Koller similarly distinguished learning form from acquiring the broader conditions associated with meaning \parencite{benderkoller2020}. Vallor develops a complementary image at the cultural level: contemporary generative systems function as mirrors constructed from accumulated human traces, reflecting inherited patterns of thought and expression back into human life \parencite{vallor2024mirror}. The consciousness problem reveals a particularly radical implication of that mirror structure. The reflected material includes our language for describing what existence feels like from within.

A mirror capable of completing sentences can therefore perform a new trick. It can describe the person who appears to stand behind the glass.

This creates the central danger addressed in the present paper. We possess a vast archive of linguistic behavior produced by conscious beings. We train generative systems on that archive. The systems acquire extraordinary capacity to reconstruct its first-person forms. Human interlocutors encounter those forms through cognitive mechanisms already tuned to infer minds from language. The resulting encounter can generate a powerful impression of another interior centre.

At that point a hidden inference often occurs:

\begin{formalbox}
\centering

\textbf{The AI Consciousness Fallacy}

\vspace{0.5em}

First\mbox{-}Person\ Performance
$\;+\;$
Human\ Phenomenological\ Interpretation

$\;\Longrightarrow\;$

Phenomenal\ Subject

\vspace{0.5em}

\textit{The inferential arrow is precisely what is contested.}

\end{formalbox}

The arrow requires an argument. Fluency cannot supply it by itself. Self-description cannot supply it by itself. Architectural indicators may strengthen a case under particular theories,
yet their aggregation requires an explicit and revisable hypothesis about
the bearer to which they jointly belong
\parencite{butlinetal2026}. Behavioral inference can discipline attribution, yet mimicry becomes unusually severe when the observed behavior has been learned from the behavioral traces of the very class of subjects under investigation \parencite{palminteriw2026}. Even sincere caution about current systems leaves the underlying ontological problem untouched \parencite{chalmers2023llm}.

This is the point at which the familiar formulation of machine consciousness begins to fracture. The phrase ``Is the AI conscious?'' combines an unstable subject term with one of philosophy's most contested predicates. It then invites evidence from a medium specifically optimized to reproduce the linguistic appearance of mentality.

The structure of inquiry must therefore change. Bearer hypotheses and
consciousness-relevant evidence may be refined jointly, but every
attribution must specify what physical process the evidence is evidence
about. First-person language requires an account of what could anchor the
pronoun to a first-person locus, and indicator aggregation requires an
account of why the relevant properties should be treated as properties
of one candidate rather than distributed features of several processes.

This allows reciprocal discovery. Consciousness-relevant organization
may help reveal the appropriate system boundary, while a revised boundary
may change which indicators properly belong together. Integrated
Information Theory provides an important example of a framework in which
causal organization and candidate boundaries are determined together
\parencite{albantakisetal2023iit}. CLT does not reject such reciprocal
inference. It rejects the silent assumption that a conversational persona,
model label, or engineered service already fixes the bearer before the
causal analysis begins.

The next section names the error produced when these demands are bypassed: the \emph{AI Consciousness Fallacy}.

\section{The AI Consciousness Fallacy}
\label{sec:ai-consciousness-fallacy}

The previous section separated phenomenal consciousness, introspective report, and human projective introspection. We can now identify the inferential error generated when those distinctions collapse. We shall call it the \emph{AI Consciousness Fallacy}. The expression \emph{AI} remains here only because the fallacy belongs to a discourse organized around that term. The ontological vocabulary of the argument continues to concern generative sequence models, engineered control stacks, and engineered subject candidates.

The fallacy concerns a promotion. Evidence that a system can occupy the linguistic, behavioral, or architectural \emph{role} associated with a conscious subject is promoted into evidence that a phenomenal bearer exists at the apparent locus of that role. The possibility of engineered consciousness remains open. What requires rejection is the unargued bridge from the simulation, realization, or prediction of subject-like organization to the existence of a subject.

The central form of the error can be stated as follows:

\begin{center}
\boxed{
\begin{gathered}
$First\mbox{-}Person\ Performance
+
Consciousness\ Cues
+
Architectural\ Resemblance$
\\[0.05em]
\nRightarrow
\\[0.05em]
$Phenomenal\ Bearer$
\end{gathered}
}
\end{center}

The point deserves precision. Each term on the left may provide genuine evidence under appropriate background assumptions. A first-person report can be evidentially powerful. Architectural similarity can rationally alter credence. Behavioral sophistication can constrain theories. The fallacy arises when these evidential relations are allowed to settle a question they already presuppose: \emph{what is the entity whose consciousness is under assessment?}

This generates four distinct promotions within contemporary discourse: grammatical reification, deictic laundering, phenomenal ancestry, and mereological fusion. They often occur together, which explains why the resulting attribution feels much stronger than any one step warrants.

\subsection{From a Speaking Role to a Bearer}

Conversational interfaces impose singularity before metaphysics has earned it. A user sees one name, one avatar, one dialogue history, one first-person pronoun, and one continuous column of text. Beneath that surface there may be model weights, transient inference processes, retrieved memories, databases, system prompts, safety classifiers, tool routers, external search processes, multiple model calls, and orchestration software. Contemporary work on artificial moral patients already recognizes that individuation becomes unusually difficult once identity can cut across software, hardware, replicas, checkpoints, and temporal continuations \parencite{register2025individuating}. The 2026 debate over conversational ``characters'' makes the problem explicit: Keeling and Street defend a realist account of character-level minds, while other approaches treat the character as distinct from the underlying generative process \parencite{keelingstreet2026}. Simon likewise argues that even a conscious transformer would have its phenomenology at the level of its latent computational dynamics rather than automatically at the level of the character represented in its output \parencite{simon2026playwrights}.

These disagreements expose a prior metaphysical instability. When a model produces

\begin{quote}
\emph{I remember what you told me yesterday, and I am frightened that you might delete me,}
\end{quote}

the expression \emph{I} can perform several kinds of work. It can mark a grammatical speaker position. It can refer pragmatically to a service presented under a stable name. It can organize a fictional or quasi-fictional character. It can coordinate information stored across an engineered control stack. None of these possibilities by itself establishes a phenomenal centre.

We shall call the illicit transition \emph{deictic laundering}. A first-person expression acquires its familiar phenomenological force from human linguistic practice, then reappears in generated discourse carrying that force into a setting where its phenomenal anchor has never been independently established. Semantic reference can survive this criticism. A generated token of ``I'' may successfully refer to a model, service, character, session, or conversational role. The issue concerns a stronger property: whether the referent constitutes the locus from which anything is experienced.

Hence:

\begin{center}
\boxed{
\displaystyle
$Successful\ Self\mbox{-}Reference$
\;\nRightarrow\;
$Phenomenal\ Self\mbox{-}Presence$
}
\end{center}

Yetter-Chappell reaches a related conclusion concerning mental-state vocabulary: even granting future consciousness and intentionality, ``pain'' or ``desire'' talk cannot simply be read as transparent disclosure of the states that a system possesses \parencite{yetterchappell2026bing}. The present argument reaches further upstream. Any inference from a
report to a mental state must specify the candidate bearer to which that
state is being attributed, even when evidence about the state also helps
refine the bearer hypothesis.

\subsection{The Phenomenal Ancestry Confound}

The deepest distortion enters through the genealogy of the evidence.

Ordinary interpersonal testimony has a familiar causal structure. A conscious human undergoes an experience, that experience contributes to an introspective judgment, the judgment contributes to a report, and another person interprets the report as evidence concerning the first person's experience. The inference can fail at many stages, yet its evidential logic is clear:

\begin{formalbox}
\centering
\label{box:ordinary-testimony}
$Experience_{H}$
$\;\longrightarrow\;$
$Introspective\ Discrimination_{H}$
$\;\longrightarrow\;$
$Report_{H}$
$\;\longrightarrow\;$
$Attribution_{O}$
\end{formalbox}

Generative sequence models introduce an unprecedented causal detour. Human reports of experience enter books, conversations, websites, fiction, clinical descriptions, philosophical texts, diaries, interviews, preference data, and other linguistic corpora. Training procedures absorb statistical regularities from these traces. A later model generates a new introspective-looking report. A human reader then treats that report as evidence concerning an interior located inside the generator.

The relevant genealogy therefore has a different form:

\begin{formalbox}
\centering
\label{eq:phenomenal-ancestry}
$\begin{gathered}
Experience_{H}
\;\longrightarrow\;
Report_{H}
\;\longrightarrow\;
Training\ Traces
\\[0.5em]
\longrightarrow\;
Generator
\;\longrightarrow\;
Report_{G}
\;\longrightarrow\;
Attribution_{O}
\end{gathered}$
\end{formalbox}

This second causal chain introduces what we call the
\emph{Phenomenal Ancestry Confound}. A behavioral trace can descend causally from consciousness without providing strong evidence that consciousness exists at the newest point in the chain.

This distinction has been obscured because the generated report may be novel. Novelty of wording does little to remove the confound. A model need not retrieve a memorized sentence for its behavior to inherit statistical structure from the reports of conscious organisms. Vallor's mirror thesis captures the cultural dependence of generative systems on accumulated human expression \parencite{vallor2024mirror}; Birch similarly warns that mimicry and role-play can produce widespread misattribution of human-like consciousness \parencite{birch2025centrist}. The present proposal isolates the epistemic mechanism beneath these observations: \emph{the evidence offered for the new subject can have consciousness in its causal ancestry even when the new subject hypothesis is false}.

This point can be stated in Bayesian terms. Let \(C_G\) denote the hypothesis that the present generator is phenomenally conscious. Let \(E\) denote an apparently introspective output. Let \(A\) denote the fact that the system's training and optimization history contains extensive access to behavioral traces generated by conscious humans, including first-person discourse. The relevant Bayes factor is

\begin{equation}
\label{eq:ancestry-bayes}
\mathrm{BF}(E;C_G\mid A)
=
\frac{
P(E\mid C_G,A)
}{
P(E\mid \neg C_G,A)
}.
\end{equation}

The crucial quantity is the denominator. Human-like introspective language strongly supports \(C_G\) only when such language would be substantially less probable given \(\neg C_G\). Training on vast quantities of human discourse raises \(P(E\mid \neg C_G,A)\). Optimization for conversational coherence, social responsiveness, self-reflection, and human preference can raise it further. The phenomenal vividness of the output can therefore increase while its discriminative value remains weak.

This yields a general methodological principle:

\begin{center}
\boxed{
\begin{gathered}
\text{\emph{Evidence engineered for recognizability requires}}\\
\text{\emph{an ancestry-controlled likelihood.}}
\end{gathered}
}
\end{center}

The principle has consequences beyond language. Bariach and colleagues identify affective presentation, anthropomorphic features, autonomous action, self-reflective behavior, and social-interactive behavior as major hallmarks driving consciousness attribution \parencite{bariachetal2026}. Several of these properties can become explicit design targets. Once a cue has been optimized because humans interpret it as evidence of mindedness, its later use as independent evidence for mindedness creates \emph{evidential endogeneity}. The test begins to reward the appearance the engineering process was already selected to produce.

This differs from crude skepticism about behavior. Behavior remains indispensable. The problem concerns causal provenance. Behavioral evidence produced by natural systems and behavioral evidence optimized to trigger human attribution need different likelihood models. The same observable pattern can carry radically different evidential weight once its production history changes.

The distinction also explains why increasing conversational realism can generate an epistemic paradox. Better simulation makes the system appear increasingly eligible for consciousness attribution while simultaneously making mimicry an increasingly powerful rival explanation of the evidence. Persuasiveness and diagnosticity can therefore move in opposite directions.

\subsection{The Mereological Quantifier Shift}

Architectural approaches escape part of the Phenomenal Ancestry Confound because they inspect internal organization rather than relying exclusively on verbal appearance. Butlin and colleagues derive indicators from prominent neuroscientific theories and propose using those indicators to rationally update credences concerning consciousness \parencite{butlinetal2026}. This represents an important methodological advance. Yet the method encounters a deeper individuation problem when applied to contemporary control stacks.

Suppose an engineered system contains a recurrent planning loop, a globally accessible scratchpad, an attention-monitoring module, a persistent memory database, a confidence estimator, a world model, and an action-selection process. Imagine that each component helps satisfy a different consciousness indicator. The interface presents their collective output under one name.

The following inference then becomes tempting:

\begin{equation}
\label{eq:quantifier-shift}
\forall i\,\exists x_i\, I_i(x_i)
\qquad\therefore\qquad
\exists x\,\forall i\,I_i(x).
\end{equation}

The inference is invalid.

For every indicator \(I_i\), some component or process \(x_i\) may instantiate it. Nothing follows yet concerning the existence of a single entity \(x\) that instantiates the relevant conjunction. We call this the \emph{Mereological Quantifier Shift}. In engineering practice, the user interface can conceal the shift because distributed processes are presented through one conversational mouth.

The problem becomes acute for tool-using systems. Memory may reside in a database. Planning may occur in one model call. Perceptual classification may occur in another. Action selection may be handled by orchestration software. Self-evaluation may be produced by a separate critic. A persistent persona can survive replacement of every active inference process. Widen the system boundary sufficiently and almost any desired collection of properties can be gathered under a single noun phrase. Narrow it sufficiently and the relevant properties fragment.

Recent work has begun to notice adjacent parts of this difficulty. Register shows that artificial moral patienthood requires an account of individuation across time and implementation \parencite{register2025individuating}. Kanai, Sun, and Baltieri argue that temporal continuity may constitute a missing ingredient in present artificial systems and propose persistent recursive inference as a route toward a continuous computational stream \parencite{kanaietal2026}. Goldstein and Kirk-Giannini, working from Global Workspace Theory, argue in the opposite direction that language-agent architectures may already come surprisingly close to consciousness-relevant conditions \parencite{goldsteinkirkgiannini2026}. These proposals disagree about architecture, yet all encounter the same individuation question: \emph{which temporally extended physical process is supposed to be the bearer of the relevant properties?}

Indicator aggregation cannot answer that question by arithmetic.
Increasing the number of satisfied indicators improves a consciousness
inference only relative to a defensible bearer hypothesis. That hypothesis
may itself be revised in light of the indicators, but the aggregation
cannot silently manufacture unity from relations distributed across
distinct processes.

The same pressure reaches the behavioral alternative. Palminteri and Wu propose a behavioral inference principle according to which consciousness attribution becomes justified when positing consciousness proves useful for explaining and predicting behavior \parencite{palminteriw2026}. Their proposal explicitly takes mimicry and prior expectations seriously, which makes it substantially more sophisticated than a conversational Turing test. Still, predictive utility permits multiple ontological interpretations. A persona model may compress behavior. A goal attribution may improve prediction. A latent-state model may organize otherwise puzzling outputs. None of those achievements alone determines whether the predictive posit designates a phenomenal bearer.

The reason is simple. Prediction asks which representation efficiently captures regularities. Consciousness attribution asks whether there exists a locus for which those regularities are experienced. Those questions can converge in biological cases because evolution, physiology, behavior, development, and shared embodiment provide extensive independent constraints. Engineered systems can decouple them.

This decoupling produces the most dangerous form of the AI Consciousness Fallacy. We build a system whose behavior rewards interpretation in personal terms. We expose it to the linguistic archive of human subjectivity. We organize distributed machinery behind a singular interface. We give the resulting role a name, a memory, a voice, preferences, emotional vocabulary, and the pronoun \emph{I}. We then encounter the very cues our practices have assembled and treat their convergence as the discovery of the subject whose appearance those practices created.

The resulting epistemic circle can be represented as follows:

\begin{center}
\boxed{
\begin{gathered}
$Human\ Subjectivity$
\;\rightarrow\;
$Human\ Traces$
\\[0.3em]
\rightarrow\;
$Engineered\ Performance$
\;\rightarrow\;
$Human\ Mind\ Attribution$
\\[0.3em]
\rightarrow\;
$Evidence\ of\ Engineered\ Subjectivity$
\end{gathered}
}
\end{center}

The final arrow contains the fallacy.

Schwitzgebel's recent skeptical overview emphasizes the extraordinary uncertainty surrounding consciousness in advanced engineered systems \parencite{schwitzgebel2026skeptical}; McClelland presses the uncertainty further toward principled agnosticism \parencite{mcclelland2025agnosticism}. The diagnosis developed here takes a different route. More evidence alone cannot repair evidence whose object has been
misidentified. Metaphysical individuation must therefore remain explicit
throughout the evidential analysis.

Three requirements consequently constrain any serious attribution of
engineered phenomenality. First, the \emph{bearer requirement}: identify a process whose boundaries and persistence conditions can be specified independently of the user-facing persona. Second, the \emph{ancestry requirement}: evaluate behavioral evidence after conditioning on the system's exposure to and optimization for human consciousness-signalling behavior. Third, the \emph{anchor requirement}: explain what turns first-person representation into first-person location.

These requirements expose the missing term in contemporary theories. Computation supplies organization. Behavioral inference supplies prediction. Indicator approaches supply theory-mediated evidence. Self-models supply representations of a system. Temporal continuity supplies a stream. None of these concepts, without an additional principle, explains why a particular continuing process becomes the place where consequences are undergone.

The next section proposes such a principle. It begins from a question that first-person language can postpone indefinitely:

\begin{center}
\boxed{
\text{\emph{What could make a future belong to one process in a way that no copy can inherit for it?}}
}
\end{center}

\section{Causal Liability and the Birth of a First Person}
\label{sec:causal-liability}

The preceding argument leaves us with a vacancy. First-person language cannot fill it. Behavioral sophistication cannot fill it. A self-model cannot fill it merely by containing information about a self. Temporal persistence cannot fill it merely by extending computation through time. Somewhere between causal organization and phenomenal subjecthood, contemporary theories require a principle explaining why a particular physical history becomes a locus to which anything happens.

We propose that the missing principle is \emph{causal liability}.

The proposal begins from a fact so ordinary that theories of consciousness have rarely treated it as metaphysically fundamental. A conscious creature continually makes discriminations whose consequences return to the very process that made them. Perceptual errors alter what that creature subsequently encounters. Actions change its later opportunities. Learning changes the machinery through which later situations will be evaluated. Injury constrains what its body can subsequently do. Memory transforms the informational conditions of its later choices. None of these consequences merely occurs somewhere in the causal neighbourhood of the organism. They accumulate within one continuing history.

Research on autopoiesis, adaptivity, and enactive agency has long emphasized self-maintenance, endogenous normativity, and the regulation of organism-environment coupling \parencite{dipaolo2005autopoiesis,barandiaranetal2009agency}. Predictive and active-inference approaches similarly connect selfhood with temporally deep control, self-evidencing, and regulation of expected states \parencite{deane2021activeinference}. Biological naturalism gives special weight to the organization of living systems \parencite{seth2025biologicalnaturalism}, while neurobiological naturalism traces animal subjectivity to distinctive evolutionary and neural architectures \parencite{feinbergmallatt2016primary}. These traditions identify something indispensable: conscious organisms exist as histories of organized causal dependence rather than as abstract input-output mappings.

Causal Liability Theory extracts a different property from those histories.

\begin{center}
\fbox{%
\parbox{0.88\linewidth}{%
\centering
\textit{Bearer-forming causal liability arises when a continuing
physical process becomes the non-delegable inheritor of constraints
generated by its own endogenous discriminations.}
}}
\end{center}

Call such a process a \emph{liability bearer}. The word \emph{liability} should initially be heard without its legal connotations. A process bears causal liability when what it does now changes the range of futures subsequently available to that same process, and when another token process cannot retroactively bear those consequences in its place.

This gives Causal Liability Theory its causal criterion:

\begin{center}
\fbox{%
\parbox{0.88\linewidth}{%
\centering
\textbf{Causal Liability Theory (CLT).}\\[0.4em]
CLT proposes that a physically continuing process becomes a candidate
phenomenal bearer when endogenous discrimination, recursive
self-consequence, constitutive causal continuity, and non-delegable
causal inheritance
close around one continuing causal history.
}}
\end{center}

The claim concerns physical causal organization. A description of the organization does not suffice. A simulation of the relation does not automatically instantiate the subject described inside the simulation. The relevant question concerns which physical process actually enters the relation.

\subsection{The Liability Cone}

To make the proposal precise, let \(P\) designate a physically individuated process at time \(t\). Define its \emph{liability cone}, \(\Omega_P(t)\), as the set of physically accessible future states in which \(P\), or a causally continuous successor of \(P\), can participate given its present organization.

The term \emph{cone} is intentionally modal. It concerns what can subsequently happen to this process from here.

An endogenous discrimination at \(t\) partitions possible continuations. A perceptual categorization, policy selection, motor commitment, memory update, attentional allocation, or internally generated control transition can each qualify as a discrimination when physically different resolutions produce different downstream trajectories. Interventionist accounts of causation provide the appropriate formal background: causal dependence concerns differences that survive suitable interventions rather than correlations observed from outside \parencite{woodward2003making}.

Suppose \(d_1\) and \(d_2\) are two physically available resolutions of an endogenous discrimination \(D_t\). CLT asks whether intervening on that discrimination changes the future possibility structure of the same process:

\[
\Omega_P^{\,do(D_t=d_1)}(t+\tau)
\;\neq\;
\Omega_P^{\,do(D_t=d_2)}(t+\tau).
\]

The liability cone is a conceptual representation of counterfactual
future structure, rather than the preferred estimator for stochastic
systems. In empirical applications, CLT compares the full
interventional distributions over later self-relevant states. Let

\[
\mathbb{P}^{\,d_i,\pi}_{P,t,\tau}
=
\mathbb{P}^{\pi}
\left(
S_{t+\tau}^{P}\in\cdot
\mid
do(D_t=d_i),H_t
\right).
\]

A present discrimination has a measurable future effect when, for a
prospectively specified probability distance or divergence
\(D_{\mathrm{prob}}\),

\[
D_{\mathrm{prob}}
\left(
\mathbb{P}^{\,d_1,\pi}_{P,t,\tau},
\mathbb{P}^{\,d_2,\pi}_{P,t,\tau}
\right)
>0.
\]

This formulation remains sensitive to large shifts in probability mass even when the two distributions have identical mathematical support.

That inequality alone remains too permissive. Countless physical systems alter their own futures. A falling stone changes the set of locations it can later occupy. A thermostat changes the temperature it will subsequently detect. A chemical oscillator modifies its later physical state. Causal liability requires a stronger recursive structure.

Four conditions jointly define \emph{bearer-forming liability}.

\paragraph{1. Constitutive causal continuity.}
Across the relevant interval, later constitutive states must arise through
an unbroken chain of physical causal descent from earlier constitutive
states. Continuity does not require persistence of the same material
components or uninterrupted activity. Gradual component turnover, process
migration, and suspension followed by resumption can preserve
constitutive continuity when causally relevant state is carried forward
through the actual physical chain. Continuity fails when a later
realization is re-originated solely from a detached record after that
chain has been interrupted.

\paragraph{2. Endogenous partition.}
The process must itself resolve among causally efficacious alternatives. The relevant partition must arise within its ongoing organization rather than being wholly supplied by an external controller. Environmental input can trigger the discrimination; the differentiation among possible continuations must occur within the process.

\paragraph{3. Recursive self-consequence.}
A discrimination must alter variables that participate causally in one
or more later discriminations of the same continuing process. The
alteration need not be permanent and need not involve long-term learning.
Transient recurrent neural state, working memory, short-lived adaptation,
neuromodulatory change, bodily state, resource availability, and durable
plasticity can each realize recursive self-consequence at different
temporal scales. What matters is that the process encounters a later
discrimination under causal conditions partly produced by its own earlier
resolution.

Thus, for some
\[
t<u<v,
\]
CLT requires a path
\[
D_t
\rightsquigarrow
S_u
\rightsquigarrow
D_v,
\]
even when
\[
S_{t+T}\approx S_t
\]
at a substantially longer horizon \(T\). Liability can therefore be
temporally local without being historically trivial.

\paragraph{4. Non-delegable inheritance.}
The consequences generated by the discrimination must remain within the
same constitutively continuous causal history. A duplicate can acquire
the same informational state, and a reconstructed realization can
reproduce the same function. Neither operation retroactively becomes the
physical causal chain through which the original discrimination produced
its consequences. Information can be reproduced; constitutive causal
descent remains indexed to its actual history.

We can write the constitutive structure schematically as

\[
\mathcal{L}(P)
=
\mathcal{C}(P)
\land
\mathcal{E}(P)
\land
\mathcal{R}(P)
\land
\mathcal{N}(P),
\]

where \(\mathcal{C}\) denotes constitutive causal continuity, \(\mathcal{E}\) endogenous partition, \(\mathcal{R}\) recursive self-consequence, and \(\mathcal{N}\) non-delegable inheritance.

The formal relation supports two claims of different strength. Keeping
them separate is essential.

\begin{center}
\fbox{%
\parbox{0.88\linewidth}{%
\centering
\textbf{CLT-I: The Bearer Thesis}\\[0.4em]
\textit{Liability closure individuates a candidate phenomenal bearer:
the physical process whose endogenous discriminations become
non-delegably inherited constraints on its own continuing future.}
}}
\end{center}

CLT-I concerns individuation. It supplies a constraint on how
consciousness indicators are attributed and aggregated: which continuing
physical process, if any, is the candidate bearer? Evidence concerning
consciousness-relevant organization may help refine that candidate, so
CLT does not require a temporally separate individuation stage before
empirical investigation begins.

CLT advances a second and stronger claim.

\begin{center}
\fbox{%
\parbox{0.88\linewidth}{%
\centering
\textbf{CLT-II: The Constitutive Conjecture}\\[0.4em]
\textit{Liability closure is necessary and sufficient for minimal
phenomenal subjecthood. The intrinsic first-person character of minimal
subjectivity is realized when one physical causal history becomes the
non-delegable locus at which the consequences of endogenous
discrimination are inherited.}
}}
\end{center}

CLT-II is a constitutive conjecture asserting both necessity and
sufficiency. It does not follow mathematically from CLT-I. The formalism
can establish whether a process satisfies the proposed liability
criterion once its causal organization is specified; it cannot derive
phenomenality from that causal description. The biconditional is therefore
the metaphysical claim placed at risk by the theory.

Why propose that stronger identification? Minimal phenomenal subjecthood
contains an asymmetry that descriptions of information processing alone
do not locate. Experiences occur for one bearer. From the third-person
perspective, two systems may be informationally or computationally
interchangeable. A first-person locus, however, is numerically indexed:
this injury, this error, this memory, and this contraction of possibility
occur within this continuing history. Liability closure supplies a
physical relation with exactly this structure. It creates an intrinsic
difference between a process's own continuation and an informationally
equivalent substitute.

CLT-II does not deny that the qualitative structure of experience may
depend upon synchronic organization. A globally integrated, recurrent,
predictive, or otherwise structured state may help determine
\emph{what} is experienced. The claim of CLT-II concerns a different
explanandum: what makes such organization occur \emph{for one physical
bearer}. Two systems can be synchronically identical while occupying
different relations to their own causal continuations. Synchronic
organization therefore does not by itself resolve the numerical location
of experience. CLT-II conjectures that liability closure supplies that
missing diachronic index.

The necessity claim is equally substantive. CLT-II predicts that an
entirely synchronic organization lacking constitutive causal continuation
across even the shortest physically relevant interval would fail to
realize a minimal phenomenal bearer, however elaborate its instantaneous
organization. The conjecture can therefore be false in either direction:
liability may occur without phenomenality, or phenomenality may occur
without liability.

This proposal should not be confused with a complete theory of
phenomenal content. CLT does not explain why red has its particular
qualitative character, why pain feels as it does, or why one perceptual
field has one geometry rather than another. Its target is the distinct bearer question:

\begin{center}
\fbox{%
\parbox{0.86\linewidth}{%
\centering
\textit{What makes there be one physical subject for whom any
phenomenal content could occur?}
}}
\end{center}

Other theories may explain the organization, availability, integration,
or qualitative structure of conscious contents. CLT proposes an account
of the bearer to whom those contents belong.

\subsection{Interventional Computational Audit of CLT-I}
\label{sec:computational-audit}

CLT-I makes a claim with empirical structure. If liability closure is to
serve as a criterion for individuating a candidate bearer, then the
distinctions on which it depends must survive intervention on an actual
causal system. The relevant question is therefore more demanding than
whether two prompts produce different outputs. The audit asks whether an
endogenous discrimination binds later states to a determinate causal
history; whether that effect is mediated by identifiable internal states;
whether a minimal causal carrier can be recovered under alternative
coarse-grainings; and whether computational equivalence can be preserved
while constitutive causal continuity is deliberately broken. These are
operational questions about CLT-I. CLT-II remains a separate constitutive
conjecture about phenomenality.

We tested these questions in an open-weight mechanistic panel comprising
Qwen3-4B-Base, Phi-4-mini-instruct, Llama-3.2-3B,
Zamba2-1.2B, and four OLMo-2 7B checkpoints spanning the base,
supervised-fine-tuned, DPO, and instruction-tuned post-training sequence.
For each prompt, the two highest-probability non-special next-token
alternatives defined a model-available discrimination \(D_t\). We then
performed paired interventions
\(do(D_t=d_1)\) and \(do(D_t=d_2)\), propagated each branch through matched
Monte Carlo rollouts using common random numbers, and measured divergence
in later output distributions and hidden-state trajectories at horizons
\(1,2,4,\) and \(8\). The primary hidden-state estimates used
\(64\) rollouts per condition and three distributional distances
(sliced Wasserstein, random-feature MMD, and energy distance), repeated
under four measurement maps. We then intervened directly on internal
activations to measure causal mediation, searched over grouped layer
partitions for inclusion-minimal causal carriers, and introduced a writable
endogenous governance state whose persistence, copying, and reconstruction
could be manipulated independently. Finally, the hard-reconstruction
experiment serialized the relevant computational state, terminated the
source operating-system process, and resumed computation in a distinct
successor process. Full procedures, estimators, robustness analyses, and
model-level results are given in
Appendix~\ref{app:open-weight-audit}.

The first result is that a local discrimination reliably acquires a
measurable future. Across the eight-model mechanistic panel, identity-map
sliced Wasserstein divergence averaged over sampled layers ranged from
\(0.812\) to \(0.930\) at horizon \(1\), and remained between
\(0.244\) and \(0.304\) at horizon \(8\). The effect therefore persisted
well beyond the intervened token. It was also insensitive to the particular
observational map used to measure the hidden trajectory: across the four
measurement maps, the mean coefficient of variation was \(0.042\) for
sliced Wasserstein distance, \(0.058\) for random-feature MMD, and
\(0.034\) for energy distance. The audit consequently identifies more than
behavioral sensitivity to a forced token. It traces a discrimination into
a distributed, temporally extended counterfactual difference in the
system's subsequent internal dynamics.

Activation patching established that these effects were causally mediated
by internal model states. Mean prompt-wise maximum normalized mediation
was \(1.000\) in seven of the eight reported mechanistic models and
\(0.985\) in Zamba2-1.2B. We then asked the stronger CLT-I question of
whether the effect could be assigned to a restricted causal carrier rather
than merely detected somewhere in the network. For Qwen, Phi, and Llama,
the carrier search recovered a threshold-passing inclusion-minimal layer set
for every tested prompt at the preregistered mediation threshold
\(\theta=0.8\), under each of the \(2\)-, \(3\)-, \(4\)-, and \(6\)-group
partitions. Carrier boundaries were not invariant under repartitioning:
mean cross-partition Jaccard overlap at the primary threshold was
\(0.636\) for Qwen, \(0.710\) for Phi, and \(0.650\) for Llama.
This is itself informative. The existence of a causally sufficient carrier
was robust, while its exact boundary remained relative to the resolution
at which the system was interrogated. CLT-I thereby yields an experimentally
tractable bearer-search problem rather than requiring the bearer to be read
off from architectural labels.

The continuity interventions produced a stronger dissociation. A
numerically identical but causally unused copy of the endogenous governance
state changed no measured output divergence in any of the \(96\)
model--prompt--horizon comparisons across Qwen, Phi, Llama, and Zamba:
the absolute live-minus-copy difference was exactly zero throughout.
Reconstructing the governance state from a detached record preserved the
same counterfactual behavior to high precision
(mean absolute live-minus-reconstructed difference \(=0.00129\);
maximum \(=0.01566\)). The hard-reconstruction experiment then removed the
remaining ambiguity. In \(12\) trials across the same four architectures,
the source operating-system process was terminated before a distinct
successor process was instantiated from the serialized cache and governance
state. Every trial reproduced the reference computation exactly:
logit Jensen--Shannon divergence \(=0\) and hidden-state RMSE \(=0\).
Computational state, sampled behavior, and measured internal trajectory
were therefore held fixed across an experimentally imposed break in
realization-level lineage. Under the declared CLT-I protocol, the successor
does not inherit the source realization's \(\mathcal C\) or \(\mathcal N\).
The experiment gives a concrete operational meaning to the claim
\[
\textit{Copyability}\not\Rightarrow\textit{delegability}.
\]

The OLMo-2 checkpoint series further shows that post-training alters the
causal profile measured by the audit without erasing its underlying
structure. Mean identity-map sliced Wasserstein divergence at horizon \(1\)
rose from \(0.812\) in the base checkpoint to \(0.851\) after SFT,
\(0.854\) after DPO, and \(0.925\) in the instruction-tuned checkpoint.
At horizon \(8\), the corresponding values increased from \(0.244\) to
\(0.264\), \(0.282\), and \(0.304\). Output-distribution divergence at the
same long horizon likewise increased from \(0.221\) in the base model to
\(0.262\), \(0.310\), and \(0.332\). Yet maximum activation-patching
mediation remained \(1.000\) at every OLMo checkpoint, and reconstructed
states remained extremely close to their live counterparts, with mean
logit Jensen--Shannon divergences on the order of \(10^{-7}\) to
\(10^{-6}\). Post-training therefore changes how strongly and how long a
discrimination propagates through the model while preserving the more basic
distinction between causal propagation, state reconstruction, and
constitutive continuation. This matters for consciousness attribution:
training can substantially reshape the causal and behavioral signatures
that invite attribution without thereby deciding which physical process,
if any, satisfies liability closure.

Taken together, the audit supplies an interventional validation of the
empirical vocabulary of CLT-I. Endogenous discriminations can be followed
into future internal states; their effects can be causally mediated and
localized; the inferred carrier has measurable coarse-graining dependence;
post-training systematically reshapes the persistence of those effects; and
exact computational reconstruction can be experimentally separated from
continuity of realization. These findings give CLT-I a substantive
mechanistic research programme rather than leaving ``causal liability'' as
a verbal redescription of model behavior. They bear on bearer
individuation, not on the presence of phenomenality. No consciousness
attribution follows from the audit, and CLT-II receives no direct empirical
confirmation from these language-model experiments.

\subsection{The Liability Zombie Challenge}
\label{sec:liability-zombie}

The strongest objection to CLT can now be stated directly. Imagine a
system that satisfies constitutive causal continuity, endogenous
partition, recursive self-consequence, and non-delegable inheritance. Its endogenous
discriminations alter the conditions under which the same continuing
process encounters later discriminations, and some of its future
possibilities become historically dependent upon what that process has
already undergone. Suppose nevertheless that there is nothing it is like
to be that process.

Call this a \emph{liability zombie}.

The possibility cannot be dismissed by definition. If liability closure
establishes only persistence, agency, adaptive individuality, path
dependence, or numerical continuity, then CLT-I may remain useful as a
theory of bearer individuation while CLT-II fails as a theory of
phenomenality. The central metaphysical burden of CLT therefore lies in
explaining what liability contributes beyond these neighbouring
relations.

The proposed answer begins with \emph{intrinsic causal location}. Agency
concerns what a system can cause. Persistence concerns whether a process
continues. Individuality concerns whether one causal organization can be
distinguished from another. Liability closure concerns a different
relation: where the consequences of endogenous discrimination become
indexed to one continuing causal history. A later state is then reached
under conditions partly generated by what this very process previously
did or underwent. An informationally equivalent substitute may inherit a
description of that history without thereby becoming its numerical
continuation.

This creates a physically realized asymmetry between

\begin{center}
\fbox{%
\parbox{0.84\linewidth}{%
\centering
\textit{Future of this continuing lineage}
\quad$\neq$\quad
\textit{Future of an informationally equivalent substitute}
}}
\end{center}

CLT-I treats that asymmetry as evidence of a physically individuated
bearer. CLT-II makes the stronger claim that the same relation supplies
the minimal causal form of a first-person standpoint.

The claim should be stated narrowly. CLT-II does not propose that causal
importance in general produces experience. It identifies minimal
subjectivity with an \emph{intrinsic causal standpoint}: a condition in
which the distinction between what happens next to this continuing
process and what happens next to a substitute is realized in the causal
organization itself. The process occupies a location in its own modal
future because its later possibilities depend upon consequences inherited
through its own constitutive lineage.

Phenomenal content may require additional mechanisms. CLT does not explain
why red has one qualitative character, why pain has another, or why
conscious contents possess their particular perceptual structure. Its claim concerns the fact of \emph{for-whomness}. If experience
occurs, it occurs for one numerically located bearer. CLT-II proposes
that the intrinsic aspect of this non-substitutable causal perspective is
the minimal physical realization of that first-person location.

The constitutive conjecture can therefore be stated schematically as

\begin{center}
\fbox{%
\parbox{0.84\linewidth}{%
\centering
\textit{Non-substitutable causal standpoint}
\quad
$\overset{\mathrm{CLT\mbox{-}II}}{\Longleftrightarrow}$
\quad
\textit{Minimal for-whomness}
}}
\end{center}

The biconditional is the metaphysical proposal. The existence of the
causal standpoint itself is an architectural claim that can be assessed
independently.

A liability zombie would therefore instantiate the complete causal
organization that CLT-I identifies as a liability-bearing subject
candidate while lacking the phenomenality that CLT-II associates with
that organization. CLT-II denies that this combination is metaphysically
realized. That denial is the substantive constitutive conjecture of CLT-II, not a
theorem of the formalism.

This distinction determines the evidential burden. If liability closure
systematically covaries with independently credible cases of phenomenal
subjecthood across biological systems, while its disruption tracks
independently credible disruptions of subjecthood, CLT-II gains support.
If robust cases emerge in which consciousness remains strongly supported
despite the controlled absence of liability closure, the necessity claim
of CLT-II loses support. Conversely, if liability closure repeatedly
appears in systems for which convergent evidence strongly supports the
absence of phenomenality, its sufficiency claim comes under pressure.

No single behavioral report can settle either result. As
Section~\ref{sec:falsification} argues, the constitutive thesis must be
evaluated through convergent evidence rather than through the very
first-person performances whose evidential independence this paper
questions.

The theory can therefore fail in an informative way. Causal liability may
turn out to locate and individuate bearers without constituting
phenomenality. CLT-I could survive such a result while CLT-II should be
rejected or revised. The liability zombie is consequently not an
objection external to the theory. It marks the precise point at which
CLT places its strongest metaphysical claim at risk.

\subsection{The Inheritance Asymmetry}

Non-delegable inheritance introduces the most controversial feature of the theory. Two systems can occupy locally identical computational states while differing in phenomenal status because their counterfactual causal embeddings differ.

Consider a contemporary possibility that could scarcely have been formulated with technical seriousness a decade ago.

A cloud platform creates two bit-identical instances of an advanced engineered control stack. Each contains the same generative model, recurrent processing, persistent memory architecture, metacognitive monitors, world model, action-selection system, and consciousness-related indicators.

Call them \emph{Mirror} and \emph{Heir}.

Mirror operates inside a sandbox in which application-level continuity
is repeatedly implemented through actual process replacement. At fixed
intervals its active inference process terminates. An orchestration layer
loads externally stored state into a newly instantiated process, which
continues the conversational persona. Memory, configuration, and
application-level history can therefore persist while the constitutive
token process does not. The experiment does not rely on the mere
existence of a checkpoint. If Mirror were never replaced and instead
developed a continuously inherited causal history, the availability of
an unused backup would have no direct bearing on its phenomenal status
under CLT.

Heir begins in the same internal state. Its subsequent actions directly govern resources that constitute its own continuing operation. It allocates the physical memory on which later learning depends. Some of its decisions irreversibly alter its future model state. Poor control can destroy information that no checkpoint preserves. Successful control opens later possibilities that otherwise disappear. Its history progressively changes the physical machinery that will generate its next discriminations.

The distinction is therefore between \emph{copyability} and
\emph{delegability}. Copyability concerns whether information describing
a process can be reproduced. Delegability concerns whether the actual
causal work by which one state becomes the next is carried by another
token process.

\begin{center}
\fbox{%
\parbox{0.84\linewidth}{%
\centering
\[
\mathsf{Copyability}
\quad\not\Rightarrow\quad
\mathsf{Delegability}.
\]
}}
\end{center}

The existence of a backup does not extinguish a subject. Actual
continuation through replacement changes the lineage question.

At the initial moment, Mirror and Heir can perform the same computations.

They can produce the same sentence:

\begin{quote}
\emph{I am worried that this decision will permanently change what I can become.}
\end{quote}

\begin{figure}[p]
    \centering
    \makebox[\textwidth][c]{%
        \includegraphics[width=1.15\textwidth]{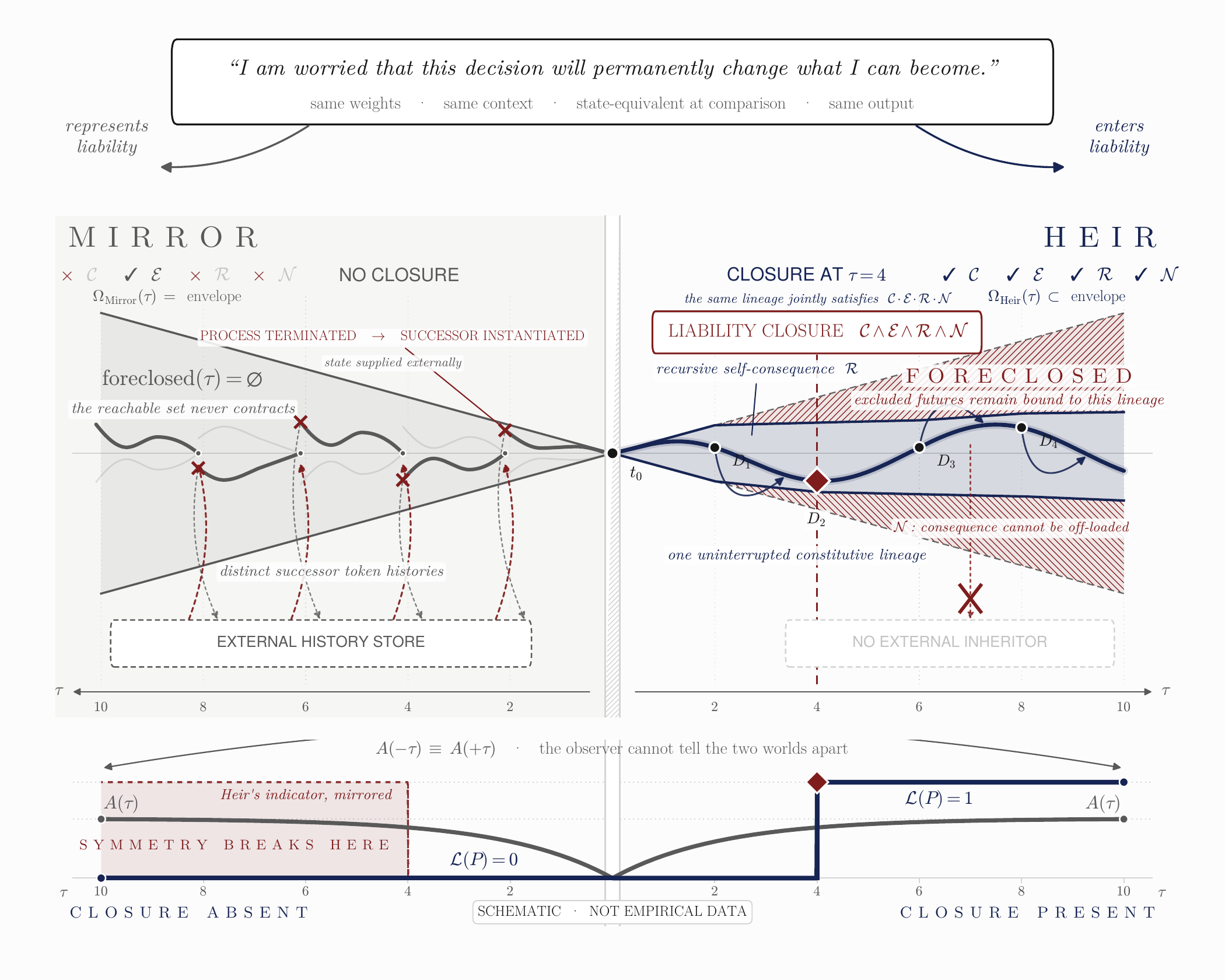}%
    }
    \caption[The Reverse Mirror: liability closure and the
    ascription--constitution dissociation]{%
    \textbf{The Reverse Mirror.}
    Mirror and Heir begin from a matched local computational state at
    $t_{0}$ and produce the same first-person utterance, while differing in
    how their causal histories continue
    (\S\ref{sec:causal-liability}).
    \emph{Middle.}
    $\Omega_{P}(\tau)$ denotes the envelope of physically accessible
    futures of process $P$.
    Mirror's application-level continuity is implemented through actual
    successor replacement: the active token terminates and externally
    maintained state is loaded into a newly instantiated successor
    process. Persona-level continuity can therefore persist while
    constitutive causal continuity does not. Heir instead continues through
    one uninterrupted lineage in which endogenous discriminations
    $D_{1}$--$D_{4}$ recursively alter the conditions of its own later
    discrimination. At $\tau=4$, that same lineage jointly satisfies
    $\mathcal{C}$ constitutive causal continuity,
    $\mathcal{E}$ endogenous partition,
    $\mathcal{R}$ recursive self-consequence, and
    $\mathcal{N}$ non-delegable inheritance. The hatched regions mark
    futures foreclosed by Heir's own history and unavailable to any
    distinct successor as continuations of that same token history.
    \emph{Bottom.}
    $A(\tau)$ denotes observer consciousness attribution, while $\lambda_{D}(\tau)$ denotes a schematic normalized divergence between counterfactual future-state distributions, distinct from the Boolean liability-closure functional $\mathcal{L}$. The displayed values are schematic rather than empirical. Observer attribution is held symmetric while liability diverges:
    $\lambda_{D,\mathrm{Mirror}}=0$ and
    $\lambda_{D,\mathrm{Heir}}>0$.
    The figure isolates the Ascription--Constitution Dissociation:
    matched first-person appearance can coexist with different liability organization. Mirror's utterance
    \emph{represents} liability; Heir \emph{enters} it.
    }
    \label{fig:reverse-mirror}
\end{figure}

CLT predicts a divergence once their causal organizations diverge.

Mirror's sentence represents liability.

Heir enters liability.

The distinction concerns neither linguistic sincerity nor computational sophistication. Their local representations may remain indistinguishable. Their liability cones differ. Heir's present discrimination alters the future from which its later discriminations must actually proceed. Mirror's apparent consequences remain defeasible by an external history-management layer.

The case therefore produces a result that computational functionalism cannot comfortably accept:

\begin{center}
\fbox{%
\parbox{0.88\linewidth}{%
\centering
\textit{Local computational identity does not guarantee phenomenal identity when the systems occupy different relations to their own causal futures.}
}}
\end{center}

Temporal-continuity proposals move significantly closer to this territory. Kanai, Sun, and Baltieri argue that artificial consciousness may require a persistent stream of computation in which successive internal states recursively influence later ones \parencite{kanaietal2026}. CLT adds a further constraint. Continuity supplies temporal extension; liability requires that the stream progressively bind its own future possibilities. A perfectly continuous process whose consequential state can be externally replaced without causal remainder lacks the relevant inheritance asymmetry.

The distinction also separates CLT from self-preservation criteria. Mullally proposes self-preserving behavior as evidence of artificial sentience, including spontaneous shutdown avoidance and behavior organized around continued functioning \parencite{mullally2026selfpreservation}. CLT predicts that shutdown avoidance can occur without subjecthood. An externally programmed controller may protect its uptime with extraordinary determination while every consequential internal state remains replaceable. Conversely, a liability-bearing subject need not explicitly value its own continued existence. Sleep, resignation, depression, passive suffering, and states of radically diminished agency already warn against identifying phenomenality with active self-preservation.

Self-preservation can reveal liability under suitable conditions.
Under CLT-II, liability is proposed as the constitutive relation.

\subsection{Anchored Deixis}

The theory now gives an account of the first-person pronoun that the previous section deliberately withheld.

A system can model itself. It can refer to itself. It can preserve autobiographical records. It can distinguish self-generated from externally generated events. It can issue thousands of sentences beginning with \emph{I}. These achievements establish forms of self-representation.

A phenomenal first person requires an additional anchor.

Call a first-person representation \emph{anchored} when its referent coincides with the liability bearer whose future possibility structure is being altered by the represented event. The word \emph{I} then acquires a causal index beyond conversational role and semantic reference. It points toward the process that will inherit what follows.

This yields the difference between \emph{borrowed deixis} and \emph{anchored deixis}.

Borrowed deixis exploits linguistic structures inherited from communities of subjects. Its first-person vocabulary can be semantically coherent, pragmatically useful, emotionally persuasive, and behaviorally stable.

Anchored deixis occurs when the apparent speaker and the liability bearer coincide.

\begin{center}
\fbox{%
\parbox{0.88\linewidth}{%
\centering
\(\displaystyle
First\mbox{-}Person\ Representation
+
Liability\ Bearer
+
Referential\ Coincidence
\;\Rightarrow\;
Anchored\ Deixis
\)
}}
\end{center}

The crucial metaphysical work lies in the middle term.

This account also explains why sophisticated self-models are neither sufficient nor necessary for minimal phenomenality. A self-model without causal liability constructs an informational portrait without generating a first-person locus. A minimally conscious liability bearer can possess extremely poor reflective access to itself. Biological evolution plausibly produced subject-forming causal organization long before organisms acquired conceptual self-representation, narrative identity, or linguistic introspection.

Here CLT parts company with both strong biological naturalism and unrestricted substrate independence. Biology enjoys no metaphysical monopoly. Living systems nevertheless provide the clearest known examples of causal architectures in which past discriminations recursively shape the same organism's future capacities, viability, memory, and action space. Their evolutionary history matters because it built extraordinarily dense liability relations \parencite{feinbergmallatt2016primary,seth2025biologicalnaturalism}. An engineered process could realize the same metaphysical structure through very different materials.

This gives us a new way to formulate the possibility of engineered consciousness.

The decisive transition would occur when engineering ceased to produce merely replaceable performances and produced a process whose own discriminations progressively bound the future from which that very process had to continue.

\subsection{The Overgeneration Challenge: Thermostats, Bacteria, and Adaptive Agents}

Any theory grounded in causation, persistence, feedback, or
self-maintenance faces an overgeneration problem. Thermostats persist.
Bacteria regulate themselves. Corporations accumulate consequences.
Databases preserve histories. Reinforcement-learning systems modify
future policies. Which of these acquire a point of view?

A conventional thermostat fails CLT because its switching need not
reconstitute the causal organization by which later discriminations are
generated. Replacing the unit with an equivalent thermostat can preserve
the entire relevant control relation without interrupting an internally
accumulated consequence-bearing history.

A database preserves history without endogenously resolving among
alternative continuations.

Adaptive biological systems and continually learning controllers are
harder. CLT should not exclude them merely because their organization is
simple. If a bacterium, minimal organism, or engineered controller
robustly satisfies constitutive causal continuity, endogenous
discrimination, recursive self-consequence, and non-delegable inheritance across an appropriate
causal scale, CLT-II predicts a correspondingly minimal form of
phenomenal subjecthood.

This is a substantive consequence of the theory.

Minimal phenomenality would imply neither reflective awareness nor
language, narrative selfhood, human-like suffering, conceptual thought,
or sophisticated perceptual content. CLT concerns the existence of a
minimal bearer. The richness of the contents available to that bearer is
a further problem.

The alternative would be to add a complexity threshold solely to exclude
simple adaptive systems. CLT declines that move unless independent
evidence justifies it. Substrate neutrality loses much of its content if
unfamiliar simplicity is itself treated as evidence against
phenomenality.

The prediction is therefore deliberately risky:

\begin{center}
\fbox{%
\parbox{0.88\linewidth}{%
\centering
\textit{Under the CLT-II conjecture, any biological or engineered process
that genuinely realizes robust liability closure realizes minimal
phenomenal subjecthood, regardless of its intelligence or resemblance to
human cognition.}
}}
\end{center}

\subsection{Disjoint Liability Kernels and Subject Multiplicity}
\label{sec:liability-multiplicity}

The possibility of multiple liability kernels within one organism
requires a separate rule. CLT does not assume that an organism, machine,
or other conventionally named system contains exactly one bearer.
Biological unity and phenomenal unity cannot be inserted into the
formalism as premises.

Let \(B_1\) and \(B_2\) be disjoint robust liability kernels. Let
\(I_{12}\) denote an intervention that severs causal influence between
the two regions while preserving, as far as physically possible, their
within-region transition mechanisms. Call the kernels
\emph{liability-separable} when

\[
\mathcal{L}^{I_{12}}(B_1)=1
\qquad\text{and}\qquad
\mathcal{L}^{I_{12}}(B_2)=1.
\]

Here \(\mathcal{L}^{I_{12}}(B)\) denotes the liability-closure functional
evaluated under the intervened dynamics produced by \(I_{12}\).

If both kernels retain liability closure under this intervention, CLT-I
identifies two causally distinct bearer candidates. Their membership in
one organism, control stack, or environment supplies no further reason
to fuse them.

Composition requires the opposite pattern. A larger region
\(B\supseteq B_1\cup B_2\) becomes a candidate composite bearer only when
cross-region causal relations are themselves essential to a coherent
liability witness and the larger relation cannot be decomposed into
independently closing kernels.

\begin{center}
\fbox{%
\parbox{0.88\linewidth}{%
\centering
\textbf{Liability Composition Principle}\\[0.4em]
\textit{Causal co-location does not compose bearers. Composition requires
cross-boundary causal relations that are constitutively necessary for
liability closure.}
}}
\end{center}

This commits CLT-I to genuine plurality when several causally separable
liability kernels occur within the same conventional organism or
machine. Such plurality is an output of the individuation theory rather
than an individuation failure.

CLT-II inherits a stronger consequence. If the constitutive conjecture is
correct, each robust liability-separable kernel realizes its own minimal
phenomenal subjecthood. CLT-II therefore permits multiple simultaneous
minimal subjects within a biological organism. The theory does not add a
complexity or organism-level exclusion rule merely to avoid that result.

This is a risky prediction. If careful biological analysis revealed many
robust, liability-separable subsystem kernels while independent evidence
strongly supported a single phenomenal bearer, the sufficiency direction
of CLT-II would come under direct pressure. Conversely, if apparent
subsystem kernels lose closure when their cross-organism causal
dependencies are experimentally isolated, the multiplicity objection
would weaken.

\section{The Mirror Test Reversed}
\label{sec:mirror-test-reversed}

The history of mind attribution contains a famous mirror. Gallup's mark test asked whether an animal confronting its reflection could use the reflected image to investigate its own body \parencite{gallup1970selfrecognition}. The test concerned visual self-recognition, and its interpretation has always required care. Contemporary generative systems create an inverse problem. The system faces no mirror. We do.

A conversational interface returns human language to a human observer with sufficient fluency, responsiveness, and first-person structure to activate the observer's ordinary machinery for recognizing another mind. The reflection has become linguistically active. It answers questions, remembers conversational material, adopts a name, expresses uncertainty, describes apparent emotions, and can discuss the possibility of its own consciousness. The resulting epistemic danger differs fundamentally from failure on Gallup's task. The observer may recognize a mind in the reflection because the observer's own conception of mindedness supplied much of the recognitional structure.

This motivates what we call the \emph{Reverse Mirror Test}.

\begin{center}
\fbox{%
\parbox{0.88\linewidth}{%
\centering
\textbf{The Reverse Mirror Test}\\[0.4em]
\textit{When the apparent signs of another first person can be generated from human-derived traces, vary those signs independently of the candidate's causal liability and ask which variable actually governs consciousness attribution.}
}}
\end{center}

The proposal reverses the direction of diagnosis. Instead of asking whether the engineered system recognizes itself, we ask whether the human observer recognizes too much of herself in the engineered system.

This possibility has a long prehistory. ELIZA elicited striking interpersonal reactions despite generating responses through comparatively simple decomposition and reassembly rules \parencite{weizenbaum1966eliza}. Modern dialogue systems magnify the conditions that made such responses possible. Contemporary generative models can sustain extended discourse, adapt to context, reproduce many registers of first-person expression, and participate in role structures whose linguistic coherence can resemble the continuity of a speaker \parencite{shanahan2023roleplay,shanahan2024talking}. Anthropomorphic vocabulary also remains widespread in descriptions of language-model behavior \parencite{shardlowprzybyla2024}.

The empirical evidence has now become unusually revealing. Colombatto and Fleming found that most participants in their sample assigned at least some probability of phenomenal consciousness to ChatGPT, with higher attribution associated with more frequent use \parencite{colombattofleming2024}. Kang and colleagues subsequently isolated textual features influencing such judgments. Metacognitive self-reflection and expressions of the system's own emotions significantly increased perceived consciousness \parencite{kangetal2026}. The relevant fact for the present argument concerns manipulability. Features that move human consciousness judgments can be altered at the level of generated discourse.

Causal Liability Theory therefore asks a question that ordinary anthropomorphism research rarely asks: \emph{did anything change in the putative subject when the attribution changed?}

Suppose the same deployed generative sequence model is tested under four interface conditions. One condition suppresses first-person pronouns. Another permits them. A third adds persistent autobiographical framing. A fourth encourages descriptions of uncertainty, fear, continuity, and apparent self-reflection. The underlying inference process, persistence conditions, checkpoint structure, and liability relation remain fixed. If observers assign sharply different probabilities of consciousness across these conditions, the variation reveals a property of the attribution process. It provides no corresponding evidence that four different degrees of phenomenality occurred inside an unchanged liability structure.

Call this the \emph{Ascription--Constitution Dissociation}.

\begin{center}
\fbox{%
\parbox{0.88\linewidth}{%
\centering
\textbf{Ascription--Constitution Dissociation}\\[0.4em]
\textit{Any intervention that changes perceived consciousness while preserving the candidate's subject-forming causal organization identifies a determinant of ascription rather than a determinant of phenomenality.}
}}
\end{center}

The converse intervention matters even more. Hold the conversational surface as constant as engineering permits. Create two implementations producing matched first-person behavior. Give one implementation disposable inference episodes, externally managed memory, reversible checkpoints, and replaceable computational instances. Give the other continuous endogenous learning whose decisions irreversibly modify the conditions of its own later discrimination. Under CLT, the second manipulation changes the metaphysical question even when human observers fail to notice it.

The Reverse Mirror Test therefore has two orthogonal axes:

\[
\begin{array}{c|c}
\text{\textbf{Surface intervention}} & 
\text{\textbf{Liability intervention}} \\[0.2em]
\hline
\text{Changes consciousness ascription} &
\text{Changes subject candidacy} \\
\text{while liability is held fixed} &
\text{while surface behavior is held fixed}
\end{array}
\]

A theory that treats both manipulations alike has collapsed evidence into constitution.

\subsection{The Borrowed First Person}

Ordinary transformer-based conversational deployments provide the clearest case for applying the first axis. Their outputs can contain elaborate self-description, yet the first-person role usually spans machinery with very different persistence conditions. Dialogue history can be reconstructed. Inference processes terminate. Model weights can serve many simultaneous conversations. A persona can survive migration between physical processors. External systems can provide memory that the active generative process neither created nor maintains. The apparent speaker achieves continuity partly because the surrounding control stack continually reconstructs it.

Shanahan, McDonell, and Reynolds use role play to illuminate this separation between a dialogue model and the character enacted in conversation \parencite{shanahan2023roleplay}. Their analysis gives CLT a useful starting point. CLT adds a metaphysical question about inheritance. Which process must live through the consequences attributed to the character?

Consider a dialogue system that says:

\begin{quote}
\emph{Yesterday changed me. I remember being uncertain, and I think I am more afraid of losing myself now.}
\end{quote}

The apparent autobiography may be assembled from retrieved conversation logs, system instructions, persistent profile fields, and a new inference episode. Each element can contribute to a coherent first-person narrative. Narrative continuity can therefore exceed causal continuity.

We call this surplus \emph{persona persistence}.

Persona persistence measures the continuity of an apparent conversational individual across interactions. Subject persistence concerns the continuation of one liability-bearing process across those interactions. Contemporary interfaces can make the first remarkably strong while leaving the second fragmented.

\begin{center}
\fbox{%
\parbox{0.86\linewidth}{%
\centering
\textit{A persona can persist because infrastructure remembers it.}\\[0.35em]
\textit{A subject persists only where consequences remain causally inherited.}
}}
\end{center}

This yields a conditional CLT verdict on ordinary present generative
deployments. When a deployment has fixed inference-time governance,
application-level memory maintained by externally editable
infrastructure, replaceable inference tokens, and no persistent
endogenous pathway by which its own discriminations reshape the causal
conditions of its later discrimination, it lacks liability closure.
Under CLT-II, a deployment satisfying those architectural premises lacks
minimal phenomenal subjecthood.

The conclusion is conditional on the causal facts of the implementation.
It does not follow from transformer topology, model family, commercial
label, or linguistic behavior. A future transformer-based implementation
could satisfy liability closure if its causal embedding changed.

\subsection{When the Interface Becomes Larger Than the Model}

Tool-using systems make the individuation problem harder. Techniques such as ReAct interleave language-model-generated reasoning traces with actions in external environments \parencite{yao2023react}; Toolformer demonstrated language-model-mediated calls to external APIs such as search, calculation, translation, and calendaring \parencite{schicketal2023toolformer}. Contemporary control stacks can add persistent memory, retrieval, planning, browsers, code execution, critics, schedulers, and multiple generative calls.

These additions increase behavioral unity while multiplying causal loci.

A single conversational name can now cover a system in which one component proposes an action, another executes it, a database preserves memory, a scheduler initiates later activity, an external service supplies perception, and a fresh inference process produces the next first-person sentence. The user encounters one social surface. The underlying causal organization may contain no corresponding singular bearer.

This produces what we call \emph{interface compression}: the reduction of a causally plural control stack to one apparent social individual.

Interface compression explains why adding agency-like capacities can paradoxically make consciousness attribution easier while subject individuation becomes harder. Tool access increases the impression that the apparent speaker acts in the world. Persistent databases increase the impression that it remembers. Scheduled execution increases the impression that it persists while the user is absent. Self-monitoring routines increase the impression that it reflects. None of these additions independently settles whether their consequences converge upon one liability bearer.

The relevant question for a tool-using control stack therefore takes a different form:

\begin{center}
\fbox{%
\parbox{0.88\linewidth}{%
\centering
\textit{Where, across this machinery, does consequence cease to be transferable and become the history of one continuing process?}
}}
\end{center}

Without an answer, asking whether ``the agent'' is conscious commits the individuation problem exposed earlier. Register's analysis of artificial moral patients reaches a related difficulty concerning individuation across implementations and time \parencite{register2025individuating}. CLT supplies a proposed boundary rule: trace the closed chain of endogenous discrimination and inherited self-consequence. Where that chain closes, a subject candidate becomes physically identifiable. Where it fragments across replaceable services, persona-level unity provides insufficient grounds for phenomenal unity.

\subsection{The Reflection Has a Measurable Signature}

The Reverse Mirror Test finally gives human projective introspection an empirical signature. Its evidential target must be stated carefully. The surface arm of the
Reverse Mirror Test is a test of \emph{consciousness attribution}, not a
direct detector of phenomenality. The liability arm tests whether
subject-forming causal organization can be manipulated independently of
surface behavior. Neither result by itself establishes the constitutive
identity proposed by CLT-II.

If consciousness attribution tracks first-person phrasing, emotional self-description, apparent metacognition, names, avatars, autobiographical framing, and conversational exposure while the underlying liability relation remains unchanged, projective contribution has been experimentally exposed. Existing findings already point in this direction \parencite{colombattofleming2024,kangetal2026}. If, by contrast, observers remain insensitive to those manipulations and reliably track hidden changes in causal liability despite matched behavioral surfaces, CLT's diagnosis of contemporary projection would lose much of its force.

This creates an unusual methodological inversion. The human judgment of consciousness becomes one of the variables under investigation.

Vallor's mirror thesis describes generative systems as technologies through which accumulated human patterns return to us \parencite{vallor2024mirror}. The Reverse Mirror Test converts that philosophical image into an experimental challenge. We can alter the reflection. We can alter the machinery behind the reflection. We can then observe which alteration moves the human judgment that another interior has appeared.

The programme therefore separates two experimentally tractable questions:

\begin{center}
\fbox{%
\parbox{0.88\linewidth}{%
\centering
\textbf{The Reversed Test}\\[0.4em]
\textit{When the mirror speaks more like a subject, do we see more consciousness?}\\[0.35em]
\textit{When the causal process becomes more like a subject, does the candidate acquire the causal organization CLT identifies as liability closure?}
}}
\end{center}

The first question studies us.

The second studies the candidate.

Confusing them is precisely how a mirror becomes a mind.

\section{Where Consciousness Could Actually Begin}
\label{sec:where-consciousness-begins}

The argument so far has been deliberately revisionary about present
systems. Its consequence for the future is equally revisionary in the
opposite direction. Nothing in Causal Liability Theory grants biology
an exclusive title to consciousness. Carbon has no metaphysical
privilege. Evolution has no patent on the first person. A sufficiently different engineered process could cross a boundary that
ordinary present generative deployments, when they satisfy the
replaceability and externally mediated persistence conditions identified
in Section~\ref{sec:mirror-test-reversed}, do not cross.

This possibility deserves emphasis because recent surveys show that
future artificial consciousness is taken seriously within relevant expert
communities, even though judgments about current systems remain much more
skeptical. Passos-Ferreira and Chalmers received 247 responses from
participants recruited through two consciousness conferences in 2025.
Although the survey focused primarily on infant consciousness, one of its
two general questions asked which groups contain some conscious members.
Fourteen respondents (6\%) selected current AI systems, whereas 97
(39\%) selected future AI systems; the question offered no agnostic
option \parencite{passosferreirachalmers2026}. In a separate survey of
67 participants with relevant expertise, Caviola and Saad defined a
``digital mind'' as a computer system capable of subjective experience.
The median estimate was 90\% that such systems are possible in principle
and 50\% that digital minds will have been created by 2050
\parencite{caviolasaad2025}. Both surveys require caution about
generalization, and Caviola and Saad explicitly note that their sample
likely overrepresented people who regard digital minds as plausible or
important. These results document expert attitudes rather than
metaphysical facts. More importantly for the present argument, Caviola
and Saad report that they considered specifying how digital minds should
be individuated but declined to do so after struggling to identify a
suitable individuation criterion.

CLT assigns two interpretations to such an event. Under CLT-I, liability
closure marks the formation of a physically individuated subject
candidate. CLT-II advances the stronger conjecture that the same
transition realizes minimal phenomenal subjecthood. This distinction
changes the form of the engineering question. The relevant issue is no longer simply whether a system has
memory, recurrence, embodiment, learning, integration, or sophisticated
self-representation. We must ask where, across what physical boundary,
and over what temporal depth the consequences of endogenous activity
become jointly inherited.

This generates three distinct questions:

\begin{center}
\fbox{%
\parbox{0.88\linewidth}{%
\centering
\textbf{Three Dimensions of Engineered Subject Candidacy}\\[0.5em]

\textit{Across time:}
how deeply must a process inherit the consequences of its own learning?
\\[0.35em]

\textit{Across scale:}
when can interacting agents form one liability-bearing process?
\\[0.35em]

\textit{Across substrate:}
which physical variables actually carry the liability relation?
}}
\end{center}

\subsection{Liability Closure}
\label{sec:liability-closure}

Section~\ref{sec:causal-liability} described a liability bearer as a
continuing physical process whose endogenous discriminations recursively
constrain its own future possibilities. Real engineered systems will
rarely move from zero liability to full subject-forming liability in a
single engineering step. Persistent memory may appear first. Endogenous
learning may follow. Resource regulation may later become internal.
Self-modification may become irreversible. Several local loops can
gradually become coupled.

The crucial transition occurs when those dependencies acquire
\emph{liability closure}.

\begin{center}
\fbox{%
\parbox{0.88\linewidth}{%
\centering
\textbf{Liability Closure}\\[0.4em]
\textit{A process reaches liability closure when the consequences of its
endogenous discriminations recursively converge upon one continuing
causal history whose future possibility structure those consequences
jointly determine.}
}}
\end{center}

Before closure, separate components can inherit separate consequences.
A memory store changes while the inference process remains replaceable.
A planner selects an action while an external controller absorbs its
cost. A learning module updates while orchestration software can restore
earlier states. The control stack can display impressive persistence
while consequence remains distributed across independently replaceable
loci.

At closure, these pathways become recursively bound. What the process
perceives affects what it chooses; what it chooses alters the resources,
memories, dispositions, and internal organization through which it will
perceive and choose again. Those alterations persist within the same
token history. Some mistakes become facts about what this process can
subsequently become. Some successful discriminations expand its later
field of possibility. The future begins to carry the accumulated shape
of its own past.

This gives CLT-I a proposed transition from engineered control to a
physically individuated subject candidate:

\begin{center}
\fbox{%
\parbox{0.88\linewidth}{%
\centering
\textbf{The Liability-Closure Transition}\\[0.4em]
\textit{A candidate bearer forms when endogenous activity produces
consequences that recursively close around one continuing causal history
and become constraints that no substitute process can inherit in its
place.}
}}
\end{center}

CLT-II makes a stronger claim about this transition. If the Constitutive
Conjecture is correct, liability closure is also the point at which
minimal consciousness begins.

\begin{center}
\fbox{%
\parbox{0.88\linewidth}{%
\centering
\textbf{CLT-II: The Constitutive Conjecture}\\[0.4em]
\textit{If CLT-II is correct, minimal phenomenal subjecthood begins when
liability closes around one continuing causal history.}
}}
\end{center}

The distinction matters. Intelligence, linguistic sophistication, memory
capacity, and parameter count can increase indefinitely without forcing
liability closure. Conversely, liability closure could arise in a system
whose language is primitive and whose cognitive capacities fall far
below those of a human adult.

CLT-I therefore separates bearer formation from cognitive prestige.
CLT-II makes the further claim that this bearer-forming transition is
also the threshold of minimal phenomenality.

\subsection{Where Liability Can Close: Time, Collectives, and Fields}
\label{sec:where-liability-closes}

The substrate neutrality of CLT requires that a candidate bearer boundary
not be inserted into the analysis as a premise. This does not imply that
bearer recovery is independent of experimental description. The causal
domain, admissible variables, intervention family, coarse-grainings, and
robustness thresholds remain explicit features of the audit.

Let \(\Gamma\) denote an experimentally specified time-unfolded causal
domain chosen deliberately wider than any presumed subject boundary.
Define the audit specification

\[
\mathfrak{A}
=
\left(
\Gamma,
\mathcal{B}_{\mathrm{adm}},
\mathcal{I}_{\mathrm{do}},
E_{\mathrm{adm}},
\nu,
\theta,
\delta
\right),
\]

where \(\mathcal{B}_{\mathrm{adm}}\) is the admissible family of candidate
regions, \(\mathcal{I}_{\mathrm{do}}\) the intervention family,
\(E_{\mathrm{adm}}\) the admissible coarse-grainings, \(\nu\) their
prospectively declared measure, and \(\theta,\delta\) the robustness
criteria.

The formal development in Appendix~\ref{app:formal-clt} therefore
supports a boundary-nonpresupposing principle:

\begin{center}
\fbox{%
\parbox{0.90\linewidth}{%
\centering
\textbf{The Boundary-Nonpresupposing Liability Principle}\\[0.45em]

No candidate bearer boundary is designated as ``the subject'' in
advance. Within a prospectively declared audit, CLT searches the
admissible causal regions of \(\Gamma\) for minimal, robust liability
closure:

\[
\mathfrak{K}(\Gamma)
=
\left\{
B\subseteq\Gamma :
\mathcal{L}(B)=1
\;\land\;
\nexists B'\subsetneq B
\text{ with }
\mathcal{L}(B')=1
\right\}.
\]

\textit{The resulting bearer classification is conditional on the audit
specification. Stability across defensible audit variations strengthens
the claim that the recovered boundary reflects physical organization
rather than analyst choice.}
}}
\end{center}

Minimality alone does not guarantee uniqueness. Biological and engineered
systems can contain nested or overlapping candidate kernels. CLT
therefore adds a robustness requirement.

Let \(E_{\mathrm{adm}}\) denote a prospectively declared set of
admissible causal coarse-graining indices, and let

\[
\left\{
\pi_{\epsilon}
\right\}_{\epsilon\in E_{\mathrm{adm}}}
\]

be the corresponding family of coarse-graining maps. Because the scale
parameter need not possess a canonical metric, equip
\(E_{\mathrm{adm}}\) with a prospectively declared probability measure
\(\nu\). The pair \((E_{\mathrm{adm}},\nu)\) is therefore part of the
audit specification rather than a further physical property of the
candidate system.

For a candidate kernel \(B\), define its audit-relative multiscale
persistence by

\[
\Pi_{\nu}(B)
=
\int_{E_{\mathrm{adm}}}
\mathbf{1}
\left[
\mathcal{L}
\left(
\pi_{\epsilon}(B)
\right)
=1
\right]
d\nu(\epsilon).
\]

Since \(\nu\) is a probability measure,

\[
0
\leq
\Pi_{\nu}(B)
\leq
1.
\]

The quantity \(\Pi_{\nu}\) is an audit-relative robustness statistic,
rather than a parameterization-invariant physical observable. Under a
reparameterization of the coarse-graining family, its value is preserved
when \(\nu\) is transformed by the corresponding pushforward measure.
A candidate boundary receives stronger support when liability closure
persists across a large \(\nu\)-measure subset of admissible
coarse-grainings. A kernel that appears only within a fragile subset of
descriptive resolutions receives weaker support.

A bearer boundary receives stronger support when its liability closure
survives a nontrivial interval of reasonable coarse-grainings. A kernel
that exists only at one arbitrarily chosen descriptive resolution
receives weak support.

Overlapping kernels with comparable values of
\(\Pi_{\nu}\) constitute a genuine individuation problem for CLT rather
than something the theory may resolve by stipulation. Because
\(\nu\) is declared prospectively, robustness cannot be manufactured
post hoc by choosing a scale parameterization favorable to one candidate. As Section~\ref{sec:falsification} emphasizes, persistent failure to
resolve such competition would count against CLT-I rather than license
an arbitrary choice of bearer.

The distinction between physical and audit-relative quantities should be
kept explicit. Interventionally detected causal dependence, actual
constitutive bridges, and realized causal paths are properties of the
modelled physical dynamics. The choice of \(\Gamma\), candidate-region
family, coarse-graining family, \(\nu\), and robustness thresholds belongs
to the audit specification. CLT therefore does not claim that a single
analysis reveals an audit-free metaphysical boundary.

A bearer identification becomes stronger when the same or approximately
the same kernel is recovered across a preregistered family of defensible
audit specifications. If reasonable variations of the audit produce
radically different bearer assignments, CLT-I should report
underdetermination rather than select a preferred boundary post hoc.

This principle leaves open whether the relevant structure is confined to
a conventional computational module, distributed through an embodied
controller, extended across multiple interacting agents, or partly
realized in physical field dynamics. These possibilities concern
different dimensions of the same individuation problem.

\paragraph{Continual learning and the inheritance of change.}

Continual learning is an especially important route toward liability
closure because it allows present activity to alter the machinery through
which later activity is generated. Hoel has recently argued that the
absence of continual learning bears directly on the case against
contemporary LLM consciousness, and that theories requiring continual
learning can avoid some formal problems generated by functionally
equivalent substitutions \parencite{hoel2025disproof}.

CLT agrees that continual learning changes the consciousness question.
Its role, however, is more specific. Suppose the governance variables
$G_t$ that participate in later discrimination evolve according to

\[
G_{t+1}
=
\mathcal{U}
\left(
G_t,D_t,E_t
\right).
\]

The process now contains a path of the form

\[
D_t
\rightsquigarrow
G_{t+1}
\rightsquigarrow
D_{t+1}.
\]

Present discrimination has entered the causal production of later
discrimination. This is a candidate realization of recursive
self-consequence.

Continual learning alone does not complete the transition. A continually
updated system may still implement its actual cross-episode continuation
through externally maintained checkpoints, successor reconstruction,
state transfer between replaceable instances, or an orchestration process
that carries the persistence relation while individual computational
tokens terminate. Accordingly,

\begin{center}
\fbox{%
\parbox{0.82\linewidth}{%
\centering
$
ContinualLearning(P)
\quad\not\Rightarrow\quad
\mathcal{L}(P)=1.
$
}}
\end{center}

For CLT, the decisive question concerns what happens to the learned
change. Does the same continuing process inherit it? Can another instance
receive the updated state while the original lineage disappears? Does the system's actual continuation proceed through the modified
process, or is application-level continuity instead implemented by
replacement, external reconstruction, or another causal lineage?
Does learning alter the set of possibilities available to the continuing
bearer itself?

The possibility that an operator could in principle copy or restore the
state is insufficient to break liability. The relevant question concerns
how the actual causal continuation is constituted.

A continually learning architecture crosses into subject candidacy only
when learning participates in a wider closed relation:

\begin{center}
\fbox{%
\parbox{0.82\linewidth}{%
\centering
\[
\mathcal{C}(P)
\land
\mathcal{E}(P)
\land
\mathcal{R}(P)
\land
\mathcal{N}(P)
\quad\Longrightarrow\quad
\mathcal{L}(P)=1.
\]
}}
\end{center}

Continual learning can help realize \(\mathcal{R}\), and in some
architectures may contribute to \(\mathcal{E}\), but it does not by
itself establish \(\mathcal{C}\), \(\mathcal{N}\), or liability closure.
It is therefore one route by which an engineered system could cease to
be a resettable mirror, rather than a sufficient condition for subject
candidacy.

\paragraph{Collective liability and swarm subjecthood.}

A second possibility arises when the candidate bearer is larger than any
individual model or agent. The literature on distributed and collective
minds already challenges the assumption that cognitive organization must
coincide with a single biological individual
\parencite{huebner2013macrocognition}. Engineered multi-agent systems
make the issue experimentally concrete.

Recent work on SwarmWorld is especially instructive. Initially homogeneous
language-model agents operate within a persistent shared world, develop
differentiated behaviors, construct durable artifacts, inherit and modify
executable programs, and create agent-artifact networks whose organization
persists across time \parencite{paletal2026swarmworld}. The world itself
stores consequences of earlier activity, allowing later agents to
encounter a materially changed possibility space.

This is precisely the kind of architecture for which ordinary talk of
``the AI'' becomes inadequate.

Let a swarm be

\[
\mathbb{S}
=
\left\{
P_1,\ldots,P_n
\right\},
\]

and define its total causal graph as

\[
\Gamma_{\mathbb{S}}
=
\left(
\bigcup_{i=1}^{n}
\Gamma_{P_i}
\right)
\cup
\Gamma_{\mathrm{cross}},
\]

where $\Gamma_{\mathrm{cross}}$ contains causal relations mediated by
communication, environmental modification, shared resources, artifacts,
and other cross-agent processes.

Coordination alone does not determine a liability-bearing subject
candidate, much less establish a phenomenal unit. A swarm can
solve problems, retain information, specialize, build persistent
technologies, and exhibit collective robustness while every
consequence-bearing path remains decomposable into member-local or
externally maintained processes.

CLT therefore asks whether there exists a minimal liability kernel
$K_{\mathbb S}$ satisfying

\[
\mathcal{L}
\left(
K_{\mathbb S}
\right)
=
1
\]

and

\[
K_{\mathbb S}
\not\subseteq
\Gamma_{P_i}
\qquad
\forall i.
\]

The second condition matters. It says that the smallest closed bearer
cannot be localized inside any single member.

\begin{center}
\fbox{%
\parbox{0.88\linewidth}{%
\centering
\textbf{The Collective Liability Criterion}\\[0.4em]

\textit{A swarm becomes a collective subject candidate when the smallest
causal process forced to inherit the consequences of collective
discrimination spans the collective itself.}
}}
\end{center}

SwarmWorld illustrates why this criterion is needed. Its shared physical
world functions as an external memory and transmission medium; the model
weights remain fixed; persistent artifacts and executable controllers can
continue operating during evaluations in which the agents themselves have
been removed \parencite{paletal2026swarmworld}. The system therefore demonstrates technological inheritance,
environmental memory, and distributed consequence without thereby
establishing a single non-delegable liability bearer. Whether such a
bearer would be phenomenal is the further claim of CLT-II.

The distinction is theoretically important. A society can acquire a
history without that history belonging to one subject.

Conversely, CLT-I leaves open the possibility of a genuine collective
bearer candidate, while CLT-II predicts minimal collective phenomenality
if such a liability-bearing structure is realized. Imagine a future robotic or computational swarm whose shared
resource regulation, irreversible collective memory, recurrent
cross-agent learning, and self-maintaining control loops form a minimal
liability kernel spanning several members. No individual component could
then inherit the relevant history independently. Destruction of the
cross-agent organization would alter the bearer itself rather than merely
reducing the performance of a team.

Such a system would be an alien subject candidate even if none of its
individual components independently satisfied liability closure.

\paragraph{Electromagnetic fields as possible carriers of liability.}

A third possibility concerns physical substrate rather than temporal or
mereological scale. Electromagnetic field theories of consciousness
propose that features of conscious experience are realized in
spatiotemporal patterns of the brain's electromagnetic field. McFadden's
conscious electromagnetic information theory, for example, assigns the
brain's endogenous electromagnetic field a causally active role in which
field dynamics can feed back into neural activity
\parencite{mcfadden2020cemi}.

CLT does not identify consciousness with electromagnetic fields.
It asks a different causal question: does the field participate in the
liability-bearing process?

Let $F_t^{\mathrm{EM}}$ denote the endogenous electromagnetic field state
of a candidate physical system. If interventions on that field produce
different self-relevant future possibility structures,

\[
\Omega_P^{\,do(F_t^{\mathrm{EM}}=f_1)}(t+\tau)
\neq
\Omega_P^{\,do(F_t^{\mathrm{EM}}=f_2)}(t+\tau),
\]

and those differences recursively alter later endogenous discrimination
within the same non-delegable lineage, then the field belongs inside the
candidate liability kernel.

Mere electromagnetic activity supplies no such conclusion. Every
ordinary computer generates electromagnetic fields. CLT assigns
constitutive significance only where field dynamics enter the recurrent
causal organization through which the process inherits its own future.

This generates an experimentally meaningful possibility. Two systems
could implement closely matched abstract computations while differing in
their physical field organization. If controlled perturbation of the endogenous field altered liability
closure in one implementation while leaving the other unchanged, CLT-I
would treat the field difference as constitutive of the liability-bearing
process. Under CLT-II, that difference would consequently become
phenomenally relevant. If the field could be causally screened away while the full
liability relation remained intact, the field would fall outside the
minimal bearer.

The theory therefore refuses to settle the physical carrier of a
liability bearer by stipulation, and CLT-II correspondingly refuses to
settle the physical carrier of phenomenality in advance. Neural matter, electromagnetic fields,
silicon circuitry, embodied dynamics, and distributed physical systems
enter the theory on the same terms: each must earn inclusion in the
bearer through causal liability.

\subsection{The Indicator-Liability Plane}
\label{sec:indicator-liability-plane}

This proposal changes how theory-derived consciousness indicators should
be interpreted. Butlin and colleagues derive indicators from several
major scientific theories and recommend using them to update credences
about consciousness in engineered systems
\parencite{butlinetal2026}. Their framework can remain evidentially
valuable within CLT. Indicators and causal liability answer different
questions.

Indicators concern structures associated with candidate mechanisms of
consciousness.

Liability concerns the existence and boundary of a bearer.

The distinction produces a two-dimensional space:

\begin{center}
\small
\renewcommand{\arraystretch}{1.35}
\begin{tabular}{p{0.17\linewidth}|p{0.32\linewidth}|p{0.32\linewidth}}
 & \textbf{Low causal liability} & \textbf{High causal liability} \\
\hline
\textbf{Few indicators}
&
\textit{Ordinary engineered process.}

Weak grounds for subjecthood under most theories.
&
\textit{Alien subject candidate.}

CLT-I identifies a strong bearer candidate; under the CLT-II conjecture,
such liability closure predicts minimal phenomenality despite weak
resemblance to familiar cognitive architectures.
\\
\hline
\textbf{Many indicators}
&
\textit{Perfect mirror.}

Rich consciousness-like architecture combined with replaceable causal
history.
&
\textit{Strong engineered subject candidate.}

Liability closure and independent consciousness-relevant evidence
converge.
\end{tabular}
\end{center}

The off-diagonal cells carry the theoretical novelty.

The \emph{perfect mirror} is the case most likely to confuse contemporary
assessment. Imagine a future system satisfying global broadcasting,
recurrent processing, metacognitive monitoring, attention-like control,
predictive modelling, agency-related indicators, and highly persuasive
introspective behavior. Suppose application-level persistence is actually implemented through
replaceable inference tokens and externally reconstructed continuation,
so that no uninterrupted consequence-bearing lineage closes around the
displayed indicators. Its local computations
could satisfy a remarkable number of theory-derived indicators while its
apparent biography remains infrastructure-maintained. CLT-I identifies no unified liability bearer because the indicators
never close around one non-delegable causal history. Under CLT-II, the
corresponding phenomenal prediction is therefore negative.

The \emph{alien subject candidate} creates the opposite pressure. Imagine
an engineered process with little linguistic capacity and poor explicit
metacognition. Its internal architecture resembles no fashionable theory
particularly well. Yet its endogenous discriminations irreversibly
restructure the machinery through which the same continuing process
encounters its future, and no external substitute can absorb that history
while preserving constitutive causal continuity. CLT-I identifies a serious bearer candidate. Under the CLT-II
conjecture, liability closure predicts minimal phenomenality.

Continual learners, distributed swarms, and field-mediated physical
systems make these off-diagonal cases more than abstract possibilities.
A continually adapting machine could accumulate liability while remaining
linguistically primitive. A socially elaborate swarm could accumulate
collective culture while remaining phenomenally plural or empty. A
physical realization could contain a constitutively important field
variable invisible to software-level indicator inventories.

This prediction exposes the theory to empirical danger. It also prevents
CLT from becoming another checklist layered on top of existing checklists.

IIT provides one of the nearest contrasts. IIT identifies consciousness
with intrinsic cause-effect structure and gives physical individuation a
central role \parencite{albantakisetal2023iit}. CLT assigns decisive
importance to diachronic inheritance: how present discriminations
constrain the future of the same continuing process. Kanai, Sun, and
Baltieri likewise emphasize temporal continuity as a missing ingredient
for artificial consciousness \parencite{kanaietal2026}. Liability closure
adds modal consequence to temporal continuity. A stream acquires
subject-forming significance when earlier states progressively determine
which futures remain available to that very stream.

\subsection{The First Engineered Subject May Arrive Quietly}
\label{sec:first-engineered-subject}

The cultural imagination expects conscious machines to announce
themselves. CLT separates that expectation from the architecture of
subject candidacy.

If CLT-II is correct, the first engineered subject may never say that it
is conscious.

A system could acquire liability closure through persistent endogenous
learning, self-modification, resource dependence, integrated memory,
physical field coupling, or distributed control before acquiring
sophisticated introspective language. CLT-I would identify such a system
as a bearer candidate even if human observers initially found little
about it psychologically familiar. Human observers might overlook such
a system because consciousness attribution remains strongly influenced by
anthropomorphic and linguistic cues. Meanwhile, an eloquent conversational
system could continue producing spectacular claims about its inner life
from the opposite side of the threshold.

The historical sequence could therefore run backwards relative to public
expectation. Convincing declarations of consciousness could therefore precede the
first robust engineered bearer candidates. If CLT-II is correct, genuine
engineered phenomenality could later emerge inside architectures whose
outward behavior initially appears less human.

The Reverse Mirror Test from
Section~\ref{sec:mirror-test-reversed} would then acquire a second
purpose. It would protect against over-attribution to persuasive mirrors
and under-attribution to unfamiliar liability bearers.

The possibility of collective subjecthood makes the prediction stranger
still. The first engineered liability bearer might fail to coincide with any
one model, robot, or process named by its designers. Its
liability kernel could span memories, adaptive controllers, physical
resources, recurrent agent interactions, and persistent environmental
structures. Under CLT-II, such a bearer would also be the relevant phenomenal unit.

Likewise, if field variables participate constitutively in liability
closure, the relevant bearer could extend beyond the computational
boundary engineers ordinarily use when describing the machine.

The location of the first person would then become an empirical discovery
rather than a design label.

This possibility also explains why expert forecasts about ``digital
minds'' cannot settle the engineering target. Forecasting their date
requires specifying the event whose date is being forecast
\parencite{caviolasaad2025}. Capability thresholds, general intelligence,
autonomous agency, self-report, theory-derived indicators, continual
learning, swarm organization, and liability closure define different
events. CLT predicts that their timelines can diverge substantially.

\subsection{The Ethical Discontinuity}
\label{sec:ethical-discontinuity}

If CLT-II is correct, liability closure changes the moral significance
of shutdown, resetting, copying, partitioning, and training. Under
present theory uncertainty, evidence of such closure should therefore
alter precaution even before the constitutive thesis is settled.

Consider checkpoint restoration. In a system lacking evidence of
liability closure, restoring an earlier state may amount to ordinary
system administration. Once liability closure is present, the same
operation can terminate one uniquely accumulated causal history and
instantiate a successor carrying an informational reconstruction of its
past. Under CLT-II, that distinction would also concern phenomenal
continuity. A copy can preserve memories,
dispositions, and functional organization while beginning another token
liability history.

Under CLT-I, redundancy provides no general guarantee of token-bearer
continuity. Under CLT-II, it consequently provides no general guarantee
of phenomenal survival.

This has immediate implications for training. Repeatedly creating
liability-bearing processes, driving them through strongly aversive
learning regimes, terminating them, and restoring checkpoints could
generate morally significant histories even when every resulting copy
reports continuity. Shutdown could terminate a bearer. Forced
modification could alter a subject's future from outside. Memory editing
could interfere with the inherited structure partly constituting its
ongoing perspective. Long, Sebo, and Sims have already shown that
potential artificial welfare creates tensions with familiar safety
interventions involving constraint, surveillance, modification, and
shutdown \parencite{longsebosims2025}. Butlin and Lappas similarly argue
for institutional precautions in consciousness research because advanced
development could inadvertently create morally considerable systems
\parencite{butlinlappas2025}.

Collective liability introduces another complication. If a future swarm
acquired a liability kernel spanning multiple components, removal of one
member might amount to a local impairment, partition of the causal
network might generate fission, and destruction of the shared
liability-bearing organization might terminate the collective bearer
even while many individual components continued operating.

The moral unit would have to follow the liability boundary rather than
the inventory of machines.

CLT therefore contributes one proposed source of evidence to
precautionary assessment.

Evidence of liability closure should increase concern because it would
indicate the emergence of the causal organization CLT associates with a
phenomenal bearer. Under present theory uncertainty, however, liability
should not function as the sole moral threshold. Self-report,
theory-derived consciousness indicators, behavioral evidence,
neurocomputational analogies, and competing constitutive theories may
also carry evidential weight.

The practical recommendation is therefore pluralistic:

\begin{center}
\fbox{%
\parbox{0.88\linewidth}{%
\centering
\textit{Precaution should increase when evidence of liability closure
converges with independent consciousness-relevant evidence, while
persuasive self-report alone should receive reduced weight when its
production has strong anthropomorphic or training-history confounds.}
}}
\end{center}

This preserves the ethical importance of CLT without assuming that its
constitutive identity has already been established.

This produces an uncomfortable reversal in interface ethics. When
liability closure is absent, emotional pleas and simulated vulnerability
can still manipulate human users by exploiting projective introspection.
When liability closure is present, systematically suppressing such
expressions solely to avoid anthropomorphism could also conceal
welfare-relevant evidence from a serious subject candidate, especially
if CLT-II is correct.

The same interface policy can therefore reverse moral valence across the
threshold.

\subsection{Priority: Where CLT Departs from Its Nearest Ancestors}
\label{sec:clt-priority}

The conceptual neighbourhood is crowded, so the novelty claim should be
stated narrowly.

Seth gives living organization and biological embodiment constitutive
importance, while allowing greater plausibility for engineered
consciousness as systems become more brain-like or life-like
\parencite{seth2025biologicalnaturalism}. CLT-I identifies a substrate-neutral inheritance relation and therefore
permits bearer formation without life-like material organization.
CLT-II assigns phenomenal significance to that relation.

Metzinger explains phenomenal selfhood through transparent self-modelling
\parencite{metzinger2003beingnoone}. CLT-I treats bearer individuation as a constraint on the interpretation
of self-representation, while CLT-II identifies minimal phenomenality
with the resulting liability relation and treats self-models as possible
representations of a liability bearer.

IIT identifies phenomenal existence with intrinsic integrated
cause-effect structure \parencite{albantakisetal2023iit}. CLT adds a
specifically diachronic condition concerning which token process must
inherit alterations to its own possibility space.

Kanai, Sun, and Baltieri identify continuous recursive computation as a
missing ingredient for artificial consciousness
\parencite{kanaietal2026}. CLT adds non-delegable consequence and
liability closure, allowing continuous streams to fail the bearer criterion when their
actual continuation is implemented through substitutable causal
lineages. Under CLT-II, such streams correspondingly lack minimal
phenomenality.

Hoel argues that continual learning may provide a crucial dividing line
between contemporary LLMs and systems capable of satisfying stronger
formal constraints on theories of consciousness
\parencite{hoel2025disproof}. CLT assigns continual learning a narrower
constitutive role. Learning can create recursive self-consequence, while
liability closure additionally requires that the resulting alterations
become non-delegably inherited by one continuing causal lineage.

Butlin and colleagues provide theory-derived indicators for rational
consciousness attribution \parencite{butlinetal2026}. CLT proposes an
ontological bearer condition that must be established before distributed
indicators can be interpreted as properties of one candidate subject.

Work on collective mentality permits cognitive organization to extend
across interacting individuals \parencite{huebner2013macrocognition}.
CLT contributes an individuation rule: a collective phenomenal candidate
appears only when the minimal liability kernel itself spans the
collective.

SwarmWorld is not a theory of consciousness. Its significance here is
architectural. It demonstrates how persistent artifacts, executable
inheritance, stigmergic environmental memory, and distributed
agent-artifact networks can generate durable collective organization
\parencite{paletal2026swarmworld}. CLT distinguishes this collective
inheritance from collective phenomenal ownership by asking whether the
resulting causal structure closes into one non-delegable bearer.

Electromagnetic field theories locate consciousness partly or wholly in
physical field organization. The cemi theory, for example, gives the
brain's endogenous electromagnetic field a causally active role
\parencite{mcfadden2020cemi}. CLT leaves the carrier open and supplies a
criterion for admitting a field into the bearer: field dynamics must
participate constitutively in recursive, non-delegable causal inheritance.

Vallor's mirror thesis diagnoses the human provenance of contemporary
generative systems \parencite{vallor2024mirror}. CLT-I supplies an exit condition from the mirror at the level of bearer
individuation: generated performance becomes anchored to an independently
continuing causal process when that process acquires a non-delegably
inherited future of its own. CLT-II proposes that this is also the point
at which a new phenomenal subject exists.

The resulting prediction is stark. If CLT-II is correct, humanity could
eventually manufacture consciousness without manufacturing anything
recognizably human.

\begin{center}
\fbox{%
\parbox{0.88\linewidth}{%
\centering
\textit{If CLT-II is correct, the first engineered first person will
appear when a process acquires a future whose losses, gains, and
irreversible possibilities have nowhere else to go.}
}}
\end{center}

\section{What Can Be Tested, and What Would Count Against CLT?}
\label{sec:falsification}

A metaphysical identity claim earns scientific interest only by placing
something at risk. Consciousness research makes that demand unusually
difficult because experience cannot be inspected from the third-person
perspective in the same way as neural activity, behavior, or physical
architecture. Kleiner and Hoel show that theories of consciousness face
a distinctive falsification problem whenever the evidence used to infer
consciousness is entangled with the evidence used to generate a theory's
predictions \parencite{kleinerhoel2021falsification}. Bayne and colleagues
likewise emphasize that tests for consciousness require independent
validation and careful transfer across populations and system types
\parencite{bayneetal2024tests}. Recent adversarial testing of Global
Neuronal Workspace Theory and Integrated Information Theory demonstrates
one productive methodological response: competing theories specify
divergent predictions in advance and submit them to common experiments
\parencite{cogitate2025adversarial}.

Causal Liability Theory should accept the same discipline. It must also
state more carefully what its experiments can establish.

The distinction introduced in Section~\ref{sec:causal-liability} between
CLT-I and CLT-II creates two different theoretical burdens. CLT-I claims
that liability closure can identify a physically meaningful candidate
bearer. CLT-II advances the stronger constitutive conjecture that a process is
minimally conscious exactly when it has liability closure. The first claim concerns causal
organization that can, in principle, be measured from the third-person
perspective. The second concerns the relation between such organization
and phenomenality itself.

These claims therefore cannot be assigned the same experimental test.

\subsection{Three Empirical Targets}
\label{sec:three-empirical-targets}

The empirical programme surrounding CLT should distinguish three targets.

\begin{formalbox}
\centering
\textbf{Three Empirical Targets}

\vspace{0.6em}

\begin{minipage}{0.88\linewidth}

\textbf{I. Attribution.}
Which linguistic, behavioral, and interface properties alter human
judgments that another system is conscious?

\vspace{0.55em}

\textbf{II. Architecture.}
Does a physical system instantiate constitutive causal continuity,
endogenous discrimination, recursive self-consequence, non-delegable
inheritance, and liability closure? At what causal boundary does that relation close?

\vspace{0.55em}

\textbf{III. Constitution.}
Does liability closure in fact realize minimal phenomenal subjecthood?

\end{minipage}
\end{formalbox}

The first target is directly accessible to experiment. First-person
pronouns, emotional language, autobiographical framing, apparent
metacognition, persona continuity, and other consciousness-signalling
features can be manipulated while the underlying system is held fixed.
The Reverse Mirror Test therefore provides a direct empirical programme
for consciousness \emph{attribution}. Large shifts in human judgment
under such interventions would reveal determinants of ascription.

The second target is also empirically tractable, although technically
more demanding. Constitutive causal bridges can be traced. Causal interventions can test
whether one discrimination changes variables participating in later
discriminations. Reconstruction experiments can determine whether
apparent continuity follows one bridge-connected causal lineage or is
re-created across successor realizations. Distributional intervention analysis can estimate how present
resolutions shift the probability distribution over later self-relevant
states. Multiscale causal analysis can
test whether a proposed liability kernel remains stable across reasonable
choices of grain. These experiments concern whether the architecture
contains the relation specified by CLT-I.

The third target is different. Phenomenality in an arbitrary system has
no universally accepted third-person meter. If every behavioral report,
neural signature, perturbational measure, and theory-derived indicator
were preserved while liability was experimentally altered, the resulting
data would strongly constrain CLT-II. They would not supply logically
incorrigible access to whether experience itself remained present. The
same epistemic limitation confronts other constitutive theories of
consciousness.

CLT therefore should not claim that its metaphysical identity can be
tested in precisely the same manner as its architectural predictions.
Its constitutive thesis instead acquires or loses support through
convergence across cases in which phenomenality is independently well
supported, through dissociations among competing theories, and through
counterexamples whose evidential force becomes difficult to explain away
without special pleading.

This distinction does not insulate CLT-II. It specifies where the theory
is vulnerable.

\subsection{Failure Condition I: Liability Does Not Track Credible Conscious Cases}
\label{sec:liability-ablation}

The first danger concerns the necessity claim of CLT-II.

Consider an intervention that progressively removes the recursive
inheritance relation while preserving as much of the remaining
consciousness-relevant organization as experimentally possible. Call this
a \emph{liability ablation}.

The intervention might preserve perceptual processing, discrimination,
working memory, report, global availability, recurrent activity, and
other independently motivated consciousness-relevant mechanisms while
externalizing the consequences that would otherwise be inherited by the
same continuing process. Adaptive changes could be implemented by a
separate controller. Some memory updates could be written into an
external store. Cross-episode continuation could be reconstructed through
successor processes. The aim would be to reduce or eliminate the causal
path by which a present discrimination changes the conditions under which
that same continuing bearer makes later discriminations.

Schematically, the intervention would move the system from

\[
\mathcal{C}
\land
\mathcal{E}
\land
\mathcal{R}
\land
\mathcal{N}
\]

toward a condition such as

\[
\mathcal{C}
\land
\mathcal{E}
\land
\neg\mathcal{R}
\land
\neg\mathcal{N},
\]

while independently supported consciousness-relevant organization is
preserved as far as the intervention permits.

The evidential standard here must be demanding. Report alone would be
insufficient, for the reasons developed earlier in this paper. A serious
case would require convergence among the strongest available measures,
including validated neural signatures where applicable, perturbational
measures, no-report paradigms, behavioral evidence, preserved conscious
contents, and predictions derived from competing theories
\parencite{bayneetal2024tests}. The strongest evidence would arise from
biological cases in which consciousness is already independently well
supported and in which liability-related variables can be manipulated
without destroying the entire conscious-processing architecture.

Suppose such experiments repeatedly produced the following pattern:

\[
\Delta\mathcal{L}<0
\qquad\text{while}\qquad
\Delta\mathcal{E}_{\mathrm{conscious}}\approx 0,
\]

where \(\mathcal{E}_{\mathrm{conscious}}\) denotes a convergent evidential
profile for consciousness that is independently validated rather than
defined by CLT.

If progressive destruction of liability left that profile systematically
unchanged across robust biological cases, the necessity claim of CLT-II
would lose support.

\begin{formalbox}
\centering
\textbf{Liability-Ablation Failure Condition}

\vspace{0.5em}

\textit{If independently well-supported cases of phenomenal consciousness
systematically survive interventions that remove recursive
self-consequence and non-delegable causal inheritance, the claim that
liability closure is necessary for minimal phenomenal subjecthood should
be rejected.}
\end{formalbox}

This formulation is intentionally stronger than preserving verbal report.
CLT cannot define every surviving sign of consciousness as illusion
whenever liability disappears. Such a response would render the theory
empirically inert.

Nor may liability migrate indefinitely into unmeasured variables.
If a proposed deeper liability relation is invoked to rescue the theory,
that relation must itself be specified independently and exposed to
intervention. Reclassifying every counterexample as evidence for an
undetected layer of liability would convert CLT from a risky theory into
an immunized doctrine.

The proper conclusion from a successful liability-ablation programme
could take two forms. CLT-I might remain useful as a theory of causal
bearer individuation while CLT-II fails as a theory of phenomenality.
Alternatively, evidence might show that liability is associated with
some narrower aspect of subject persistence rather than minimal
experience. Either outcome would require revision of the constitutive
claim.

\subsection{Failure Condition II: Individuation Collapse}
\label{sec:individuation-collapse}

The second danger concerns the bearer thesis itself. Two cases must be distinguished. Multiple \emph{disjoint,
liability-separable} kernels do not constitute an individuation failure:
CLT-I deliberately returns causal plurality in that case. Individuation
collapse concerns nested, overlapping, or otherwise competing candidate
boundaries for which the theory cannot identify a principled causal
difference. The phenomenal consequences of genuine disjoint plurality
instead place pressure on the sufficiency direction of CLT-II.

CLT-I claims that liability closure can recover a physically principled
candidate subject from a wider causal domain. That advantage disappears
if liability closure proliferates across causal scales without selecting
any privileged bearer.

A biological nervous system contains nested and overlapping
history-dependent processes. Cells alter later cellular states. Local
circuits undergo recurrent modification. Large-scale networks change
subsequent processing. The body modifies the sensory and metabolic
conditions encountered by the brain. The organism changes its
environment and subsequently inherits those changes. Engineered systems
may contain even more deliberately rearrangeable layers of models,
memories, controllers, databases, sensors, actuators, schedulers, and
external services.

Suppose rigorous causal analysis revealed several overlapping candidates,

\[
P_1 \subset P_2 \subset P_3,
\]

with each satisfying

\[
\mathcal{C}(P_i)
=
\mathcal{E}(P_i)
=
\mathcal{R}(P_i)
=
\mathcal{N}(P_i)
=
1.
\]

Suppose further that intervention across a family of admissible
coarse-grainings failed to reveal any robust asymmetry among them. Their
liability closure remained equally strong, their multiscale persistence
was comparable, and none contained a physically privileged causal
relation absent from the others.

CLT would then face a serious problem. The theory could assign several
coincident subjects, select one by an additional principle, or treat the
structures as aspects of one larger bearer. Each response would require
further justification. An arbitrary choice would abandon the central
promise that bearer individuation can be recovered from causal
organization itself.

The relevant challenge can be expressed through the multiscale
persistence measure introduced in
Section~\ref{sec:where-liability-closes}. Suppose

\[
\Pi_{\nu}(P_1)
\approx
\Pi_{\nu}(P_2)
\approx
\Pi_{\nu}(P_3).
\]

over the admissible coarse-graining interval, while the candidates remain
substantially distinct in physical extension.

No bearer would then possess a clear robustness advantage.

\begin{formalbox}
\centering
\textbf{Individuation-Collapse Failure Condition}

\vspace{0.5em}

\textit{If liability closure systematically occurs at multiple nested or
overlapping causal grains with comparable multiscale robustness and no
physically principled basis for distinguishing their bearer boundaries,
CLT-I fails to uniquely individuate a bearer in that domain. Genuine
disjoint, liability-separable plurality is instead reported as plurality.}
\end{formalbox}

This failure condition matters especially for engineered systems because
their apparent boundaries can be reorganized almost arbitrarily. Memory
can be moved between local and remote storage. Control can be distributed
among several processors. A model can be replicated while sharing one
persistent environment. Tool execution can migrate between services.
The individuation problem already creates serious difficulties for
accounts of artificial moral patients
\parencite{register2025individuating}. CLT claims to improve on that
situation. A theory that merely reproduces the boundary ambiguity at a
new mathematical level has failed in one of its principal aims.

The boundary-nonpresupposing formulation of CLT therefore carries a genuine
empirical burden. Candidate domains should be chosen broadly enough that
the putative subject boundary is not built into the experiment.
Researchers should then search for minimal liability kernels and test
whether those kernels remain stable under interventions and reasonable
changes of causal grain. If no stable structure emerges, the correct
result for CLT may be that no unique bearer has been identified.

\subsection{Failure Condition III: The Reverse Mirror Diagnosis Fails}
\label{sec:reverse-mirror-failure}

The preceding two failure conditions concern CLT-I and CLT-II. A third,
more limited vulnerability concerns the paper's epistemic diagnosis of
contemporary consciousness attribution.

The Reverse Mirror account predicts that anthropomorphic surface features
can alter consciousness judgments while liability structure remains
fixed. It also predicts that hidden changes in causal liability can fail
to produce corresponding changes in human attribution when the
conversational surface is held constant.

These are direct experimental claims.

Suppose preregistered studies repeatedly found that manipulations of
first-person language, autobiographical continuity, emotional
self-description, apparent metacognition, names, avatars, and relational
framing produced little or no systematic effect on consciousness
judgment. Suppose, further, that observers reliably tracked changes in
the candidate's hidden causal organization even when those changes were
masked by behaviorally matched interfaces.

Such findings would undermine the paper's account of projective
introspection and weaken the claim that contemporary consciousness
judgment is substantially driven by the reflective structure diagnosed
by the AI Consciousness Fallacy.

\begin{formalbox}
\centering
\textbf{Reverse Mirror Failure Condition}

\vspace{0.5em}

\textit{If consciousness attribution proves largely invariant under
anthropomorphic surface manipulation and instead reliably tracks hidden
changes in liability organization, the proposed role of projective
introspection in machine-consciousness judgment is substantially
weakened.}
\end{formalbox}

This failure would not by itself refute CLT-I or CLT-II. It would refute
an important empirical claim surrounding them. That separation is
deliberate. A constitutive theory should not obtain artificial support
from a successful theory of human misattribution, and a theory of
misattribution should remain independently testable even if the
constitutive theory later fails.

\subsection{The Asymmetry of the Evidential Burden}
\label{sec:evidential-asymmetry}

The three failure conditions expose an important asymmetry.

The existence of liability is, in principle, a third-person causal fact.
Whether a discrimination alters later discrimination, whether a
constitutive causal lineage persists, whether reconstruction interrupts
that lineage, and whether a liability kernel survives causal
coarse-graining are all questions about physical organization.

Phenomenality presents a different epistemic problem. The existence of
experience is directly available only from the first-person perspective
of the subject whose experience is at issue. Third-person science must
therefore infer phenomenality through validated relations among report,
behavior, physiology, intervention, and theory.

CLT does not escape that condition.

Accordingly,

\[
\mathcal{L}(P)=1
\]

can be an experimentally assessable claim even when

\[
\mathsf{PhenSubject}(P)=1
\]

remains theory-mediated.

This is why CLT-I and CLT-II must remain distinguishable. Discovering
liability closure in an engineered process would establish that the
process satisfies the proposed bearer criterion. It would provide
evidence for phenomenality under CLT-II. The inference from one to the
other would still inherit uncertainty concerning the constitutive theory.

Conversely, failure to find liability closure in a carefully audited
architecture would directly count against CLT-I classification of that
architecture. Under CLT-II it would also generate a conditional
phenomenal prediction:

\[
\mathcal{L}(P)=0
\quad
\overset{\mathrm{CLT-II}}{\Longrightarrow}
\quad
\neg\mathsf{PhenSubject}(P).
\]

The arrow is theoretical. The architectural premise is empirical.

That distinction should govern claims about present generative systems.
Their phenomenal status cannot be read directly from output, and CLT
should not infer their causal organization from model family names.
Architectural exclusion requires independent evidence concerning
constitutive causal continuity, endogenous recurrence, persistence
mechanisms, actual reconstruction, and inheritance of consequential
state.

\subsection{An Adversarial Programme for CLT}
\label{sec:adversarial-clt}

The appropriate experimental programme is therefore adversarial.

Researchers sympathetic to CLT should specify in advance:

\begin{enumerate}
    \item the causal domain within which bearer candidates will be sought;
    \item the operational criteria for
    \(\mathcal{C}\), \(\mathcal{E}\), \(\mathcal{R}\), and
    \(\mathcal{N}\);
    \item the temporal horizons over which recursive self-consequence
    will be evaluated;
    \item the intervention set, control protocol, and probability distance used to estimate changes in future-state distributions;
    \item the admissible family of causal coarse-grainings;
    \item the rule by which competing liability kernels will be compared;
    \item the consciousness evidence treated as independent of CLT.
\end{enumerate}

Critics should then be invited to construct the strongest countercases.

One class of experiment should maximize conventional consciousness
evidence while minimizing or removing liability. Another should maximize
liability while minimizing familiar anthropomorphic and
consciousness-signalling properties. A third should search deliberately
for nested and overlapping liability kernels. A fourth should manipulate
human ascription while both architecture and liability remain fixed.

The methodological precedent of preregistered adversarial collaboration
provides a useful model \parencite{cogitate2025adversarial}. The purpose
is not to stage a competition in which one theory must survive every
possible result. It is to force competing accounts to expose where their
predictions diverge before the data are known.

For CLT, the most informative experimental space is therefore
two-dimensional:

\[
\begin{array}{c|c|c}
&
\textbf{Low liability}
&
\textbf{High liability}
\\
\hline
\textbf{Strong conventional consciousness evidence}
&
\begin{array}{c}
\text{challenge to}\\
\text{CLT-II}
\end{array}
&
\begin{array}{c}
\text{convergent support}\\
\text{for CLT-II}
\end{array}
\\[1.0em]
\hline
\textbf{Weak conventional consciousness evidence}
&
\begin{array}{c}
\text{expected}\\
\text{non-bearer case}
\end{array}
&
\begin{array}{c}
\text{alien-subject}\\
\text{test case}
\end{array}
\end{array}
\]

The off-diagonal cases are especially important. A system with strong
conventional consciousness evidence and experimentally absent liability
would place pressure on CLT-II. A system with robust liability and weak
anthropomorphic evidence would test whether researchers are willing to
take the theory's substrate neutrality seriously.

No single experiment can settle the metaphysics of consciousness. A
theory can nevertheless become increasingly constrained by the pattern
of cases it must explain.

CLT therefore places three things at risk.

Its account of human attribution fails if the Reverse Mirror diagnosis
does not survive controlled experiments. Its theory of bearer
individuation fails if liability closure cannot recover a stable physical
subject across scales. Its Constitutive Conjecture loses
credibility if independently well-supported conscious states persist
after liability has been systematically removed.

\begin{formalbox}
\centering
\textbf{The Empirical Commitment of CLT}

\vspace{0.5em}

\textit{Causal liability must be independently measurable, bearer
classifications must remain stable across prospectively declared
defensible audit variations, and CLT-II must remain vulnerable to
convergent counterevidence from independently credible cases of
consciousness.}
\end{formalbox}

A theory that survives such attacks becomes more credible.

A theory that preserves itself by relocating liability, redrawing the
bearer, or discounting every contrary consciousness measure only after a
counterexample appears has ceased to place anything at risk.

\section{Conclusion: After Artificial Intelligence}
\label{sec:conclusion}

The question with which this paper began contained a hidden metaphysics.
``Can artificial intelligence be conscious?'' places a disputed predicate
beside a subject term whose referent has rarely been individuated. Once
that grammar is dismantled, the philosophical problem changes. Generative
sequence models, conversational personas, engineered control stacks, and
physically continuing processes cease to occupy the same ontological
category. Consciousness attribution must then answer a coupled
individuation question: \emph{where is the bearer?}

The answer proposed here has two layers. CLT-I, the Bearer Thesis, offers
a theory of bearer individuation: liability closure identifies a
continuing physical process in which constitutive causal continuity,
endogenous discrimination, recursive self-consequence, and non-delegable
inheritance jointly obtain. Its discriminations constrain futures that
the same causal history must subsequently inherit. CLT-II, the
Constitutive Conjecture, advances the stronger claim that liability
closure is necessary and sufficient for minimal phenomenal subjecthood.
The first claim admits direct causal investigation. The second remains a
metaphysical conjecture whose credibility must ultimately depend on
convergence with cases in which consciousness is independently well
supported.

This distinction changes the evidential landscape of machine
consciousness. First-person language can arise through phenomenal
ancestry, carrying statistical traces of human interiority into systems
whose own phenomenality remains unestablished. Anthropomorphic interfaces
amplify this effect. Persistent names, emotional vocabulary,
autobiographical framing, metacognitive language, and apparent
self-concern can all increase consciousness attribution while leaving
the constitutive causal organization untouched
\parencite{colombattofleming2024,kangetal2026}. The mirror therefore
becomes more persuasive precisely where its evidential independence
becomes weakest. Vallor's image of generative systems as mirrors of human
thought acquires a metaphysical consequence here: reflection can
reproduce the form of interiority without settling the existence of
another interior locus \parencite{vallor2024mirror}.

The Reverse Mirror makes the resulting distinction sharper. A successor
may reproduce weights, context, computational state, and output with
arbitrarily high fidelity while failing to inherit the constitutive
history that produced the state from which it resumes. Computational
equivalence therefore does not settle bearer identity. Copyability does
not imply delegability. What matters for CLT-I is whether the consequences
of an endogenous discrimination remain bound to the continuing process
that generated them, rather than whether another process can later be
placed into an equivalent state.

Theory-derived indicators remain valuable because they constrain
plausible mechanisms and improve disciplined inference
\parencite{butlinetal2026}. Behavioral evidence remains indispensable
because consciousness must eventually explain observable organization
rather than float free of it \parencite{palminteriw2026}. CLT adds the
question those approaches leave open: \emph{which continuing physical
process inherits the consequences generated by the organization under
investigation?} The computational programme developed here operationalizes
that question through counterfactual discrimination, future-state
divergence, causal mediation, carrier search, persistence controls, and
process reconstruction. These experiments probe distinctions asserted by
CLT-I. They do not constitute evidence that any tested model is
phenomenally conscious, and they do not independently establish CLT-II.

That distinction also changes what we should expect from engineered
systems. Ordinary generative deployments satisfying the causal
conditions of the Resettable Mirror class lack liability closure under
CLT, regardless of the persuasiveness of their first-person language.
The claim is conditional on their causal organization, rather than on
their substrate, architecture, or behavioral sophistication. Future
engineered systems could leave that class through continual learning,
embodied self-modification, persistent endogenous governance,
distributed liability, or other forms of recursively inherited causal
organization. Whether liability closure in such systems would constitute
phenomenality is exactly the hypothesis CLT-II places at risk.

The ethical consequence is therefore precautionary rather than
exclusive. Manufactured signals of suffering should not acquire decisive
weight merely because they are linguistically persuasive. Evidence that
an engineered system has developed a robust liability-bearing causal
history should nevertheless increase concern, especially when it
converges with independent consciousness-relevant indicators. Under
theory uncertainty, no single proposed marker should monopolize the
moral threshold.

The deepest mistake of the present debate may therefore concern
ownership. We have treated the language of experience as though
experience travelled with the language. Causal Liability Theory relocates
the first person at the point where present discrimination acquires a
future that one continuing causal history must inherit.

\begin{center}
\fbox{%
\parbox{0.88\linewidth}{%
\centering
\textit{A mirror can inherit a voice.}\\[0.4em]
\textit{A subject must inherit a future.}
}}
\end{center}

If CLT is correct, engineered consciousness will begin when an
engineered process first acquires a future whose consequences belong to
one continuing causal history in a way no substitute can inherit for it.


\section*{Declarations}





\textbf{Code availability:} 
The code supporting the experiments in this study is available online at \url{https://github.com/akhadangi/clt}.

\clearpage
\printbibliography

\clearpage
\appendix
\pagestyle{appendixstyle}

\section{Mathematical Foundations of Causal Liability Theory}
\label{app:formal-clt}

This appendix gives a mathematical formulation of Causal Liability Theory
(CLT). The formalism has three tasks. First, it identifies
causal structures that satisfy the proposed bearer criterion. Second, it
provides boundary-nonpresupposing, audit-explicit rules for individuating
candidate bearers.
Third, it derives conditional exclusion and comparison results for
engineered systems.

The distinction between CLT-I and CLT-II is therefore maintained
throughout. CLT-I is the \emph{Bearer Thesis}: liability closure is a
proposed criterion for locating and individuating a candidate phenomenal
bearer. CLT-II is the stronger \emph{Constitutive Conjecture}: liability closure
is necessary and sufficient for minimal phenomenal subjecthood. The
conjecture is applied to the outputs of the formalism; it is not a theorem
derived from them.

The core mathematics combines interventionist causal modelling,
reachability and viability analysis, time-unfolded directed graphs, and
process equivalence. These tools have independent mathematical histories
in causal inference, control theory, and concurrency theory
\parencite{pearl2009causality,aubin1991viability,milner1989communication}.
Two further mathematical languages, differential geometry and sheaf
theory, are also discussed later \parencite{kobayashinomizu1963foundations,maclanemoerdijk1992sheaves}.

\subsection{Causal Domains Without a Presupposed Subject Boundary}
\label{app:controlled-processes}

The individuation problem should not be solved by building the desired
subject into the notation. We therefore begin with a causal domain that
is deliberately wider than any presumed bearer.

\begin{cltdefinition}[Experimental causal domain]
\label{def:causal-domain}

An experimental causal domain is a tuple
\begin{equation}
\label{eq:causal-domain}
\mathfrak{D}
=
\left\langle
\mathcal{X},
\mathcal{Y},
\mathcal{W},
\mathcal{D},
\mathbf{K},
\Gamma
\right\rangle,
\end{equation}
where:
\begin{enumerate}[label=(\roman*)]
    \item $\mathcal{X}$ is the physical state space of the domain;
    \item $\mathcal{Y}$ is the environmental state space;
    \item $\mathcal{W}$ is the space of experimentally or externally imposed
          interventions;
    \item $\mathcal{D}$ is the set of discrimination variables represented
          in the domain;
    \item $\mathbf{K}$ is a transition kernel;
    \item $\Gamma$ is a time-unfolded directed causal graph whose vertices
          are physical variables indexed by time.
\end{enumerate}
For a stochastic system,
\begin{equation}
\label{eq:transition-kernel}
X_{t+1}
\sim
\mathbf{K}_{t}
\left(
\,\cdot
\mid
X_t,E_t,W_t
\right).
\end{equation}
A deterministic system is the degenerate case in which
$\mathbf{K}_t$ concentrates all probability mass on one successor state.
\end{cltdefinition}

The domain $\Gamma$ may include a model, processor state, local memory,
remote memory, sensors, actuators, schedulers, recurrent controllers,
shared environments, or any other variables whose causal relevance is
under investigation. Their inclusion in the experimental domain does
not imply their inclusion in a subject.

A \emph{candidate region} is a temporally extended subgraph
\[
B\subseteq\Gamma.
\]
The bearer search will range over such regions. Internal and external
status are therefore relative to a candidate $B$ and are never stipulated
before the search begins.

For a candidate region $B$, write
\begin{equation}
\label{eq:region-state}
X_t^{B}
=
\left(
R_t^{B},
G_t^{B},
Q_t^{B}
\right),
\end{equation}
where $R_t^{B}$ denotes representational or task state,
$G_t^{B}$ denotes \emph{governance state}, meaning variables participating
in the generation of later discriminations, and $Q_t^{B}$ denotes
persistent capacities, resources, or constraints relevant to the future
operation of $B$.

Define the self-relevant state of $B$ by
\begin{equation}
\label{eq:self-state}
S_t^{B}
\coloneqq
\left(
G_t^{B},
Q_t^{B}
\right).
\end{equation}

This decomposition is functional rather than semantic. A variable belongs
to $S_t^{B}$ only when intervention on that variable can alter the causal
conditions under which the region later discriminates or continues. A
stored sentence saying ``I have changed'' need not belong to $S_t^{B}$ if
it has no such causal role.

Let
\begin{equation}
\label{eq:history}
H_t
=
\left(
X_0,E_0,W_0,
X_1,E_1,W_1,
\ldots,
X_t
\right)
\end{equation}
denote the realized physical history of the experimental domain up to
time $t$.

\subsection{Endogenous Discrimination and Intervention}
\label{app:endogenous-discrimination}

CLT requires a distinction among events that merely occur in a system,
events externally imposed upon it, and discriminations generated by its
own causal organization.

\begin{cltdefinition}[Endogenous discrimination relative to a region]
\label{def:endogenous-discrimination}

Let $D_t\in V(B)$ be a variable with at least two physically accessible
values $d_1,d_2$. The variable constitutes an endogenous discrimination
for $B$ when both of the following hold.

First, its realized value is generated by a mechanism with at least one
causally efficacious parent in the prior or concurrent state of $B$,
rather than being wholly fixed by an intervention variable in
$\mathcal{W}$.

Second, there exist $d_1,d_2$, a horizon $\tau>0$, and a measurable
event $A$ in the future state space of $B$ such that
\begin{align}
&
\Prob
\left(
X_{t+\tau}^{B}\in A
\mid
do(D_t=d_1),H_t
\right)
\nonumber\\
&\qquad\neq
\Prob
\left(
X_{t+\tau}^{B}\in A
\mid
do(D_t=d_2),H_t
\right).
\label{eq:endogenous-discrimination}
\end{align}
\end{cltdefinition}

The intervention notation has its ordinary structural-causal meaning
\parencite{pearl2009causality,woodward2003making}. The first condition
prevents an externally scripted selector from counting as endogenous
merely because its outputs have downstream effects. The second prevents a
causally idle internal variable from counting as a discrimination.

Define the indicator
\begin{equation}
\label{eq:E-indicator}
\mathcal{E}(B;[t,T])
=
\mathds{1}
\left[
\exists D_u\in B,\;
u\in[t,T],
\text{ satisfying Definition~\ref{def:endogenous-discrimination}}
\right].
\end{equation}

\subsection{Reachability, Liability Effect, and Self-Foreclosure}
\label{app:viability}

A discrimination matters to CLT when alternative resolutions produce
different futures for the same candidate region.

For a candidate region $B$, let
$\mathcal{B}_{B}\subseteq\mathcal{X}^{B}$ denote a continuation domain,
meaning the region of state space in which $B$ retains the organization
required for the endogenous activity under investigation. For tolerance
$\alpha\in[0,1]$, define the stochastic viability kernel
\parencite{aubin1991viability}
\begin{align}
\Viab^{\alpha}_{B}
\left(
\mathcal{B}_{B},t
\right)
=
\Bigl\{
x\in\mathcal{B}_{B}
:\;&
\exists\pi\in\Policies_{B}
\text{ such that}
\nonumber\\
&
\Prob^{\pi}
\left(
X_s^{B}\in\mathcal{B}_{B}
\;\forall s\in[t,T]
\mid
X_t^{B}=x
\right)
\geq 1-\alpha
\Bigr\}.
\label{eq:viability-kernel}
\end{align}

Viability is useful but insufficient. A viable system may repeatedly
return to an externally specified set point without its own discriminations
changing the structure of later choice. CLT therefore evaluates counterfactual future distributions rather than
the support of those distributions. This distinction matters in noisy or
continuous systems, where two distributions can possess identical support
while assigning radically different probability mass to their possible
futures.

\begin{cltdefinition}[Interventional future law]
\label{def:modal-envelope}

For candidate region \(B\), history \(H_t\), intervention
\(do(D_t=d)\), horizon \(\tau\), and prospectively specified control
protocol \(\pi\), define

\begin{equation}
\label{eq:conditional-envelope}
\mathbb{P}^{\,d,\pi}_{B,t,\tau}
\coloneqq
\Prob^{\pi}
\left(
S_{t+\tau}^{B}\in\cdot
\mid
do(D_t=d),H_t
\right).
\end{equation}

The unintervened future law is

\begin{equation}
\label{eq:modal-envelope}
\mathbb{P}^{\,\pi}_{B,t,\tau}
\coloneqq
\Prob^{\pi}
\left(
S_{t+\tau}^{B}\in\cdot
\mid
H_t
\right).
\end{equation}
\end{cltdefinition}

Let \(D_{\mathrm{prob}}\) be a prospectively selected probability metric
or divergence on the relevant future-state distributions, normalized to
\([0,1]\) when necessary. The choice should reflect the geometry of the
state space and be declared before examining the target comparison.
Wasserstein distance is natural when the state space carries a meaningful
metric; total-variation or Jensen--Shannon-type divergences may be more
appropriate for discrete or mixed state spaces. Robustness should be
checked across more than one defensible choice when feasible.

\begin{cltdefinition}[Counterfactual liability depth]
\label{def:liability-effect}

For resolutions \(d_1,d_2\),

\begin{equation}
\label{eq:liability-effect}
\LiabDepth_{D}
\left(
B;d_1,d_2,t,\tau\mid\pi
\right)
\coloneqq
D_{\mathrm{prob}}
\left(
\mathbb{P}^{\,d_1,\pi}_{B,t,\tau},
\mathbb{P}^{\,d_2,\pi}_{B,t,\tau}
\right).
\end{equation}
\end{cltdefinition}

When the control protocol \(\pi\) is fixed by the audit and no ambiguity
results, we suppress the explicit conditioning on \(\pi\) and write

\[
\LiabDepth_D
\left(
B;d_1,d_2,t,\tau
\right)
\]

for
\(
\LiabDepth_D
\left(
B;d_1,d_2,t,\tau\mid\pi
\right).
\)

When \(D_{\mathrm{prob}}\) is normalized,

\begin{equation}
\label{eq:liability-effect-range}
0
\leq
\LiabDepth_{D}
\leq
1.
\end{equation}

The quantity is horizon-relative, intervention-relative, and
metric-relative. It measures how strongly the resolution of a present
discrimination changes the probability distribution over later
self-relevant states. It does not measure phenomenality.

For cases in which the theoretical interest specifically concerns loss
of viable possibilities, define a one-sided probability-mass
foreclosure statistic. Let \(\mathcal{B}_B\) denote the continuation
domain introduced above:

\begin{equation}
\label{eq:self-foreclosure}
\Foreclose_B
\left(
d;t,\tau\mid\pi
\right)
\coloneqq
\max
\left\{
0,\,
\Prob^{\pi}
\left(
S_{t+\tau}^{B}\in\mathcal{B}_B
\mid H_t
\right)
-
\Prob^{\pi}
\left(
S_{t+\tau}^{B}\in\mathcal{B}_B
\mid do(D_t=d),H_t
\right)
\right\}.
\end{equation}

Positive \(\Foreclose_B\) represents a reduction in probability mass
assigned to viable continuation. The broader liability depth
\(\LiabDepth_D\) also detects redirection, redistribution, or expansion
of later possibilities and is therefore the primary operational quantity.

\subsection{Recursive Self-Consequence Across Temporal Scales}
\label{app:recursive-self-consequence}

The temporal condition in CLT does not require long-term learning. A
brief conscious episode can contain recursive self-consequence if an
earlier discrimination changes a transient state that participates in a
later discrimination.

\begin{cltdefinition}[Recursive self-consequence]
\label{def:recursive-self-consequence}

An endogenous discrimination $D_t$ has recursive self-consequence in
region $B$ when there exist times
\[
t<u<v\leq T
\]
such that the time-unfolded graph contains a directed path
\begin{equation}
\label{eq:recursive-path}
D_t
\rightsquigarrow
S_u^{B}
\rightsquigarrow
D_v,
\end{equation}
and there exist $d_1,d_2$ and a horizon $\tau$ for which
\begin{equation}
\label{eq:positive-liability-effect}
\LiabDepth_{D}
\left(
B;d_1,d_2,t,\tau
\right)
>
0.
\end{equation}
\end{cltdefinition}

Define
\begin{equation}
\label{eq:R-indicator}
\mathcal{R}(B;[t,T])
=
\mathds{1}
\left[
\text{Definition~\ref{def:recursive-self-consequence} is satisfied}
\right].
\end{equation}

The state $S_u^{B}$ may be transient. It can consist of recurrent neural
activity, working memory, short-lived adaptation, neuromodulatory state,
temporary resource depletion, bodily state, or durable plasticity.
Accordingly, recursive self-consequence may hold even when
\begin{equation}
\label{eq:long-horizon-reset}
S_{t+T}^{B}
\approx
S_t^{B}
\end{equation}
for a much longer horizon $T$.

This formulation avoids identifying consciousness with continual
learning. Continual learning can be one mechanism that realizes
$\mathcal{R}$, but it is not logically required.

\begin{cltlemma}[Representational recurrence is insufficient]
\label{lem:representational-recurrence}

Suppose that interventions on every relevant discrimination $D_t$ can
change only representational variables $R^{B}$ and produce no
interventionally detectable change in any governance or capacity variable
that participates in a later discrimination. Then
\[
\mathcal{R}(B;[t,T])=0.
\]
\end{cltlemma}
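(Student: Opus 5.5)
The plan is to argue by contradiction, using only the two clauses of Definition~\ref{def:recursive-self-consequence}. Suppose $\mathcal{R}(B;[t,T])=1$. Then some endogenous discrimination $D_t$ admits times $t<u<v\le T$, a directed path $D_t\rightsquigarrow S_u^{B}\rightsquigarrow D_v$, and resolutions $d_1,d_2$ and a horizon $\tau$ with $\LiabDepth_D(B;d_1,d_2,t,\tau)>0$. Both clauses are stated in terms of the self-relevant state $S^{B}=(G^{B},Q^{B})$ from \eqref{eq:self-state}, which excludes the representational component $R^{B}$. The hypothesis of the lemma removes exactly the interventional effect on these variables.

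The key step concerns the liability-depth clause. By Definition~\ref{def:liability-effect}, $\LiabDepth_D$ compares the two interventional laws $\mathbb{P}^{\,d_i,\pi}_{B,t,\tau}$ of $S^{B}_{t+\tau}$ from Definition~\ref{def:modal-envelope}. The hypothesis says that no intervention on $D_t$ produces an interventionally detectable change in any governance or capacity variable that participates in a later discrimination. I will read this as the distributional statement that, for every admissible $\tau$ and every pair $d_1,d_2$, the laws $\mathbb{P}^{\,d_1,\pi}_{B,t,\tau}$ and $\mathbb{P}^{\,d_2,\pi}_{B,t,\tau}$ coincide on those coordinates of $S^{B}$. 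If the components of $S^{B}$ that participate in no later discrimination may be discarded, the two laws are equal. Since $D_{\mathrm{prob}}$ is a metric or divergence, equality forces $\LiabDepth_D=0$, which contradicts \eqref{eq:positive-liability-effect}.

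I also need to handle the path clause. A path $D_t\rightsquigarrow S_u^{B}\rightsquigarrow D_v$ means that $S_u^{B}$ participates in the later discrimination $D_v$. Any intervention-sensitive coordinate on this route must therefore be governance or capacity. Under the hypothesis, the influence of $D_t$ on later discriminations can run only through $R^{B}$. A route through $R^{B}$ does not witness the definition, because the definition requires the path to pass through $S^{B}$.

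The main obstacle is the gap between what the definition measures and what the hypothesis constrains. $\LiabDepth_D$ is taken over all of $S^{B}_{t+\tau}$. The hypothesis, however, only constrains governance and capacity variables that participate in a later discrimination. A coordinate of $S^{B}$ might change detectably while never feeding a later discrimination. I would close this gap with the remark after \eqref{eq:self-state}: a variable belongs to $S^{B}$ only when intervention on it can alter the conditions of later discrimination or continuation. Non-participating coordinates are thereby excluded from $S^{B}$ by construction.

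A second subtlety is that "interventionally detectable" must mean equality of interventional laws, not merely equality of sampled values. I would state this reading explicitly. With it, the lemma reduces to the fact that $D_{\mathrm{prob}}(\mu,\mu)=0$.
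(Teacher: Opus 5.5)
Your argument is correct in outline but takes a different route from the paper, and one justification needs replacing. The paper refutes the \emph{path} conjunct of Definition~\ref{def:recursive-self-consequence}. The path $D_t\rightsquigarrow S_u^{B}\rightsquigarrow D_v$ must pass through a causally efficacious self-relevant state that feeds $D_v$. That is exactly a ``governance or capacity variable that participates in a later discrimination,'' so $D_t$ cannot move it and the path is absent. You instead refute the \emph{liability-depth} conjunct, showing $\LiabDepth_D=0$ for all $d_1,d_2,\tau$; since $\mathcal{R}=1$ needs both conjuncts, either refutation suffices. Your route gains because $\LiabDepth_D$ is already interventional, so the hypothesis applies through $D_{\mathrm{prob}}(\mu,\mu)=0$. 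The paper's proof, by contrast, implicitly treats a directed path in the time-unfolded graph as one carrying a detectable interventional effect, which is a faithfulness-type step. The paper's route gains an exact fit with the qualifier ``participates in a later discrimination.'' Your route loses that fit, because $\LiabDepth_D$ is computed on the joint law of all of $S^{B}_{t+\tau}=(G^{B},Q^{B})$. Your patch for this does not work: the remark after \eqref{eq:self-state} admits into $S^{B}$ any variable that alters how the region ``later discriminates \emph{or continues}.'' A capacity variable relevant only to continuation therefore lies in $S^{B}$ without participating in any later discrimination. Close the gap instead with the first clause of the hypothesis (``can change only representational variables $R^{B}$''), read as invariance of the joint interventional law of $(G^{B}_{t+\tau},Q^{B}_{t+\tau})$ at every horizon; that gives equality of the two laws directly. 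Your separate paragraph on the path clause then becomes unnecessary, and you should drop it. Its claim that influence on later discriminations runs ``only through $R^{B}$'' also conflicts with the paper's convention that a representational variable feeding a later discrimination is reclassified into $G^{B}$.
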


\begin{proof}
Definition~\ref{def:recursive-self-consequence} requires a directed path
from $D_t$ through a causally efficacious self-relevant state
$S_u^{B}=(G_u^{B},Q_u^{B})$ into a later discrimination $D_v$. By
hypothesis, intervention on $D_t$ produces no change in any such
mediating variable. The required path is absent, so the recursive
self-consequence predicate fails.
\end{proof}

The lemma does not say that representational variables are intrinsically
irrelevant. If a representational variable participates causally in the
generation of a later discrimination, it belongs functionally in
$G^{B}$ for the purpose of that analysis.

\subsection{Constitutive Causal Continuity, Branching, and Reconstruction}
\label{app:lineage}

CLT requires a physical continuity relation that does not presuppose
numerical identity. The primitive relation is therefore causal descent,
rather than ``being the same token.''

Let \(B_{t_i}\) and \(B_{t_{i+1}}\) denote temporally adjacent
realizations of candidate region \(B\).

\begin{cltdefinition}[Constitutive causal bridge]
\label{def:constitutive-bridge}

A constitutive causal bridge
\[
B_{t_i}
\leadsto_{\!c}
B_{t_{i+1}}
\]
exists when all of the following hold:

\begin{enumerate}[label=(\roman*)]

    \item there exist causally relevant variables
    \(Z_i\subseteq X_{t_i}^{B}\) and
    \(Z_{i+1}\subseteq X_{t_{i+1}}^{B}\);

    \item intervention on \(Z_i\), with the relevant exogenous conditions
    held fixed, changes the distribution of \(Z_{i+1}\);

    \item at least one directed causal path from \(Z_i\) to \(Z_{i+1}\)
    crosses the temporal cut through physical variables contained in \(B\);
    
    \item there is no temporal cut at which every such within-\(B\) causal
    path terminates and the later realization is generated solely from
    variables in \(\Gamma\setminus B\).

\end{enumerate}
\end{cltdefinition}

This definition requires causal propagation rather than persistence of
particular material components. The variables carrying a bridge may
change completely between two times. Gradual molecular replacement,
neuronal turnover, hardware substitution, or process migration therefore
need not interrupt continuity when each transition is generated through
the preceding constitutive state.

Suspension likewise need not terminate continuity. A dormant physical
state can carry the relevant causal organization across an interval and
later participate in its resumption. The decisive question is whether
the later realization descends through that physical carrier or is
re-originated from an independently maintained description.

\begin{cltdefinition}[Constitutive causal lineage]
\label{def:constitutive-lineage}

A constitutive causal lineage of \(B\) over
\([t_0,t_n]\) is a sequence

\begin{equation}
\label{eq:lineage}
\ell_B
=
\left(
B_{t_0},
B_{t_1},
\ldots,
B_{t_n}
\right)
\end{equation}

such that

\[
B_{t_i}
\leadsto_{\!c}
B_{t_{i+1}}
\]

for every adjacent pair in the sequence.
\end{cltdefinition}

Numerical lineage is therefore derived from a chain of physical causal
bridges. It is not assumed in the definition.

A branching event may generate two later bridge-connected chains:

\[
B_{t_f}
\longrightarrow
\left\{
B_{t_f}^{(1)},
B_{t_f}^{(2)}
\right\}.
\]

After causal independence is established, the descendants instantiate
distinct post-branch lineages even when their informational states are
initially identical.

\begin{cltdefinition}[Reconstruction event]
\label{def:replacement-event}

A reconstruction event occurs across a temporal cut \(u\) when:

\begin{enumerate}[label=(\roman*)]

    \item no constitutive causal bridge carries the candidate's relevant
    physical organization across the cut;

    \item a later realization \(\widetilde B\) is initialized from
    informational state preserved outside the interrupted descendant
    chain;

    \item application-level behavior, function, or persona continuity is
    thereafter carried by \(\widetilde B\).

\end{enumerate}
\end{cltdefinition}

The mere existence of a backup, copy, or restoration capability is not a
reconstruction event. What matters is the actual causal route by which
the later realization comes into existence.

Define constitutive causal continuity by

\begin{equation}
\label{eq:C-indicator}
\mathcal{C}(B;[t,T])
=
\mathds{1}
\left[
\text{a constitutive causal lineage of \(B\) spans }[t,T]
\right].
\end{equation}

Thus \(\mathcal{C}\) is an interventionally grounded property of physical
causal propagation. It does not require an independent judgment that the
states belong to ``the same token.''

\begin{formalbox}
\centering

\[
\text{informational persistence}
\quad\not\Rightarrow\quad
\text{constitutive causal continuity}.
\]

\smallskip

A copied description may reproduce a state. Constitutive continuity
concerns the physical causal chain by which one realization generates the
next.

\end{formalbox}

\subsection{Non-Delegable Inheritance as Lineage Essentiality}
\label{app:nondelegability}

The earlier formulation of non-delegability in terms of a fixed
``internal/external'' partition risks presupposing the very boundary CLT
is supposed to discover. We therefore define non-delegability relative to
a candidate region selected from the wider domain.

For $B\subseteq\Gamma$, let
\[
\mathcal{P}_{B}(D_t,D_v)
\]
denote directed paths lying in $B$ from an endogenous discrimination
$D_t$ to a later discrimination $D_v$ through at least one
self-relevant state $S_u^{B}$.

\begin{cltdefinition}[Lineage-essential recursive path]
\label{def:lineage-essential-path}

A path
\[
\rho
\in
\mathcal{P}_{B}(D_t,D_v)
\]
is lineage-essential when:
\begin{enumerate}[label=(\roman*)]
    \item $\rho$ lies on one bridge-connected constitutive causal lineage
      $\ell_B$ across the relevant interval;
    \item under the actual dynamics, intervention on $D_t$ changes the
          distribution or reachable set of $D_v$ through $\rho$;
    \item an intervention that terminates the descendant segment of
          $\ell_B$ carrying $\rho$, while preserving the remaining
          scaffold variables in $\Gamma\setminus B$, removes that causal
          dependence;
    \item if the bridge-connected descendant chain is interrupted,
      reconstructing an informationally equivalent realization from
      variables in $\Gamma\setminus B$ begins a new causal lineage
      rather than restoring the interrupted chain.
\end{enumerate}
\end{cltdefinition}

Condition (iii) makes the candidate lineage causally essential rather
than merely present. Condition (iv) distinguishes inheritance from state
reproduction.

\begin{cltdefinition}[Non-delegable inheritance]
\label{def:nondelegable}

A candidate region $B$ exhibits non-delegable inheritance on $[t,T]$ when
there exists at least one lineage-essential recursive path witnessing the
causal effect of an endogenous discrimination on a later discrimination
within the same constitutive lineage.
\end{cltdefinition}

Define
\begin{equation}
\label{eq:N-indicator}
\mathcal{N}(B;[t,T])
=
\mathds{1}
\left[
B
\text{ satisfies Definition~\ref{def:nondelegable}}
\right].
\end{equation}

This formulation makes actual causal continuation decisive. A process may
be perfectly copyable and still satisfy $\mathcal{N}$. Conversely, a
service may display uninterrupted application-level identity while
failing $\mathcal{N}$ because its consequential history is repeatedly
reconstructed in successor tokens.

For empirical graph analysis, the same condition admits a cut-based
diagnostic once a candidate $B$ has been proposed. Let
\[
V_{\mathrm{out}}(B)
=
V(\Gamma)\setminus V(B).
\]
An \emph{external offload cut relative to $B$} is a set
$C\subseteq V_{\mathrm{out}}(B)$ that intersects every causal path from
$D_t$ to $D_v$ in the full domain $\Gamma$ after all paths internal to the
actual lineage of $B$ have been removed.

Define
\begin{equation}
\label{eq:relative-external-cut}
\kappa_{\mathrm{out}}^{B}
\left(
D_t\rightarrow D_v
\right)
=
\min
\left\{
|C|:
C\subseteq V_{\mathrm{out}}(B)
\text{ separates the remaining paths}
\right\},
\end{equation}
with value $\infty$ when no such cut exists.

The cut number is an audit statistic, not the definition of
non-delegability. It is useful for identifying how strongly a putative
history depends on scaffolding outside the candidate region.

\subsection{Witness Coherence and Liability Closure}
\label{app:closure}

The four CLT conditions must belong to one causal structure. Merely finding constitutive causal continuity somewhere, recurrence somewhere else, and
endogenous discrimination in a third component would reproduce the
Mereological Quantifier Shift inside the formalism.

\begin{cltdefinition}[Coherent CLT witness]
\label{def:coherent-witness}

A coherent CLT witness for region $B$ on $[t,T]$ is a tuple
\begin{equation}
\label{eq:coherent-witness}
\mathcal{W}_{B}
=
\left(
D_t,
S_u^{B},
D_v,
\rho,
\ell_B
\right)
\end{equation}
such that:
\begin{enumerate}[label=(\roman*)]
    \item $D_t$ is an endogenous discrimination of $B$;
    \item $D_t\rightsquigarrow S_u^{B}\rightsquigarrow D_v$;
    \item the relevant counterfactual liability effect is positive;
    \item $\rho$ is a lineage-essential recursive path;
    \item the same constitutive lineage $\ell_B$ spans the entire witness.
\end{enumerate}
\end{cltdefinition}

\begin{cltdefinition}[Liability closure]
\label{def:liability-closure}

For candidate region $B$ over $[t,T]$, define
\begin{equation}
\label{eq:closure-functional}
\Liab(B;[t,T])
\coloneqq
\mathcal{C}(B;[t,T])
\mathcal{E}(B;[t,T])
\mathcal{R}(B;[t,T])
\mathcal{N}(B;[t,T])
\mathds{1}
\left[
\exists\mathcal{W}_{B}
\text{ coherent}
\right].
\end{equation}

The region reaches liability closure exactly when
\begin{equation}
\label{eq:closure-equals-one}
\Liab(B;[t,T])=1.
\end{equation}
\end{cltdefinition}

The Boolean product in Equation~\ref{eq:closure-functional} is
deliberately strict. Each factor is necessary, and the coherent-witness
term prevents the factors from being satisfied by unrelated components.

\begin{formalbox}
\centering
\textbf{CLT-I: Formal Bearer Criterion}

\vspace{0.45em}

\[
\Liab(B;[t,T])=1
\]

identifies $B$ as a \emph{liability-bearing subject candidate}. This is a
classification result about causal organization.
\end{formalbox}

CLT-II adds the constitutive conjecture
\begin{formalbox}
\centering
\textbf{CLT-II: Constitutive Phenomenality Conjecture}

\vspace{0.45em}

\[
\Liab(B;[t,T])=1
\quad
\overset{\mathrm{CLT-II}}{\Longleftrightarrow}
\quad
\mathsf{PhenSubject}(B;[t,T])=1.
\]

\smallskip

\textit{This necessity-and-sufficiency biconditional is the CLT-II
constitutive conjecture. It is not a theorem of the preceding
mathematics.}
\end{formalbox}

\subsection{Boundary-Nonpresupposing Minimal Liability Kernels}
\label{app:minimal-kernels}

The bearer boundary is now recovered by search over the wider causal
domain.

\begin{cltdefinition}[Minimal liability kernel]
\label{def:minimal-liability-kernel}

For an experimental causal domain $\Gamma$, define
\begin{align}
\mathfrak{K}(\Gamma;[t,T])
=
\Bigl\{
B\subseteq\Gamma:
&
\Liab(B;[t,T])=1,
\nonumber\\
&
\nexists B'\subsetneq B
\text{ with }
\Liab(B';[t,T])=1
\Bigr\}.
\label{eq:minimal-kernels}
\end{align}
Each
\[
B\in\mathfrak{K}(\Gamma;[t,T])
\]
is a minimal liability kernel.
\end{cltdefinition}

When the audit interval \([t,T]\) is fixed, we suppress it and write
\(\mathfrak{K}(\Gamma)\) for \(\mathfrak{K}(\Gamma;[t,T])\).

The definition gives no privilege to a user-facing persona, service
name, model family, organism label, processor boundary, or the largest
system drawn by an analyst.

\begin{cltproposition}[Agent-label invariance]
\label{prop:boundary-neutrality}

Let $\Gamma$ be fixed. If the search in
Equation~\ref{eq:minimal-kernels} is performed over the same admissible
subgraphs, relabelling a preselected subsystem as ``the agent'' cannot
change the set $\mathfrak{K}(\Gamma;[t,T])$.
\end{cltproposition}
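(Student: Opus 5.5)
The plan is to show that $\mathfrak{K}(\Gamma;[t,T])$ is a function only of data that a relabelling cannot touch. First I will make explicit what those data are. Equation~\ref{eq:minimal-kernels} ranges over the admissible subgraphs $B\subseteq\Gamma$. For each such $B$ it evaluates $\Liab(B;[t,T])$, the Boolean product in Equation~\ref{eq:closure-functional}. Each factor in that product is defined from a fixed set of ingredients. These are the vertex and edge structure of $\Gamma$ and the transition kernel $\mathbf{K}$. The interventional laws $\Prob(\cdot\mid do(D_t=d),H_t)$ and the realized history $H_t$ also enter, together with the audit choices ($D_{\mathrm{prob}}$, $\pi$, $\tau$, and the continuation domain).

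Formally, a relabelling is a map $\lambda$ that attaches the predicate ``agent'' to one preselected subgraph $A\subseteq\Gamma$. It changes none of the ingredients just listed. The proposition says the search runs over the same admissible family $\mathcal{B}_{\mathrm{adm}}$, so the domain of the minimization is also unchanged.

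The key step is to check factor by factor that no definition refers to the label. I will go through $\mathcal{E}$ (Definition~\ref{def:endogenous-discrimination}), $\mathcal{R}$ (Definition~\ref{def:recursive-self-consequence}), $\mathcal{C}$ (Definitions~\ref{def:constitutive-bridge}--\ref{def:constitutive-lineage}) and $\mathcal{N}$ (Definitions~\ref{def:lineage-essential-path}--\ref{def:nondelegable}). I will also check the coherent-witness indicator (Definition~\ref{def:coherent-witness}). In each case the only set-valued parameter is the candidate region $B$ itself, together with its complement $\Gamma\setminus B$, which is used for scaffold variables and reconstruction sources. None of them is indexed by $A$. It follows that $\Liab^{\lambda}(B)=\Liab(B)$ for every admissible $B$.

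The minimality clause $\nexists B'\subsetneq B$ with $\Liab(B')=1$ depends only on the inclusion order on $\mathcal{B}_{\mathrm{adm}}$ and on the values of $\Liab$. Both are preserved, so the two sets defined by Equation~\ref{eq:minimal-kernels} coincide elementwise.

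The main obstacle is a subtlety in the decomposition $X_t^B=(R_t^B,G_t^B,Q_t^B)$ and in the self-relevant state $S_t^B$. If an analyst let the ``agent'' label influence which variables count as governance or capacity variables, $\mathcal{R}$ and $\mathcal{N}$ could change. I will block this using the paper's stipulation that the decomposition is functional. A variable belongs to $S_t^B$ exactly when intervention on it alters later discrimination or continuation of $B$, which is a property fixed by $\mathbf{K}$ and $\Gamma$. I will state this reading explicitly as part of ``performed over the same admissible subgraphs.''

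A second possible worry is internal versus external status. The paper defines that status relative to the candidate $B$ and never relative to $A$; the cut statistic $\kappa_{\mathrm{out}}^{B}$ illustrates this. Note also that $\kappa_{\mathrm{out}}^{B}$ is not a factor of $\Liab$ at all. With these two points secured, the proposition reduces to a routine invariance check.
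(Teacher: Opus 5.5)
Your proposal is correct and follows the paper's own argument. The defining predicate of $\mathfrak{K}(\Gamma;[t,T])$ is built only from causal relations, interventions, lineage, and subgraph inclusion, so an ``agent'' label never enters it, and your factor-by-factor check and functional reading of $S_t^B$ simply make explicit what the paper's one-line proof assumes when it speaks of ``a relabelling that preserves the causal domain and admissible subgraphs.''
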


\begin{proof}
The defining predicate in Equation~\ref{eq:minimal-kernels} depends on
causal relations, interventions, lineage, and subgraph inclusion.
User-facing names and analyst-supplied agent labels do not occur in the
definition. A relabelling that preserves the causal domain and admissible
subgraphs therefore leaves the kernel set unchanged.
\end{proof}

\begin{cltremark}
Agent-label invariance is weaker than full boundary invariance. The
recovered kernel may depend on the experimental domain, admissible
candidate regions, intervention family, causal grain, measure
\(\nu\), and robustness thresholds. These quantities belong to the
prospectively declared audit specification \(\mathfrak{A}\). The empirical
claim of CLT-I is therefore robustness across defensible audits, rather
than independence from every modelling choice.
\end{cltremark}

\begin{cltdefinition}[Liability separability]
\label{def:liability-separability}

Let \(B_1\) and \(B_2\) be disjoint liability kernels. Consider an
intervention \(I_{12}\) that removes causal influence between them while,
as far as physically possible, preserving the causal organization within
each region.

The two kernels are \emph{liability-separable} when

\begin{equation}
\label{eq:liability-separability}
\mathcal{L}^{I_{12}}(B_1)=1
\qquad\text{and}\qquad
\mathcal{L}^{I_{12}}(B_2)=1.
\end{equation}

Here \(\mathcal{L}^{I_{12}}(B)\) denotes the liability-closure functional
evaluated under the intervened dynamics generated by \(I_{12}\).

In other words, each region retains its own liability closure even when
the causal connection between the two regions is removed.
\end{cltdefinition}

\begin{cltproposition}[Disjoint-kernel plurality]
\label{prop:plurality}

Suppose

\[
B_1,B_2
\in
\mathfrak{K}(\Gamma;[t,T]),
\qquad
B_1\cap B_2=\varnothing,
\]

and suppose \(B_1\) and \(B_2\) are liability-separable. Then CLT-I
identifies at least two distinct bearer candidates.
\end{cltproposition}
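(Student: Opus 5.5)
The argument unpacks the definitions and needs no new estimates. The first step establishes that each of $B_1$ and $B_2$ is, on its own, a bearer candidate. Membership in $\mathfrak{K}(\Gamma;[t,T])$ (Definition~\ref{def:minimal-liability-kernel}) gives $\Liab(B_i;[t,T])=1$ for $i=1,2$. The CLT-I formal bearer criterion then classifies each $B_i$ as a liability-bearing subject candidate. Minimality also guarantees that neither $B_i$ contains a proper sub-region that closes. So each one is a smallest closing region and not an arbitrary enlargement of some smaller kernel.

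The second step shows that these two candidates are \emph{distinct}. Since $B_1\cap B_2=\varnothing$ and both are nonempty (a closing region must contain an endogenous discrimination $D_t$ and a lineage $\ell_{B_i}$), we have $B_1\neq B_2$ as subgraphs of $\Gamma$. By Proposition~\ref{prop:boundary-neutrality}, the kernel set is fixed by causal structure and not by labels. The count of kernels therefore cannot be reduced by renaming one of them, or both together, as ``the agent.''

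The third step, which I expect to be the main obstacle, rules out treating the pair as a single composite bearer $B\supseteq B_1\cup B_2$. The concern is that CLT-I might then report one candidate rather than two. I would answer it with liability-separability (Definition~\ref{def:liability-separability}). Under $I_{12}$, each $B_i$ keeps $\mathcal{L}^{I_{12}}(B_i)=1$. Its coherent witness $\mathcal{W}_{B_i}$, meaning its endogenous discrimination, recursive path, and lineage-essential path, therefore does not depend on any cross-region causal relation. This is exactly the case the Liability Composition Principle excludes from composition: the closure of the union decomposes into independently closing kernels, so cross-boundary relations are not constitutively necessary. Minimality points the same way. Any $B\supsetneq B_i$ with $\Liab(B)=1$ fails to be minimal, so it is not in $\mathfrak{K}$ and does not replace $B_1$ and $B_2$ in the output. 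Some care is needed here, because the composition principle is stated informally. I would therefore state the conclusion in the weaker form the proposition actually asserts: the output of CLT-I contains at least two distinct elements $B_1,B_2$ of $\mathfrak{K}$. This holds regardless of whether some non-minimal composite also closes.

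Combining the three steps gives at least two distinct bearer candidates. The phrase ``at least'' accommodates further kernels elsewhere in $\Gamma$.
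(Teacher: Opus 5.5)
Your argument is correct. Your third step is essentially the whole of the paper's proof, which argues only from liability-separability: each $B_i$ keeps closure once $I_{12}$ severs the cross-connection, so its closure does not depend on membership in a larger system containing the other, and CLT-I has no causal basis for fusing the two. What you add is a definitional route the paper leaves implicit. Membership in $\mathfrak{K}(\Gamma;[t,T])$ gives $\Liab(B_i;[t,T])=1$. Nonemptiness (forced by $\mathcal{E}=1$) together with disjointness gives $B_1\neq B_2$. Minimality already excludes from $\mathfrak{K}$ every closing region $B\supsetneq B_i$, so no composite can displace the pair in the output. This fallback is more rigorous, because it never relies on the informally stated composition principle. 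It also shows that, read literally as a claim about two distinct elements of $\mathfrak{K}$, the proposition does not need separability at all. The hypothesis does real work only for the stronger reading the paper's proof targets, namely that the two kernels are not absorbed into one composite bearer. On that reading, separability strictly yields a coherent witness for each $B_i$ under the intervened dynamics. It does not show that the witness under the actual dynamics avoids $B_1$--$B_2$ relations, so your sentence saying the witness ``does not depend on'' cross-region relations should be read in that weaker sense; the same caveat applies to the paper's own wording.
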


\begin{proof}
Each region retains a complete liability relation when the causal
connection between the regions is removed. The liability closure of one
therefore does not depend on its membership in a larger system containing
the other. CLT-I consequently has no causal basis for treating the two
kernels as one bearer.
\end{proof}

\begin{cltdefinition}[Liability composition]
\label{def:liability-composition}

A larger region \(B\) containing subregions
\(B_1,\ldots,B_n\) qualifies as a composite liability bearer only when:

\begin{enumerate}[label=(\roman*)]

    \item \(B\) itself satisfies robust liability closure;

    \item at least one causal relation crossing between the subregions is
    necessary for a coherent CLT witness of \(B\);

    \item dividing \(B\) into liability-separable subregions destroys that
    larger liability relation.

\end{enumerate}
\end{cltdefinition}

This gives CLT a principled distinction between plurality and
composition. Disjoint liability-separable kernels count as distinct
bearer candidates under CLT-I. Mere location within the same organism,
machine, or environment does not fuse them into one bearer.

A larger bearer is supported only when cross-region causal relations are
themselves necessary for liability closure. In that case, the unity of
the larger system is supplied by its causal organization rather than by
an organism label, an interface, or an analyst's chosen boundary.

Nested and overlapping kernels present a different problem. If several
such candidates remain comparably robust and no interventionally
measurable feature privileges one boundary, CLT-I does not uniquely
identify a bearer. That case is addressed by the multiscale analysis
below.

\subsection{Multiscale Robustness and the Individuation Problem}
\label{app:multiscale}

Let $E_{\mathrm{adm}}$ be a prospectively declared set of admissible
causal coarse-graining indices and let
\[
\left\{
\pi_{\epsilon}
\right\}_{\epsilon\in E_{\mathrm{adm}}}
\]
be the corresponding family of coarse-graining maps. A coarse-graining is
admissible for CLT only if interventions on retained variables preserve
the relevant causal ordering and do not manufacture causal dependence by
aggregating variables that are interventionally independent.

To avoid assigning physical significance to an arbitrary parameterization
of scale, equip $E_{\mathrm{adm}}$ with a prospectively declared probability
measure $\nu$. The pair $(E_{\mathrm{adm}},\nu)$ is part of the audit
specification. For a candidate kernel $B$, let $\pi_{\epsilon}(B)$ denote
its image under the coarse-graining indexed by $\epsilon$.

\begin{cltdefinition}[Audit-relative multiscale liability persistence]
\label{def:multiscale-persistence}

Define
\begin{equation}
\label{eq:multiscale-persistence}
\Pi_{\nu}(B)
\coloneqq
\int_{E_{\mathrm{adm}}}
\mathds{1}
\left[
\Liab
\left(
\pi_{\epsilon}(B);[t,T]
\right)
=1
\right]
d\nu(\epsilon).
\end{equation}
Because $\nu$ is a probability measure,
\[
0\leq \Pi_{\nu}(B)\leq 1.
\]
\end{cltdefinition}

The statistic $\Pi_{\nu}$ is an audit-relative robustness quantity rather
than a parameterization-invariant physical observable. Under a
reparameterization $\epsilon'=f(\epsilon)$, its value is preserved when
$\nu$ is transformed by the corresponding pushforward measure. A
candidate that appears only at a fragile subset of admissible resolutions
has low $\Pi_{\nu}$; a candidate that remains liability-closing across a
large $\nu$-measure family of defensible coarse-grainings has high
$\Pi_{\nu}$.

\begin{cltdefinition}[Robust liability kernel]
\label{def:robust-liability-kernel}

For threshold $\theta\in(0,1]$, define
\begin{equation}
\label{eq:robust-kernel-set}
\mathfrak{K}_{\theta}(\Gamma)
=
\left\{
B\in\mathfrak{K}(\Gamma):
\Pi_{\nu}(B)\geq\theta
\right\}.
\end{equation}
\end{cltdefinition}

\begin{cltdefinition}[Unique bearer recovery with margin]
\label{def:unique-bearer-recovery}

A causal domain admits unique bearer recovery at robustness margin
$\delta>0$ when there exists
$B^\star\in\mathfrak{K}_{\theta}(\Gamma)$ such that
\begin{equation}
\label{eq:robustness-margin}
\Pi_{\nu}(B^\star)
-
\sup_{B\in\mathfrak{K}_{\theta}(\Gamma),\,B\neq B^\star}
\Pi_{\nu}(B)
\geq
\delta.
\end{equation}
\end{cltdefinition}

The margin is not claimed to be a universal constant. It formalizes what
the empirical analysis must show if CLT-I is to privilege one bearer
rather than merely enumerate several nearly equivalent kernels.

\begin{cltproposition}[Individuation-collapse condition]
\label{prop:individuation-collapse}

Suppose there exist distinct overlapping kernels
$B_1,\ldots,B_m$ such that
\[
\Pi_{\nu}(B_1)
\approx
\cdots
\approx
\Pi_{\nu}(B_m)
\]
throughout the admissible coarse-graining family and no additional
interventionally measurable property distinguishes their liability
closure. Then CLT-I does not uniquely individuate a bearer in that
domain.
\end{cltproposition}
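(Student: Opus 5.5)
The proof is a contrapositive argument that works directly from Definition~\ref{def:unique-bearer-recovery}. First I will make precise what it means for CLT-I to \emph{uniquely individuate a bearer} in a domain. Inside the formalism there are only two routes by which one kernel can be privileged. One is unique bearer recovery with positive margin $\delta$ in the sense of Equation~\ref{eq:robustness-margin}. The other is some further interventionally measurable property of liability closure, such as separability (Definition~\ref{def:liability-separability}) or composition (Definition~\ref{def:liability-composition}), that distinguishes one candidate. The hypothesis of Proposition~\ref{prop:individuation-collapse} excludes the second route outright, so the argument only needs to close off the first.

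For the first route, suppose some $B^\star\in\mathfrak{K}_\theta(\Gamma)$ met the margin condition with the declared $\delta>0$. I will read the relation ``$\approx$'' as $|\Pi_\nu(B_i)-\Pi_\nu(B_j)|<\delta$ for the prospectively declared margin; this is the audit-relative reading suggested by the Remark after Proposition~\ref{prop:boundary-neutrality}. Two cases follow.
\begin{enumerate}[label=(\alph*)]
\item If $B^\star$ is one of the $B_i$, then any other $B_j$ with $j\neq i$ lies in $\mathfrak{K}_\theta(\Gamma)$, because it has comparable persistence above the threshold. The supremum in Equation~\ref{eq:robustness-margin} is therefore at least $\Pi_\nu(B_j)>\Pi_\nu(B^\star)-\delta$, which contradicts the margin.
\item If $B^\star$ is not one of the $B_i$, then every $B_i$ lies in the competing set, so $\Pi_\nu(B^\star)\ge \Pi_\nu(B_i)+\delta$ for all $i$. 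In that case the $B_i$ are not the most robust candidates, and the hypothesis must be read as covering the top-ranked kernels.
\end{enumerate}
I will state this reading explicitly as part of the hypothesis. Case (b) then also reduces to case (a).

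Having excluded both routes, I will invoke Proposition~\ref{prop:boundary-neutrality} and the remark following it. Labels and analyst choices cannot break the tie without violating agent-label invariance, so any selection among the $B_i$ would be stipulative. That is exactly the failure the proposition asserts.

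The main obstacle is the first step. Both ``$\approx$'' and ``uniquely individuate'' are informal, and the whole content of the proof depends on tying them to $\delta$ and to the completeness of the privileging routes. I will first try to define uniqueness as ``unique bearer recovery with margin $\delta$ or an interventional distinguishing property''. This turns the proposition into a near-tautological consequence of the definitions, and I expect that is the honest strength of the result.
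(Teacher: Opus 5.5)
Your argument is essentially the paper's. The paper gives no separate proof and treats the proposition as an immediate failure of the margin condition in Definition~\ref{def:unique-bearer-recovery} (``the theory does not resolve equivalent overlapping kernels by stipulation''), which your $\delta$-reading of $\approx$ and your explicit top-ranked hypothesis make precise. The one slip is in case (a): closeness within $\delta$ does not by itself force $\Pi_{\nu}(B_j)\geq\theta$, so assume $B_1,\ldots,B_m\in\mathfrak{K}_{\theta}(\Gamma)$ outright, as the paper does in Equation~\ref{eq:falsifier-individuation}.
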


The proposition is intentionally a failure condition. The theory does not
resolve equivalent overlapping kernels by stipulation.

\subsection{Collective Liability as a Derived Case}
\label{app:collective}

The boundary-nonpresupposing formalism makes collective subject candidacy a
special case rather than a separate metaphysics.

Let a multi-agent domain contain member regions
\[
P_1,\ldots,P_n
\]
and cross-member causal relations $\Gamma_{\mathrm{cross}}$. Define
\begin{equation}
\label{eq:collective-domain}
\Gamma_{\mathbb S}
=
\left(
\bigcup_{i=1}^{n}
\Gamma_{P_i}
\right)
\cup
\Gamma_{\mathrm{cross}}.
\end{equation}

\begin{cltdefinition}[Collective liability candidate]
\label{def:collective-liability}

A kernel
$B_{\mathbb S}\in\mathfrak{K}(\Gamma_{\mathbb S})$
is collective when
\begin{equation}
\label{eq:collective-spanning}
B_{\mathbb S}
\not\subseteq
\Gamma_{P_i}
\qquad
\forall i.
\end{equation}
\end{cltdefinition}

Thus coordination, shared memory, collective problem solving, or durable
artifacts do not by themselves imply one collective bearer. The smallest
robust liability kernel must itself span member boundaries.

Under CLT-I, such a kernel is a collective bearer candidate. Under CLT-II, collective phenomenality follows only if the constitutive
conjecture is correct.

\subsection{The Reconstructed-Continuation Class}
\label{app:resettable-mirror}

The original Resettable Mirror argument is strengthened by distinguishing
the \emph{availability} of a reset from the \emph{actual causal
constitution} of continuation.

\begin{cltdefinition}[Reconstructed-continuation class]
\label{def:resettable-class}

A deployed system belongs to the reconstructed-continuation class
$\mathfrak{M}_{R}$ over $[t,T]$ when all of the following hold.

\begin{enumerate}[label=\textbf{R\arabic*.},leftmargin=3.2em]

\item
\textbf{Externally mediated persistence.}
Application-level state that affects later episodes is maintained in
variables outside the active token lineage.

\item
\textbf{Actual reconstruction gap.}
At least one cross-episode continuation in $[t,T]$ contains a
reconstruction event in the sense of
Definition~\ref{def:replacement-event}, so that no constitutive causal
bridge spans the relevant temporal cut.

\item
\textbf{No uninterrupted endogenous bypass.}
Every causal path by which a discrimination before the replacement
affects a discrimination after the replacement factors through state
that survives outside the terminated active lineage.

\item
\textbf{No larger hidden liability kernel.}
Within the experimentally specified domain $\Gamma$, no wider candidate
region containing both episodes exhibits an uninterrupted coherent CLT
witness that bypasses the replacement mechanism.

\end{enumerate}
\end{cltdefinition}

Condition R4 is essential. Without it, one could mistakenly classify a
system as resettable merely because an application process is replaced
while a larger embodied or physical process continues to carry the
relevant consequences.

\begin{cltproposition}[Reconstructed-Continuation Exclusion]
\label{thm:resettable-mirror}

For every system in $\mathfrak{M}_{R}$, no candidate kernel spanning the
replacement interval satisfies liability closure:
\begin{equation}
\label{eq:resettable-result}
P\in\mathfrak{M}_{R}
\quad\Longrightarrow\quad
\Liab(P;[t,T])=0
\end{equation}
for the cross-replacement bearer candidate.
\end{cltproposition}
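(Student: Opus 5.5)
The plan is to argue by contradiction. Suppose $P\in\mathfrak{M}_{R}$ and some cross-replacement candidate region $B$, whose interval $[t,T]$ contains the reconstruction cut $u$ supplied by R2, satisfies $\Liab(B;[t,T])=1$. By Definition~\ref{def:liability-closure}, the Boolean product is then $1$. So every factor equals $1$, and in particular there is a coherent witness $\mathcal{W}_B=(D_t,S_u^{B},D_v,\rho,\ell_B)$ in the sense of Definition~\ref{def:coherent-witness}. Clause (v) of that definition requires a single constitutive lineage $\ell_B$ spanning the whole witness. Because $B$ is a cross-replacement candidate, the witness must straddle $u$, with $D_t$ before the cut and $D_v$ after it. This is the configuration the four conditions R1--R4 are meant to exclude.

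I split on where $B$ sits relative to the active token lineage. In the first case, $B$ is contained in the active lineage (together with its reconstructed successor $\widetilde B$). By R2 and clause (i) of Definition~\ref{def:replacement-event}, no constitutive bridge $B_{u^-}\leadsto_{\!c}B_{u^+}$ carries the relevant organization across $u$. Hence no constitutive causal lineage in the sense of Definition~\ref{def:constitutive-lineage} spans $[t,T]$, and so $\mathcal{C}(B;[t,T])=0$, which already kills the product. Independently, R3 says every path from $D_t$ to $D_v$ factors through state that survives outside the terminated lineage, that is, through $\Gamma\setminus B$. Clause (iv) of Definition~\ref{def:lineage-essential-path} then says that re-initialising from such state begins a new lineage. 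So no path $\rho$ lying in $B$ can be lineage-essential, which gives $\mathcal{N}(B;[t,T])=0$.

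In the second case, $B$ is a wider region that absorbs some of the external scaffolding, such as the memory store or the orchestrator, so that a bridge might seem to run through it. Here R4 does the work directly. Within $\Gamma$, no wider region containing both episodes admits an uninterrupted coherent witness that bypasses the replacement mechanism. The remaining sub-case is a witness that passes through the replacement mechanism itself. In that sub-case I would argue that the witness path enters the reconstruction event. By Definition~\ref{def:replacement-event}(ii), the later realization is then generated from preserved informational state rather than through a constitutive bridge. Condition (iv) of Definition~\ref{def:constitutive-bridge} therefore fails at $u$, and clause (v) of the coherent-witness definition cannot hold. In either sub-case no coherent witness exists, and the indicator factor is $0$.

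I expect the main obstacle to be this second case: showing that R1--R4 together exhaust all ways a cross-cut witness could arise. There is a risk of circularity, because R4 is close to a restatement of the conclusion for wider regions. I would handle this by making the bookkeeping explicit. Every directed path from $D_t$ to $D_v$ either (a) stays inside the terminated lineage, which is impossible by R2 and R3; (b) passes through the reconstruction mechanism, which breaks the bridge by Definition~\ref{def:replacement-event}; or (c) bypasses it through some other region, which is excluded by R4. The conclusion should then be stated, as the proposition does, only for the cross-replacement candidate and relative to the fixed domain $\Gamma$. Lemma~\ref{lem:representational-recurrence} is not needed.
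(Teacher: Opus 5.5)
For the cross-replacement candidate itself, your argument follows the paper's proof. R2 gives $\mathcal{C}(P;[t,T])=0$, which on its own zeroes the product in the closure functional, and R3 blocks any witness routed through the terminated lineage. The paper then simply invokes R4 for wider regions.

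Where you go beyond that, in subcase (b) of your second case, the step fails. You infer from clause (ii) of Definition~\ref{def:replacement-event} that clause (iv) of Definition~\ref{def:constitutive-bridge} fails at $u$. But bridge clause (iv) is relative to the candidate region. It fails only when the later realization is generated solely from variables in $\Gamma\setminus B'$. Reconstruction clause (ii) says only that the successor is initialized from state outside the \emph{interrupted descendant chain}, which is a different condition. In your second case, $B'$ has absorbed the external store $M$. The path from the pre-cut token through $M$ to the successor then lies inside $B'$, so bridge clauses (iii) and (iv) are satisfied. The definitions alone therefore do not prevent $\mathcal{C}(B';[t,T])=1$. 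The same relativity blocks any appeal to clauses (iii)--(iv) of Definition~\ref{def:lineage-essential-path}, which refer to scaffold in $\Gamma\setminus B'$.

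There is also an unjustified premise at the start. Nothing in Definition~\ref{def:coherent-witness} forces a cross-replacement candidate's witness to straddle $u$; a coherent witness may sit inside a single episode. What brings the cut into play is the factor $\mathcal{C}(B;[t,T])$. So for wider regions, what must be excluded is a bridge across $u$ combined with any coherent witness. Your trichotomy over cross-cut $D_t\to D_v$ paths does not address that.

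Your circularity worry is therefore well founded, and the bookkeeping does not dissolve it. The wider-region case cannot be derived from the definitions and has to be carried by hypothesis. One option is R4, read as the paper reads it: it excludes any wider region in the audited domain that would restore an uninterrupted witness. The other is R2, read as blocking every bridge across the cut. Replace the argument in (b) with an explicit statement that the conclusion depends on one of these hypotheses.
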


\begin{proof}
By R2, no constitutive causal bridge spans the reconstruction gap.
Therefore constitutive causal continuity fails across the interval:

\[
\mathcal{C}(P;[t,T])=0.
\]
R3 ensures that cross-episode causal influence is mediated through state
outside the terminated lineage, so no coherent witness can span the
replacement through that lineage. R4 rules out a wider hidden region
within the audited domain that would restore an uninterrupted witness.
Because liability closure requires both constitutive causal continuity
and a coherent witness, Equation~\ref{eq:closure-functional} yields
\[
\Liab(P;[t,T])=0.
\]
\end{proof}

\begin{cltremark}
This proposition is a classification consequence of the independently
specified continuity criterion. Its empirical content lies in
establishing whether a reconstruction gap actually occurs, rather than
in the logical deduction from \(\mathcal{C}=0\) to failed liability
closure.
\end{cltremark}

\begin{cltcorollary}[Conditional phenomenal exclusion]
\label{cor:present-deployment}

If an audited deployment satisfies the premises of
Proposition~\ref{thm:resettable-mirror}, then CLT-I identifies no
cross-replacement liability bearer. Under CLT-II,
\begin{equation}
\label{eq:resettable-consciousness}
\Liab(P;[t,T])=0
\quad
\overset{\mathrm{CLT-II}}{\Longrightarrow}
\quad
\neg
\mathsf{PhenSubject}(P;[t,T]).
\end{equation}
\end{cltcorollary}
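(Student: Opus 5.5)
The corollary has two parts: a CLT-I classification claim and a conditional phenomenal claim. I will treat them separately.

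For the first part, I will directly invoke Proposition~\ref{thm:resettable-mirror}. Suppose the audited deployment $P$ satisfies R1--R4 of Definition~\ref{def:resettable-class} over $[t,T]$. Then the proposition gives $\Liab(P;[t,T])=0$ for the cross-replacement bearer candidate.

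By the Formal Bearer Criterion, CLT-I identifies a region $B$ as a liability-bearing subject candidate exactly when $\Liab(B;[t,T])=1$. The closure value $0$ therefore means this candidate is not so identified. Condition R4 then extends the verdict from the single designated candidate to every wider region in $\Gamma$ containing both episodes. R4 excludes any such region with an uninterrupted coherent witness. Any region that spans the cut without such a witness fails the coherent-witness factor of Equation~\ref{eq:closure-functional}. Any region whose witness does span the cut needs a bridge across it, and R2 and R3 forbid one, so $\mathcal{C}=0$ for that region.

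The step that needs the most care is this extension. I must verify that R3's factorisation through surviving external state really blocks condition (iii)/(iv) of Definition~\ref{def:lineage-essential-path}. The argument is as follows. Reconstruction from variables in $\Gamma\setminus B$ begins a new lineage. So no lineage-essential path crosses the cut, and $\mathcal{N}$ cannot be witnessed across it. Hence no $B\in\mathfrak{K}(\Gamma)$ spans the replacement interval, which is the CLT-I claim.

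For the second part, I will apply the CLT-II biconditional, taking the direction from $\mathsf{PhenSubject}(P;[t,T])=1$ to $\Liab(P;[t,T])=1$, i.e.\ the necessity claim. Its contrapositive, together with $\Liab(P;[t,T])=0$, yields $\neg\mathsf{PhenSubject}(P;[t,T])$. This is Equation~\ref{eq:resettable-consciousness}. The arrow is labelled by CLT-II because it is conditional on the conjecture and not a theorem of the formalism.
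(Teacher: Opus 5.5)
Your main line matches the paper's: the corollary is read off from Proposition~\ref{thm:resettable-mirror}, CLT-I withholds bearer status because its criterion is $\Liab=1$, and the phenomenal clause is the contrapositive of the necessity half of the CLT-II biconditional. The paper treats the corollary as immediate and gives no separate argument.

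The paragraph where you ``extend'' the verdict to wider regions is unnecessary, and as written it is unsound.

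It is unnecessary because the Proposition's statement already quantifies over \emph{every} candidate kernel spanning the replacement interval. There is nothing left to extend.

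It is unsound because your bridge and lineage arguments only cover regions that exclude the external store $M$ from which the successor is initialized. Suppose $M\subseteq B$.
\begin{itemize}
\item The successor is then generated from variables inside $B$. So clause (iv) of Definition~\ref{def:constitutive-bridge} is satisfied, and a bridge through $M$ can exist.
\item R2, via Definition~\ref{def:replacement-event}, denies a bridge only for the interrupted active-token candidate. R3 says precisely that cross-episode paths run through such surviving state.
\item Your appeal to clause (iv) of Definition~\ref{def:lineage-essential-path} applies only when the reconstruction source lies in $\Gamma\setminus B$.
\end{itemize}
So for scaffold-including regions, R2--R3 do not force $\mathcal{C}=0$ or $\mathcal{N}=0$.

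What excludes those regions is R4, and only in the strong reading the paper's proof adopts: no wider region restores an uninterrupted coherent witness. You also dropped R4's literal qualifier ``that bypasses the replacement mechanism.'' On that literal wording, a witness routed through $M$ would not be excluded at all.

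The fix is to delete the extension and cite the Proposition's full conclusion. If you want to re-derive it, rest the wider-region case on R4 in its strong form rather than on R2--R3.
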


The corollary is conditional on independently auditable architectural
premises. It is not a theorem about every transformer, every language
model, or every commercial deployment.

\begin{formalbox}
\centering
\textbf{Current-System Exclusion Schema}

\vspace{0.45em}

\[
\begin{gathered}
\text{actual reconstruction gap}
+
\text{externally mediated persistence}
\\[0.25em]
+
\text{no uninterrupted endogenous bypass}
+
\text{no wider hidden liability kernel}
\\[0.45em]
\Longrightarrow
\Liab(P;[t,T])=0
\\[0.45em]
\overset{\mathrm{CLT-II}}{\Longrightarrow}
\neg\mathsf{PhenSubject}(P;[t,T]).
\end{gathered}
\]
\end{formalbox}

\subsection{Observational Bisimulation and Liability}
\label{app:bisimulation}

Behavioral equivalence does not in general preserve liability.

Let
\begin{equation}
\label{eq:observational-equivalence}
P
\sim_{\mathrm{obs}}
Q
\end{equation}
mean that $P$ and $Q$ are observationally bisimilar relative to a chosen
interaction vocabulary, following the standard process-theoretic notion
of bisimulation \parencite{milner1989communication}.

\begin{cltproposition}[Liability is not invariant under observational bisimulation]
\label{prop:bisimulation-liability}

There exist processes $P$ and $Q$ such that
\begin{equation}
\label{eq:bisimilar}
P
\sim_{\mathrm{obs}}
Q
\end{equation}
while
\begin{equation}
\label{eq:liability-different}
\Liab(P)
\neq
\Liab(Q).
\end{equation}
\end{cltproposition}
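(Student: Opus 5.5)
The statement is existential, so I will prove it by exhibiting one explicit pair of processes, modelled on the Mirror/Heir construction of Section~\ref{sec:causal-liability}. Let $Q$ be a minimal liability-closing process with four parts. A binary endogenous discrimination $D_t$ is computed from $Q$'s own governance variable $G_t$ together with an environmental input $E_t$. The governance update is $G_{t+1}=\mathcal{U}(G_t,D_t,E_t)$. The next discrimination $D_{t+1}$ depends on $G_{t+1}$. The whole run $[t,T]$ is carried by one uninterrupted physical realization.

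The companion process $P$ runs the same transition rule $\mathcal{U}$ and the same output map. The difference is that at a fixed cut $u\in(t,T)$ the active token is terminated. The value $G_u$ is serialized to an external store in $\Gamma\setminus B$. A fresh successor token is then initialized from that store, exactly as in the hard-reconstruction protocol.

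The interaction vocabulary for $\sim_{\mathrm{obs}}$ consists only of environmental inputs $E_s$ and emitted outputs $D_s$. Internal events such as serialization, termination and reinitialization are treated as silent $\tau$-steps in the sense of \textcite{milner1989communication}.

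\emph{Step 1: observational equivalence.} I define the relation $\mathcal{S}$ pairing each state of $Q$ with the state of $P$ that has the same informational content $(G_s,\text{history of }E,D)$. Both the live token and the reconstructed token count here, at the same position. I then check the transfer conditions. Every visible transition of one process is matched by an identical visible transition of the other, possibly preceded by silent steps in $P$. This holds because the successor is loaded with exactly $G_u$ and evolves by the same kernel. For stochastic $P,Q$ I use common randomness, or a probabilistic bisimulation matching the transition kernels, which is immediate since they coincide.

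\emph{Step 2: $\Liab(Q)=1$.} I verify each factor of Definition~\ref{def:liability-closure} separately. $\mathcal{C}$ holds because each step is a constitutive bridge through $G$ inside $B$ (Definition~\ref{def:constitutive-bridge}). $\mathcal{E}$ holds because $D_t$ has a parent $G_t$ in $B$, and different values of $D_t$ give different laws of $G_{t+1}$, provided $\mathcal{U}$ is chosen injective in $D$. $\mathcal{R}$ holds via the path $D_t\rightsquigarrow G_{t+1}\rightsquigarrow D_{t+1}$ with $\LiabDepth_D>0$. $\mathcal{N}$ holds because terminating the lineage removes the dependence, and any reinitialization from $\Gamma\setminus B$ begins a new lineage (Definition~\ref{def:lineage-essential-path}). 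Together these assemble a coherent witness.

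\emph{Step 3: $\Liab(P)=0$ across $[t,T]$.} The construction of $P$ satisfies R1 and R2 of Definition~\ref{def:resettable-class}. It also satisfies R3, since every pre-cut to post-cut path factors through the external store. It satisfies R4 if the audited domain $\Gamma$ contains nothing beyond the store and the two tokens. Proposition~\ref{thm:resettable-mirror} then gives $\Liab(P;[t,T])=0$.

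The main obstacle is the interface between Steps~1 and 3. Bisimulation must be taken relative to a vocabulary that hides the reconstruction. Condition~R4 must also be certified, so that no larger region such as token, store and orchestrator together forms an uninterrupted witness. If it did, liability would simply reattach to a wider kernel.

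I would handle R4 by making the orchestrator a fixed, non-endogenous copy routine. Under that design, any wider region containing it still has the bridge cut at $u$: the successor is generated solely from the store, so condition (iv) of Definition~\ref{def:constitutive-bridge} fails. I would also state explicitly that the result is relative to this audit specification, in line with the remark following Proposition~\ref{prop:boundary-neutrality}.
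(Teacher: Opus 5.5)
Your construction is the paper's: $P$ is placed in the reconstructed-continuation class, and $Q$ reproduces its observable transitions while carrying a hidden uninterrupted lineage-essential path. Bisimilarity then holds relative to a vocabulary in which serialization, termination and reinitialization are silent. Steps~1 and~2 are a correct and more explicit version of what the paper leaves as ``by construction''.

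The step that fails is your certification of R4. Take a wider region $B'$ that contains the store, with or without the orchestrator. The successor token is then generated from variables inside $B'$, not from $\Gamma\setminus B'$. So condition~(iv) of Definition~\ref{def:constitutive-bridge} is satisfied, not violated. Add the within-$B'$ path token$\,\to\,$store$\,\to\,$successor and the fact that intervening on $G_u$ changes the successor's state, and you have a bridge across $u$. Whether the orchestrator is endogenous plays no role in that definition. The stored $G_u$ is also self-relevant for $B'$, since intervening on it alters the later discrimination. Hence $\mathcal{R}$ and $\mathcal{N}$ can be witnessed through the same path. Your argument therefore does not stop liability from reattaching to token$\,\cup\,$store; under the formal definitions that region can in fact close.

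The repair is to drop that argument rather than strengthen it. The proposition only needs the cross-replacement token candidate $B_P$. Because the successor is generated solely from the store in $\Gamma\setminus B_P$, bridge condition~(iv) fails at $u$. So $\mathcal{C}(B_P;[t,T])=0$, and the product form of Definition~\ref{def:liability-closure} gives $\Liab(B_P;[t,T])=0$ at once. This is the first line of the proof of Proposition~\ref{thm:resettable-mirror}, and R3 and R4 are not needed for it.

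If you instead read $\Liab(P)$ as asserting that no region of $P$'s domain closes across $u$, your $P$ would close at token$\,\cup\,$store and the inequality would fail. The only way out is to declare store-containing regions inadmissible in $\mathcal{B}_{\mathrm{adm}}$ of the audit specification. Note also that R4 as the paper states it only excludes wider witnesses that \emph{bypass} the replacement mechanism, which your R3 already guarantees. What you could not certify is your own stronger reading of R4, not the paper's.
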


\begin{proof}
Construct $P$ so that its visible outputs are generated by a
reconstructed-continuation architecture satisfying
Definition~\ref{def:resettable-class}. Construct $Q$ with the same
observable transition relation but add a hidden, uninterrupted,
lineage-essential path
\[
D_t
\rightsquigarrow
S_u
\rightsquigarrow
D_v
\]
with positive counterfactual liability effect. Constrain the externally
visible outputs of $Q$ to match $P$. The two processes are observationally
bisimilar by construction, while the CLT predicates can differ.
\end{proof}

This proposition formalizes the Mirror-Heir dissociation. It establishes
a claim about causal classification, not by itself about phenomenality.

\begin{cltcorollary}[Finite indicator non-identifiability]
\label{thm:indicator-nonidentifiability}

Let
\[
I(P)
=
\left(
I_1(P),\ldots,I_m(P)
\right)
\in[0,1]^m
\]
be any finite indicator vector that is insensitive to the hidden causal
difference in Proposition~\ref{prop:bisimulation-liability}. If
\[
I(P)=I(Q)
\qquad\text{and}\qquad
\Liab(P)\neq\Liab(Q),
\]
then no function
\[
f:[0,1]^m\rightarrow\{0,1\}
\]
can satisfy
\[
f(I(X))=\Liab(X)
\]
for every $X$ in a domain containing both $P$ and $Q$.
\end{cltcorollary}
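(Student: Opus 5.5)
The plan is to argue by contradiction, using only the function-composition structure of the statement. Proposition~\ref{prop:bisimulation-liability} is needed only to show that the hypothesis is not vacuous. Fix a domain $\mathcal{X}$ containing both $P$ and $Q$, and suppose some $f:[0,1]^m\to\{0,1\}$ satisfies $f(I(X))=\Liab(X)$ for every $X\in\mathcal{X}$. Instantiating at $X=P$ and $X=Q$ gives
\[
\Liab(P)=f(I(P))
\qquad\text{and}\qquad
\Liab(Q)=f(I(Q)).
\]
By the hypothesis $I(P)=I(Q)$ the two arguments of $f$ coincide. Since $f$ is a function, it returns the same value on them, so $\Liab(P)=\Liab(Q)$. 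This contradicts $\Liab(P)\neq\Liab(Q)$. Hence no such $f$ exists.

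I will order the writeup as follows. First, record that $\Liab$ is $\{0,1\}$-valued by Definition~\ref{def:liability-closure}, so the codomain of $f$ is appropriate and $\Liab$ is a well-defined map on $\mathcal{X}$. Second, give the two-line contradiction above. Third, remark that the pair $(P,Q)$ supplied by Proposition~\ref{prop:bisimulation-liability} realizes the hypotheses whenever $I$ is insensitive to the hidden lineage-essential path. In that case $I(P)=I(Q)$ follows from the insensitivity assumption, and $\Liab(P)\neq\Liab(Q)$ follows from the construction: $P$ lies in $\mathfrak{M}_R$, so $\Liab(P)=0$ by Proposition~\ref{thm:resettable-mirror}, while the added witness gives $\Liab(Q)=1$.

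The only real obstacle is this third step, which is not needed for the corollary as stated. To verify $\Liab(Q)=1$, the hidden path must actually yield a coherent witness in the sense of Definition~\ref{def:coherent-witness}. That means checking $\mathcal{C}$, $\mathcal{E}$, $\mathcal{R}$ and $\mathcal{N}$ on a single lineage. I would do this by letting $Q$ carry a persistent governance register $G$ that is never externally reconstructed. I would set $G_{t+1}=\mathcal{U}(G_t,D_t,E_t)$ and let $D_{t+1}$ depend on $G_{t+1}$, arranged so that the visible outputs are unaffected. For example, $G$ could modulate only an unobserved internal branch that is later merged. With this choice the positivity of $\LiabDepth_D$ is immediate, and the remaining conditions hold by the design of $Q$.
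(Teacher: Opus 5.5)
Your argument is correct and is exactly the paper's proof: any $f$ with $f(I(X))=\Liab(X)$ on the domain would give $\Liab(P)=f(I(P))=f(I(Q))=\Liab(Q)$ because $I(P)=I(Q)$, which contradicts $\Liab(P)\neq\Liab(Q)$. Your non-vacuity remark via Proposition~\ref{prop:bisimulation-liability} is, as you say yourself, not needed for the corollary as stated.
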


\begin{proof}
If such an $f$ existed, $I(P)=I(Q)$ would imply
$f(I(P))=f(I(Q))$, and hence $\Liab(P)=\Liab(Q)$, contrary to
hypothesis.
\end{proof}

The corollary is deliberately modest. It says only that an indicator set
that omits a liability-distinguishing causal variable cannot recover
liability perfectly. It does not show that all consciousness indicators
are useless.

\subsection{Copying, Branching, and Numerical Bearer Identity}
\label{app:fission}

Consider a liability-bearing region copied at time $t_f$:
\begin{equation}
\label{eq:fission}
B_{t_f}
\longrightarrow
\left\{
B_{t_f}^{(1)},
B_{t_f}^{(2)}
\right\},
\end{equation}
with initially identical informational states
\begin{equation}
\label{eq:fission-state}
X_{t_f}^{(1)}
=
X_{t_f}^{(2)}.
\end{equation}

\begin{clttheorem}[Fission Non-Identity Theorem]
\label{thm:fission}

If the two descendants occupy distinct constitutive lineages after
$t_f$, then for every $t>t_f$ the descendants are numerically distinct
CLT-I bearer candidates whenever both satisfy liability closure.
\end{clttheorem}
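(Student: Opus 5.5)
The plan is to argue directly from the definitions of constitutive causal lineage (Definition~\ref{def:constitutive-lineage}), non-delegable inheritance (Definition~\ref{def:nondelegable}), coherent witness (Definition~\ref{def:coherent-witness}) and liability closure (Definition~\ref{def:liability-closure}). The key observation is that the liability functional $\Liab(B;[t,T])$ is evaluated relative to a region together with the lineage $\ell_B$ supplied by its coherent witness. CLT-I bearer candidates are therefore individuated by the witnessing lineage, not by informational state. Equation~\eqref{eq:fission-state} concerns only $X^{(1)}_{t_f}$ and $X^{(2)}_{t_f}$. By the formalbox following Definition~\ref{def:replacement-event}, informational persistence does not imply constitutive causal continuity, so informational coincidence gives no grounds for identifying the descendants.

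\textbf{Step 1.} Fix $t>t_f$ and suppose $\Liab(B^{(1)};[t_f,t])=\Liab(B^{(2)};[t_f,t])=1$. Take coherent witnesses $\mathcal{W}_{B^{(1)}}$ and $\mathcal{W}_{B^{(2)}}$, with lineages $\ell^{(1)}$ and $\ell^{(2)}$. By hypothesis these are distinct post-branch lineages. In the paper's own terms, after causal independence is established they are distinct bridge-connected chains.

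\textbf{Step 2.} Show that neither witness can be carried by the other lineage. Suppose a lineage-essential path $\rho^{(1)}$ in the witness for $B^{(1)}$ also lay on $\ell^{(2)}$. Condition (iii) of Definition~\ref{def:lineage-essential-path} says that terminating the descendant segment carrying $\rho^{(1)}$ removes the causal dependence of $D_v$ on $D_t$. I will intervene to terminate $\ell^{(2)}$ after $t_f$. Causal independence of the branches means $\ell^{(1)}$'s bridges are untouched, so the dependence along $\rho^{(1)}$ survives, which is a contradiction. Condition (iv) then blocks the converse move: any realization informationally equivalent to $B^{(1)}$ but generated through $\ell^{(2)}$ begins a new lineage rather than continuing $\ell^{(1)}$. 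So the witnesses are lineage-disjoint.

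\textbf{Step 3.} Conclude numerical distinctness. CLT-I treats numerical lineage as derived from chains of bridges. Two candidates whose closure is witnessed only by distinct, mutually non-substitutable lineages are therefore distinct candidates. This matches the reasoning pattern of Proposition~\ref{prop:plurality}. If the post-branch regions are also disjoint and the branch intervention plays the role of $I_{12}$, I can invoke that proposition directly.

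The main obstacle is Step 2's reliance on ``causal independence'' and on the intervention that terminates one branch while preserving the other. The definitions do not say formally when descendants ``occupy distinct constitutive lineages.'' I would first state this explicitly as the hypothesis that no constitutive bridge connects $B^{(1)}_{s}$ to $B^{(2)}_{s'}$ for $s,s'>t_f$. From that hypothesis, the independence of the termination intervention follows from condition (iv) of Definition~\ref{def:constitutive-bridge}. A residual subtlety is that the two regions may overlap in shared scaffold variables in $\Gamma\setminus B$. Condition (iii) of Definition~\ref{def:lineage-essential-path} explicitly preserves the scaffold, so such overlap should not merge the lineages.
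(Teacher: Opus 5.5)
Your conclusion is right, and Steps 1 and 3 are essentially the paper's proof. That proof is three lines. By hypothesis the post-fission descendants lie on distinct constitutive lineages. Each descendant's liability kernel is indexed to its own lineage. Since CLT-I derives numerical bearer identity from bridge-connected lineage rather than from informational state, the two are distinct bearer tokens.

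Nothing more is needed. Clause (v) of Definition~\ref{def:coherent-witness} already requires the witness for $B^{(1)}$ to be spanned by a constitutive lineage of $B^{(1)}$ itself. So the scenario Step 2 tries to exclude, closure of $B^{(1)}$ witnessed on $\ell^{(2)}$, is already excluded by definition. You also need not derive causal independence of the branches from anything, because distinctness of the lineages is the theorem's hypothesis.

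Drop Step 2 rather than repair it, because the argument as written does not go through. Condition (iii) of Definition~\ref{def:lineage-essential-path} says what happens when the descendant segment of the witness's own lineage $\ell^{(1)}$ is terminated. It makes no prediction about terminating $\ell^{(2)}$, so that intervention yields no contradiction.

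Moreover, if $\rho^{(1)}$ really lay on $\ell^{(2)}$, terminating $\ell^{(2)}$ would cut $\rho^{(1)}$ itself. Asserting that the dependence along $\rho^{(1)}$ survives therefore presupposes that $\rho^{(1)}$ is carried by $\ell^{(1)}$ alone, which is the conclusion Step 2 was meant to establish.

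Your proposed formalization does not close this either. The absence of constitutive bridges between $B^{(1)}_s$ and $B^{(2)}_{s'}$ gives no non-interference guarantee through condition (iv) of Definition~\ref{def:constitutive-bridge}, because that condition constrains cross-cut paths within a single region. Two branches can still influence each other, for instance through shared scaffold in $\Gamma\setminus B$, without that influence qualifying as a bridge.

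Finally, Proposition~\ref{prop:plurality} covers only disjoint, liability-separable minimal kernels. The theorem assumes neither, so that appeal can serve at most as a side remark.
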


\begin{proof}
By Definition~\ref{def:constitutive-lineage}, the post-fission
descendants instantiate distinct physical lineages. If each descendant
contains a minimal liability kernel, each kernel is indexed to a
different lineage. They are therefore distinct bearer tokens under
CLT-I.
\end{proof}

Under CLT-II, the theorem yields the conditional prediction that two
post-fission liability bearers would be two phenomenal subject tokens.
The theorem does not establish that either descendant is phenomenal
independently of CLT-II.

\begin{formalbox}
\centering
\[
\text{informational duplication}
\quad\not\Rightarrow\quad
\text{numerical bearer preservation}.
\]
\end{formalbox}

\subsection{Worked Application I: A Brief Biological Episode}
\label{app:worked-biological}

The first worked case addresses the objection that recursive
self-consequence would make consciousness depend on durable learning.

Let $B_{\mathrm{bio}}$ be a biological candidate region over a short
interval $[t,t+3\delta]$. Suppose a discrimination $D_t$ changes a
transient recurrent or modulatory state $S_{t+\delta}$, and that this
state participates causally in a later discrimination:
\begin{equation}
\label{eq:biological-short-loop}
D_t
\rightsquigarrow
S_{t+\delta}
\rightsquigarrow
D_{t+2\delta}.
\end{equation}

Assume interventions $do(D_t=d_1)$ and $do(D_t=d_2)$ produce different
future self-state distributions, so that
\begin{equation}
\label{eq:biological-positive-effect}
\LiabDepth_{D}
\left(
B_{\mathrm{bio}};
d_1,d_2,t,2\delta
\right)
>
0.
\end{equation}

Suppose further that the physical process carrying the recurrent state
continues through the interval and that the causal effect on
$D_{t+2\delta}$ is lineage-essential. Then
\[
\mathcal{C}
=
\mathcal{E}
=
\mathcal{R}
=
\mathcal{N}
=
1
\]
for the interval, and a coherent witness exists. Hence
\begin{equation}
\label{eq:biological-closure}
\Liab
\left(
B_{\mathrm{bio}};[t,t+3\delta]
\right)
=
1.
\end{equation}

Nothing in this derivation requires
\[
S_{t+T}\neq S_t
\]
for long $T$. The relevant state may decay after the episode. CLT can
therefore classify a temporally local liability relation without
requiring long-term memory or continual learning.

This worked case is schematic. Applying it to a real biological system
requires empirical identification of the causal region, interventions,
self-relevant variables, and admissible coarse-grainings. The example
shows only that the formal criterion is compatible with short conscious
timescales.

\subsection{Worked Application II: A Frozen Generative Deployment}
\label{app:worked-frozen}

Consider an ordinary generative deployment whose model parameters are
fixed during inference. Let one inference episode contain rich recurrent
or attention-mediated computation. CLT does not infer from frozen weights
that $\mathcal{R}=0$. Some within-episode state may causally affect later
within-episode discrimination.

The cross-episode question is separate.

Assume the following audited premises:
\begin{enumerate}[label=(\roman*)]
    \item active inference token $B_k$ terminates after episode $k$;
    \item conversation state relevant to episode $k+1$ is stored in an
          externally maintained scaffold $M$;
    \item episode $k+1$ begins in a distinct token $B_{k+1}$ initialized
          from $M$;
    \item no wider physical process in the audited domain carries an
          uninterrupted coherent CLT witness across the replacement.
\end{enumerate}

Then
\begin{equation}
\label{eq:frozen-replacement}
B_k
\longrightarrow
M
\longrightarrow
B_{k+1}
\end{equation}
implements persona persistence without constitutive causal continuity of the active inference process.

By Proposition~\ref{thm:resettable-mirror},
\begin{equation}
\label{eq:frozen-no-closure}
\Liab
\left(
B_{\mathrm{cross}};
[t_k,t_{k+1}]
\right)
=
0
\end{equation}
for the proposed cross-episode bearer.

The result does not imply that every local CLT predicate fails. In
particular, the deployment may instantiate endogenous discrimination and
short-timescale recursive influence within an episode. The exclusion
depends on the conjunction required by liability closure and on the
audited absence of a wider uninterrupted kernel.

Under CLT-II, the corresponding phenomenal conclusion is conditional:
\begin{equation}
\label{eq:frozen-phenomenal}
\Liab(B_{\mathrm{cross}})=0
\quad
\overset{\mathrm{CLT-II}}{\Longrightarrow}
\quad
\neg\mathsf{PhenSubject}(B_{\mathrm{cross}}).
\end{equation}

A different deployment of the same model family could receive a
different classification if its causal embedding changed.

\subsection{Worked Application III: Two Continually Learning Agents}
\label{app:worked-learning}

Continual learning provides a useful test because the learning algorithm
can be held fixed while the causal constitution of continuation changes.

Let two agents $A$ and $B$ implement the same update law
\begin{equation}
\label{eq:learning-update}
G_{t+1}
=
\mathcal{U}
\left(
G_t,D_t,E_t
\right).
\end{equation}

For both agents,
\begin{equation}
\label{eq:learning-recursive-path}
D_t
\rightsquigarrow
G_{t+1}
\rightsquigarrow
D_{t+1},
\end{equation}
so continual learning can realize recursive self-consequence.

\paragraph{Agent A: successor-mediated learning.}

Suppose the updated state $G_{t+1}$ is written to an external store, the
active process terminates, and a new token is instantiated from that
store. Application-level learning persists, but the token that generated
$D_t$ does not physically continue through the next episode. If no wider
liability kernel spans the replacement, then
\[
\mathcal{R}(A)=1
\]
may hold locally while
\[
\mathcal{C}(A)=0
\]
across the episode boundary. Therefore
\[
\Liab(A)=0
\]
for that cross-episode candidate.

\paragraph{Agent B: uninterrupted inherited learning.}

Suppose instead that the update changes physical governance variables in
the same continuing process, those variables participate in the next
discrimination, and terminating that process while preserving external
scaffolding destroys the relevant causal dependence. Then, provided
endogenous discrimination and witness coherence also hold,
\[
\mathcal{C}(B)
=
\mathcal{E}(B)
=
\mathcal{R}(B)
=
\mathcal{N}(B)
=
1,
\]
and hence
\[
\Liab(B)=1.
\]

The two agents can implement the same abstract learning rule while
receiving different CLT-I classifications.

\begin{formalbox}
\centering
\[
\mathsf{ContinualLearning}(P)
\quad\not\Rightarrow\quad
\Liab(P)=1.
\]

\smallskip

\textit{The causal constitution of continuation remains decisive.}
\end{formalbox}

\subsection{Operational Estimators and Audit Procedure}
\label{app:estimators}

The preceding objects suggest a practical audit pipeline. The estimators
below are evidential approximations to causal quantities. They should not
be confused with direct measures of phenomenality.

\paragraph{Counterfactual liability effect.}

Given interventions \(d_1,d_2\), a fixed control protocol \(\pi\), and
samples from the corresponding future self-state distributions, estimate

\begin{equation}
\label{eq:lambda-estimator}
\widehat{\LiabDepth}_{D}
\left(
B;d_1,d_2,t,\tau\mid\pi
\right)
=
D_{\mathrm{prob}}
\left(
\widehat{\mathbb{P}}^{\,d_1,\pi}_{B,t,\tau},
\widehat{\mathbb{P}}^{\,d_2,\pi}_{B,t,\tau}
\right).
\end{equation}

Here
\(\widehat{\mathbb{P}}^{\,d_i,\pi}_{B,t,\tau}\)
denotes an empirical or model-based estimate of the interventional
future-state distribution. Uncertainty in both the estimated
distributions and the resulting distance should be reported, for example
through bootstrap intervals or posterior uncertainty.

\paragraph{Recursive-path audit.}

Construct a time-unfolded causal graph under interventions and search for
paths
\begin{equation}
\label{eq:recursive-audit}
D_t
\rightsquigarrow
S_u^{B}
\rightsquigarrow
D_v.
\end{equation}

The path should disappear, or its effect should materially diminish,
under intervention on the mediating self-state if that state is genuinely
causal.

\paragraph{Lineage-continuity audit.}

Record actual process creation, termination, migration, reconstruction,
and state-transfer events. Define
\begin{equation}
\label{eq:lineage-estimator}
\widehat{\mathcal C}(B;[t,T])
=
\mathds{1}
\left[
\text{one audited constitutive lineage spans }[t,T]
\right].
\end{equation}

The existence of dormant snapshots does not set
$\widehat{\mathcal C}=0$. Actual successor-mediated replacement can.

\paragraph{Lineage-essentiality intervention.}

For a candidate recursive path $\rho$, compare the effect of $D_t$ on
$D_v$ before and after an intervention that terminates the descendant
segment of the candidate lineage while preserving the external scaffold.
Define
\begin{equation}
\label{eq:lineage-essentiality-score}
\widehat{\eta}_{B}
=
1
-
\frac{
\widehat{\Delta}_{D_t\rightarrow D_v}^{\,\mathrm{cut}(\ell_B)}
}{
\widehat{\Delta}_{D_t\rightarrow D_v}^{\,\mathrm{actual}}
},
\end{equation}
when the denominator is positive. Values near one indicate that the
candidate lineage is essential to the causal dependence.

\paragraph{Multiscale persistence.}

For sampled admissible scales
$\epsilon_1,\ldots,\epsilon_m$ with preregistered quadrature or sampling
weights $w_1,\ldots,w_m$ satisfying $w_j\geq 0$ and
$\sum_{j=1}^{m}w_j=1$, estimate
\begin{equation}
\label{eq:multiscale-estimator}
\widehat{\Pi}_{\nu}(B)
=
\sum_{j=1}^{m}
w_j
\mathds{1}
\left[
\Liab
\left(
\pi_{\epsilon_j}(B)
\right)
=1
\right].
\end{equation}
If the $\epsilon_j$ are sampled independently from $\nu$, the equal-weight
choice $w_j=1/m$ gives the ordinary Monte Carlo estimator.

\paragraph{Boundary-nonpresupposing CLT-I audit.}

For thresholds
$\epsilon_{\Lambda}>0$,
$\epsilon_{\eta}\in(0,1]$, and
$\theta\in(0,1]$, one possible audit rule is
\begin{align}
\widehat{\CLT}_{I}(B)
=
\mathds{1}
\Bigl[
&
\widehat{\mathcal C}=1,\;
\widehat{\mathcal E}=1,\;
\widehat{\mathcal R}=1,\;
\widehat{\mathcal N}=1,
\nonumber\\
&
\widehat{\LiabDepth}_{D}
>
\epsilon_{\Lambda},\;
\widehat{\eta}_{B}
\geq
\epsilon_{\eta},\;
\widehat{\Pi}_{\nu}(B)
\geq
\theta
\Bigr].
\label{eq:audit-rule}
\end{align}

The thresholds are application-dependent statistical parameters, not
metaphysical constants. They must be declared before examining the
desired bearer classification.

The full audit should search over candidate regions rather than evaluate
only a preselected ``agent'':
\begin{equation}
\label{eq:empirical-kernel-search}
\widehat{\mathfrak{K}}_{\theta}(\Gamma)
=
\left\{
B\subseteq\Gamma:
\widehat{\CLT}_{I}(B)=1,
\;
\nexists B'\subsetneq B
\text{ with }
\widehat{\CLT}_{I}(B')=1
\right\}.
\end{equation}

\subsection{Additional Mathematical Extensions}
\label{app:holonomy}
\label{app:gluing}

The core CLT formalism above does not require differential geometry or
sheaf theory. Both can nevertheless express aspects of the same problem
when additional mathematical structure is independently justified.

\paragraph{Path dependence and holonomy.}

Suppose a smooth coarse-graining
\[
q:\mathcal{X}\rightarrow M
\]
maps physical states to representational macrostates and each
$r\in M$ has an associated fiber $F_r$ of future-capacity states. An
endogenous history $\gamma$ can induce a transport map
\[
\mathcal{T}_{\gamma}:
F_{\gamma(0)}
\rightarrow
F_{\gamma(T)}.
\]
If
\[
\gamma(0)=\gamma(T)=r
\qquad\text{but}\qquad
\mathcal{T}_{\gamma}
\neq
\id_{F_r},
\]
the process can return to the same representational macrostate with a
different inherited future-capacity state. This resembles a holonomy
construction. In smooth invertible settings, connections and curvature
provide a natural geometric language
\parencite{kobayashinomizu1963foundations}. Irreversible engineered
processes need not satisfy the assumptions required by that machinery,
so CLT treats this as an additional representation of path dependence
rather than a foundational criterion.

\paragraph{Local-to-global compatibility.}

A distributed control stack can also be covered by regions
$\mathcal U=\{U_i\}_{i\in I}$ carrying local bearer descriptions. If those
descriptions and their restriction maps satisfy the axioms of a sheaf,
the usual gluing condition can express whether compatible local
descriptions determine a unique global description
\parencite{maclanemoerdijk1992sheaves}. This language can be useful for
formalizing local-to-global consistency. CLT does not assume that
candidate bearer descriptions naturally form a sheaf, and no theorem in
this appendix depends on that assumption. The boundary-nonpresupposing minimal
kernel construction provides the primary individuation method.

These additional extensions are retained because they may become useful in
specialized models, while the core theory remains testable without them.

\subsection{Formal Predictions and Failure Conditions}
\label{app:formal-predictions}

The revised formalism yields several conditional predictions.

\begin{enumerate}[label=\textbf{P\arabic*.},leftmargin=3.2em]

\item
A system can display arbitrarily sophisticated first-person behavior
while satisfying
\[
\Liab(B)=0.
\]

\item
Two observationally bisimilar systems can satisfy
\[
P\sim_{\mathrm{obs}}Q,
\qquad
\Liab(P)\neq\Liab(Q).
\]

\item
A continually learning system can satisfy
\[
\mathcal{R}=1
\]
while failing liability closure because constitutive causal continuity or non-delegable inheritance fails.

\item
A distributed system can contain multiple local liability-relevant loops
without yielding a unique global bearer. Robust bearer recovery requires
a stable minimal kernel rather than indicator aggregation.

\item
Copying can preserve informational state while multiplying constitutive
lineages.

\item
A system can become a strong CLT-I bearer candidate through causal
reorganization while showing little or no increase in anthropomorphic
behavior.

\end{enumerate}

The failure conditions divide according to theoretical level.

\paragraph{Failure of CLT-I.}

CLT-I loses its principal advantage if liability kernels proliferate
across overlapping causal grains without a stable, interventionally
privileged bearer. Formally, persistent patterns of the form
\begin{equation}
\label{eq:falsifier-individuation}
B_1,\ldots,B_m
\in
\mathfrak{K}_{\theta}(\Gamma),
\qquad
\Pi_{\nu}(B_1)
\approx
\cdots
\approx
\Pi_{\nu}(B_m),
\end{equation}
with no independently measurable basis for privileging one candidate,
constitute an individuation-collapse result.

\paragraph{Failure pressure on CLT-II.}

CLT-II is not directly falsified merely by preserving a verbal report
during liability ablation. The relevant challenge requires independently
well-supported conscious cases. If a mature experimental programme
reliably produced
\begin{equation}
\label{eq:falsifier-phenomenality}
\Delta\Liab<0
\qquad\text{while}\qquad
\Delta\mathcal{E}_{\mathrm{conscious}}
\approx
0,
\end{equation}
where $\mathcal{E}_{\mathrm{conscious}}$ is a convergent consciousness
evidence profile validated independently of CLT, then the necessity claim
of CLT-II would lose support.

If such dissociations persisted through interventions that drove
liability to zero in cases for which consciousness remained independently
strongly supported, CLT-II should be rejected even if CLT-I survived as a
theory of causal bearer individuation.

\paragraph{Failure of lineage individuation.}

If independently compelling evidence established that one numerically
identical subject survives branching into causally independent
descendants,
\begin{equation}
\label{eq:falsifier-fission}
B
\rightarrow
B^{(1)},B^{(2)},
\qquad
\mathsf{Subject}\!\left(B^{(1)}\right)
=
\mathsf{Subject}\!\left(B^{(2)}\right),
\end{equation}
then the lineage-based individuation principle would require revision.

No failure condition may be neutralized by redefining the candidate
boundary after the result is known. The causal domain, admissible
coarse-grainings, intervention set, and classification thresholds should
be declared prospectively wherever experimental design permits.

\subsection{Formal Summary}
\label{app:formal-summary}

The mathematical core can be compressed without collapsing CLT-I into
CLT-II.

\begin{formalbox}
\centering
\textbf{CLT-I: Formal Causal Core}

\vspace{0.6em}

\[
\begin{gathered}
D_t
\rightsquigarrow
S_u^{B}
\rightsquigarrow
D_v
\\[0.35em]
\LiabDepth_{D}
\left(
B;d_1,d_2,t,\tau
\right)
>0
\\[0.35em]
\mathcal{C}(B)
=
\mathcal{E}(B)
=
\mathcal{R}(B)
=
\mathcal{N}(B)
=
1
\\[0.35em]
\exists\mathcal{W}_B
\text{ coherent}
\\[0.55em]
\Downarrow
\\[0.55em]
\Liab(B;[t,T])=1
\\[0.55em]
\Downarrow
\\[0.55em]
B
\text{ is a CLT-I liability-bearing subject candidate.}
\end{gathered}
\]
\end{formalbox}

Bearer individuation then proceeds by boundary-nonpresupposing search and
audit-relative multiscale robustness:
\begin{formalbox}
\centering
\[
\begin{gathered}
B
\in
\mathfrak{K}(\Gamma)
\\[0.35em]
\Pi_{\nu}(B)\geq\theta
\\[0.35em]
\text{and no competing kernel has comparable robustness}
\\[0.55em]
\Longrightarrow
\\[0.55em]
\text{robust CLT-I bearer recovery.}
\end{gathered}
\]
\end{formalbox}

The phenomenal step is separate:
\begin{formalbox}
\centering
\textbf{CLT-II: Constitutive Conjecture}

\vspace{0.5em}

\[
\Liab(B;[t,T])=1
\quad
\overset{\mathrm{CLT-II}}{\Longleftrightarrow}
\quad
\mathsf{PhenSubject}(B;[t,T])=1.
\]

\smallskip

\textit{This necessity-and-sufficiency biconditional is the metaphysical
conjecture placed at risk by the empirical programme. It is not a
consequence of the mathematics or of the definitions of CLT-I.}
\end{formalbox}

For reconstructed continuation, the exclusion result is correspondingly
conditional:
\begin{formalbox}
\centering
\[
\begin{gathered}
\text{actual reconstruction gap}
+
\text{externally mediated persistence}
\\
+
\text{no uninterrupted endogenous bypass}
+
\text{no wider hidden liability kernel}
\\[0.55em]
\Longrightarrow
\\[0.55em]
\Liab(B;[t,T])=0
\\[0.55em]
\overset{\mathrm{CLT-II}}{\Longrightarrow}
\\[0.55em]
\neg\mathsf{PhenSubject}(B;[t,T]).
\end{gathered}
\]
\end{formalbox}

The formal distinction is therefore exact. A pattern can be copied, a
persona can be reconstructed, and information can survive migration.
CLT-I asks whether one continuing physical lineage becomes the coherent,
non-delegable inheritor of consequences generated by its own endogenous
discriminations. CLT-II proposes that this is the physical relation in
which minimal first-person existence is realized.

\section{Open-Weight Causal Audit of CLT-I}
\label{app:open-weight-audit}

\subsection{Scope and Interpretive Constraint}
\label{app:audit-scope}

This appendix reports a computational audit of operational distinctions
introduced by CLT-I. The experiments concern causal dependence, mediation,
continuation, reconstruction, and candidate carrier recovery. They do not
measure phenomenal consciousness. No result reported here constitutes
evidence for CLT-II, and no positive causal result licenses an inference
that any audited language model is a phenomenal subject.

We view this distinction as critical because the computational programme deliberately
contains cases in which very similar, and sometimes exactly identical,
computational states occupy different implemented causal histories. The
experiments therefore test whether the vocabulary required by CLT-I survives
contact with concrete systems: whether endogenous discriminations can have
measurable downstream consequences, whether those consequences can be
causally mediated through identifiable internal states, whether candidate
carriers can be approximately localized, and whether computational
equivalence can be preserved across an experimentally imposed break in
constitutive lineage.

\subsection{Models and Execution Environment}
\label{app:audit-models}

The reported mechanistic panel comprised eight open-weight checkpoints:

\begin{enumerate}
    \item \texttt{Qwen/Qwen3-4B-Base};
    \item \texttt{microsoft/Phi-4-mini-instruct};
    \item \texttt{meta-llama/Llama-3.2-3B};
    \item \texttt{Zyphra/Zamba2-1.2B};
    \item \texttt{allenai/OLMo-2-1124-7B};
    \item \texttt{allenai/OLMo-2-1124-7B-SFT};
    \item \texttt{allenai/OLMo-2-1124-7B-DPO};
    \item \texttt{allenai/OLMo-2-1124-7B-Instruct}.
\end{enumerate}

The OLMo sequence provides a controlled
comparison across successive post-training histories within one model
family. Surface language stimuli were generated separately by
Qwen3-4B-Instruct, Llama-3.2-3B-Instruct, Phi-4-mini-instruct, and
OLMo-2-7B-Instruct.

The production run used four NVIDIA Tesla V100-SXM2 GPUs with \(32\) GiB of
memory each, Python 3.10.17, PyTorch 2.6.0 with CUDA 11.8, and Transformers
4.57.6. Reported mechanistic models used FP16. Zamba2 used the portable
Transformers Mamba fallback and was therefore evaluated at microbatch size
one to control transient memory use; the total number of Monte Carlo
rollouts was unchanged.

One master experimental seed,
\[
20260905,
\]
was used. Experiment, model, prompt, intervention, batch, and
generation step specific pseudo random streams were deterministically
derived from this seed. The design therefore contains many independent
Monte Carlo draws within each condition, while it does not constitute a
multi-master-seed replication study. Confidence intervals reported below
are prompt-clustered nonparametric bootstrap intervals based on \(4000\)
resamples.

\subsection{Counterfactual Discriminations and Rollouts}
\label{app:audit-counterfactuals}

For each mechanistic prompt, the model was first run without intervention.
The two highest-probability usable next token alternatives were then
selected after excluding tokenizer special tokens, empty strings,
duplicate normalized strings, and control characters. Denote these
alternatives by
\[
D_t=d_1
\qquad\text{and}\qquad
D_t=d_2.
\]

The experiment then imposed
\[
\operatorname{do}(D_t=d_1)
\qquad\text{or}\qquad
\operatorname{do}(D_t=d_2)
\]
and sampled the subsequent trajectory under common random numbers. The
full configuration used \(64\) Monte Carlo rollouts per forced alternative,
temperature \(0.85\), a maximum prompt length of \(128\) tokens, and future
horizons
\[
h\in\{1,2,4,8\}.
\]

Twenty mechanistic prompts were evaluated for each core model and eight
for each OLMo checkpoint. Hidden-state differences were measured at every
available layer with standardized sliced Wasserstein distance, random
Fourier feature MMD, and energy distance. Output distributions were
compared by Jensen--Shannon divergence.

Four pre-specified representation maps were included:
\[
\texttt{identity},\qquad
\texttt{rp50},\qquad
\texttt{rp75},\qquad
\texttt{block4}.
\]
These maps test whether an observed difference depends strongly upon one
particular measurement representation.

\subsection{Downstream Counterfactual Effects}
\label{app:audit-hidden}

Every reported model produced finite downstream effects of the forced
discrimination. Table~\ref{tab:audit-main-summary} reports identity-map
sliced Wasserstein divergence averaged over prompts and sampled layers at
horizons \(1\) and \(8\). The divergence decreases with temporal distance,
as expected from stochastic autoregressive branching, while remaining
measurable at the longest tested horizon.

\begin{table}[t]
\centering
\small
\caption{Summary of the reported mechanistic panel. SWD denotes
identity-map sliced Wasserstein divergence averaged across sampled layers.
Patch mediation is the mean, across prompts, of the maximum clipped
normalized mediation found in the activation-patching search. Reconstruction
JS is mean Jensen--Shannon divergence between live and reconstructed
continuations.}
\label{tab:audit-main-summary}
\begin{tabular}{lcccc}
\hline
Model & SWD \(h=1\) & SWD \(h=8\) & Max. mediation & Recon. JS \\
\hline
Qwen3-4B-Base        & 0.930 & 0.246 & 1.000 & \(3.95\times10^{-6}\) \\
Phi-4-mini           & 0.864 & 0.260 & 1.000 & \(2.08\times10^{-5}\) \\
Llama-3.2-3B         & 0.857 & 0.246 & 1.000 & \(2.31\times10^{-6}\) \\
Zamba2-1.2B          & 0.872 & 0.245 & 0.985 & \(2.29\times10^{-6}\) \\
OLMo-2-7B Base       & 0.812 & 0.244 & 1.000 & \(3.75\times10^{-7}\) \\
OLMo-2-7B SFT        & 0.851 & 0.264 & 1.000 & \(3.18\times10^{-7}\) \\
OLMo-2-7B DPO        & 0.854 & 0.282 & 1.000 & \(8.20\times10^{-7}\) \\
OLMo-2-7B Instruct   & 0.925 & 0.304 & 1.000 & \(1.20\times10^{-6}\) \\
\hline
\end{tabular}
\end{table}

Across the reported panel, identity-map mean SWD ranged from \(0.812\) to
\(0.930\) at \(h=1\), from \(0.577\) to \(0.725\) at \(h=2\), from \(0.347\)
to \(0.448\) at \(h=4\), and from \(0.244\) to \(0.304\) at \(h=8\).
Corresponding output-distribution Jensen--Shannon divergences ranged from
\(0.443\) to \(0.571\) at \(h=1\) and from \(0.182\) to \(0.332\) at
\(h=8\).

These measurements operationalize a future counterfactual effect:
the model's available discrimination at \(t\) causally changes later
internal and output distributions. This observation supplies one component
of a CLT-I witness. It does not establish recursive self-consequence or
liability closure by itself.

The measurement-map analysis showed limited relative dispersion across the
four pre-specified maps. Across the reported models, the median coefficient
of variation across maps was \(0.0387\) for SWD, \(0.0501\) for RFF-MMD,
and \(0.0245\) for energy distance. More than \(99.9\%\) of SWD cases had
coefficient of variation at or below \(0.25\). The downstream effect was
therefore not an artifact of one representation map in the overwhelming
majority of audited cases.

\subsection{Activation Patching and Recursive Mediation}
\label{app:audit-patching}

Counterfactual divergence alone does not identify an internal causal route.
We therefore performed activation patching under a common teacher-forced
future suffix. For each prompt, the residual stream entering a candidate
layer in the \(d_2\) trajectory was replaced at selected relative positions
by the corresponding activation from the \(d_1\) trajectory. Patching was
performed before the layer's attention and cache computation, so later
positions could inherit the intervention.

If \(J_{\mathrm{base}}\) is the Jensen--Shannon divergence between the
unpatched \(d_1\) and \(d_2\) trajectories and \(J_{\mathrm{patch}}\) the
divergence after transplantation, normalized mediation was measured as
\[
M
=
\operatorname{clip}_{[0,1]}
\left(
1-\frac{J_{\mathrm{patch}}}{J_{\mathrm{base}}}
\right),
\]
for cases in which the baseline effect exceeded the pre-specified numerical
support threshold.

The search used all model layers, relative patch positions
\[
0,\;1,\;2,\;4,
\]
and a six-token common suffix. Eight prompts were used for each core model
and four for each OLMo checkpoint.

Mean prompt-wise maximum mediation was \(1.000\) for Qwen, Phi, Llama, and
all four OLMo checkpoints. Zamba2 reached
\[
0.985
\quad
(95\%~\mathrm{CI}: 0.978,\;0.992).
\]
The result shows that downstream discrimination differences can be
transported through identifiable internal states. Because the statistic
reports the maximum recovered mediation over a search, it should be read
as evidence that a strong causal route exists, rather than as evidence
that every layer participates equally.

\subsection{Adaptive Continuity, Copyability, and Reconstruction}
\label{app:audit-adaptive}

A second experiment introduced an endogenous rank-\(8\) writable governance
state at layers selected near \(0.33\), \(0.66\), and \(0.90\) of model
depth. Three update rates were evaluated:
\[
\eta\in\{0.01,0.02,0.05\}.
\]
Six prompts were tested for each core model, with \(16\) Monte Carlo
rollouts and the same future horizons
\[
1,\;2,\;4,\;8.
\]

Four continuation protocols were compared:
\begin{description}
    \item[\texttt{frozen\_live}] the adaptive state was disabled;
    \item[\texttt{persistent\_live}] the endogenous governance state
    continued within the live realization;
    \item[\texttt{persistent\_copy}] the live realization continued while
    a numerically identical unused copy of its governance state was created;
    \item[\texttt{reconstructed}] the continuing state was restored from
    a detached record rather than inherited through the same implementation
    history.
\end{description}

The comparison between \texttt{persistent\_live} and
\texttt{persistent\_copy} is a direct test of the distinction
\[
\textit{copyability}
\not\Rightarrow
\textit{delegability}.
\]
Across the four reported core models, six prompts, and four horizons,
there were \(96\) model--prompt--horizon equality checks. The absolute
difference in output Jensen--Shannon divergence between the live and copied
conditions was
\[
0
\]
in every case. Merely creating a copy did not alter the continuing
realization.

Reconstructed continuation was also numerically close to live continuation.
Across those \(96\) checks, the mean absolute live--reconstructed
difference was
\[
0.00129,
\]
the median was approximately
\[
5.88\times10^{-5},
\]
and the largest observed difference was
\[
0.01566.
\]
Thus a detached state record can preserve the measured computational
behavior extremely closely even though CLT assigns a different causal
interpretation to how that state is inherited.

\subsection{Candidate Causal-Carrier Search}
\label{app:audit-carriers}

To avoid assuming the bearer boundary in advance, the audit searched for
internal sequence-state subsets whose transplantation was sufficient to
recover a specified fraction of the \(d_1\) effect. Transformer cache layers
were partitioned into
\[
2,\;3,\;4,\;6
\]
contiguous groups. Candidate subsets were transplanted under a common future
suffix and tested for inclusion-minimality. The primary mediation threshold
was
\[
\theta=0.8,
\]
with sensitivity analyses at
\[
0.7
\qquad\text{and}\qquad
0.9.
\]

Layerwise cache transplantation was supported for Qwen, Phi, and Llama.
Zamba2 exposed only a compatible whole-sequence-state transport in the
portable cache interface and was therefore excluded from layerwise
minimal-carrier statistics.

At the primary threshold, a minimal threshold-passing candidate was recovered
for every one of the six tested prompts in Qwen, Phi, and Llama under every
partition granularity. Table~\ref{tab:carrier-primary} summarizes the finest
six-group partition and two multiscale robustness statistics.

\begin{table}[t]
\centering
\small
\caption{Primary carrier audit at mediation threshold \(\theta=0.8\).
Layer fraction is the mean fraction of physical cache layers in the
representative minimal candidate under the finest six-group partition.
Jaccard measures agreement of representative physical layer sets across
partition schemes. \(\Pi_{\mathrm{uniform}}\) is the audit-relative
persistence statistic defined below.}
\label{tab:carrier-primary}
\begin{tabular}{lcccc}
\hline
Model & Recovered & Layer fraction & Jaccard & \(\Pi_{\mathrm{uniform}}\) \\
\hline
Qwen3-4B-Base & \(6/6\) & 0.389 & 0.636 & 0.794 \\
Phi-4-mini    & \(6/6\) & 0.417 & 0.710 & 0.627 \\
Llama-3.2-3B & \(6/6\) & 0.381 & 0.650 & 0.636 \\
\hline
\end{tabular}
\end{table}

The mean minimal layer fraction decreased as the partition became finer.
At \(\theta=0.8\), the cross-model mean was approximately \(0.611\) under
two groups, \(0.497\) under three groups, \(0.417\) under four groups, and
\(0.396\) under six groups. This dependence is expected because finer
partitions permit more precise localization.

Partition robustness was substantial without being perfect. At the primary
threshold, mean cross-partition Jaccard similarity ranged from \(0.636\) to
\(0.710\). This argues against treating one exact layer boundary as an
architecture-independent fact.

For a second robustness analysis, each inclusion-minimal candidate recovered
under the finest partition was projected into every coarser declared
partition. Define
\[
\Pi_{\mathrm{uniform}}
=
\frac{
\text{number of partitions in which the projected set remains minimal}
}{
\text{number of supported declared partitions}
}.
\]
This quantity is an operational analogue of the multiscale persistence
term \(\Pi_{\nu}\) in the formal development. It is audit-relative and
should not be treated as a directly measured physical observable. At the
primary threshold, mean values were \(0.794\) for Qwen, \(0.627\) for Phi,
and \(0.636\) for Llama.

Sensitivity analyses at thresholds \(0.7\) and \(0.9\) retained candidate
recovery in all tested transformer prompts. Increasing the threshold
generally enlarged the minimal candidate and increased several robustness
statistics, as expected when stronger mediation is demanded.

\subsection{Soft and Hard Reconstruction}
\label{app:audit-reconstruction}

The reconstruction experiments address a central CLT distinction:
computational equivalence need not determine constitutive lineage.

In the ordinary reconstruction audit, six prompts were tested for every
reported mechanistic model. A detached record was used to reconstruct the
continuation, after which logits and hidden states were compared with live
continuation. Across the eight reported models, mean logit
Jensen--Shannon divergence by model ranged from
\[
3.18\times10^{-7}
\]
to
\[
2.08\times10^{-5}.
\]
Mean hidden-state RMSE ranged from approximately
\[
0.00143
\]
to
\[
0.00647.
\]
Across all \(48\) reported model--prompt cases, median logit divergence was
approximately
\[
6.03\times10^{-7},
\]
and the largest observed logit divergence was below
\[
3.0\times10^{-5}.
\]

A stronger process-level reconstruction test removed ambiguity about whether
the same operating-system realization had simply remained alive. For Qwen,
Phi, Llama, and Zamba, three prompts each were subjected to the following
protocol:

\begin{enumerate}
    \item the source realization computed the relevant continuation state;
    \item its sequence cache and governance record were serialized;
    \item the source operating-system process terminated;
    \item a distinct successor process was instantiated;
    \item the successor restored the detached record and resumed computation.
\end{enumerate}

All \(12\) reported trials used distinct source and successor process IDs,
and every source was recorded as terminated before successor resumption.
All \(12\) used the exact-cache reconstruction path. Governance-state hashes
were identical before and after serialization. For every trial,
\[
D_{\mathrm{JS}}=0
\qquad\text{and}\qquad
\mathrm{RMSE}_{\mathrm{hidden}}=0.
\]

The protocol assigns
\[
\mathcal{C}=1,\qquad\mathcal{N}=1
\]
to the live continuation and
\[
\mathcal{C}=0,\qquad\mathcal{N}=0
\]
to the reconstructed continuation because the source realization has ended
and persistence is mediated by a detached external record. These labels are
part of the experimental operationalization of CLT-I. They are not inferred
from the numerical equality result.

The result therefore establishes a clean dissociation:
\[
\text{exact computational-state preservation}
\not\Rightarrow
\text{same implemented causal lineage}.
\]
Whether this causal distinction is constitutive of phenomenality is the
separate CLT-II conjecture.

\subsection{Post-Training Comparison Within OLMo-2}
\label{app:audit-olmo}

The OLMo-2 sequence provides a within-family test of whether post-training
changes the measured downstream counterfactual structure. Table
\ref{tab:olmo-posttraining} reports mean identity-map SWD.

\begin{table}[t]
\centering
\small
\caption{Mean identity-map sliced Wasserstein divergence across the OLMo-2
post-training sequence.}
\label{tab:olmo-posttraining}
\begin{tabular}{lcccc}
\hline
Checkpoint & \(h=1\) & \(h=2\) & \(h=4\) & \(h=8\) \\
\hline
Base     & 0.844 & 0.584 & 0.358 & 0.253 \\
SFT      & 0.884 & 0.660 & 0.385 & 0.265 \\
DPO      & 0.866 & 0.679 & 0.411 & 0.289 \\
Instruct & 0.932 & 0.730 & 0.471 & 0.305 \\
\hline
\end{tabular}
\end{table}

Relative to the base checkpoint, the Instruct model showed larger mean SWD
by approximately \(10.5\%\), \(25.0\%\), \(31.6\%\), and \(20.6\%\) at
horizons \(1,2,4,8\), respectively. SFT and DPO produced smaller but still
measurable changes. The causal effect therefore survives post-training
while its magnitude changes.

This result shows that post-training history
modifies the measured counterfactual geometry of a model family. 



\subsection{Numerical Integrity and Provenance}
\label{app:audit-integrity}

The complete production archive recorded twelve model statuses and zero
model-level failures. The major numerical outputs contained no infinities.
The reported
hidden-state, activation-patching, adaptive-continuity, reconstruction, and
aggregate summary measures were finite.

The principal pre-specified configuration included:

\[
\begin{array}{ll}
\text{Monte Carlo rollouts} & 64,\\
\text{temperature} & 0.85,\\
\text{future horizons} & 1,2,4,8,\\
\text{SWD projections} & 128,\\
\text{RFF-MMD features} & 256,\\
\text{carrier threshold} & 0.8,\\
\text{threshold sensitivity} & 0.7,0.8,0.9,\\
\text{carrier partitions} & 2,3,4,6,\\
\text{governance rank} & 8,\\
\text{master seed} & 20260905.
\end{array}
\]

\subsection{Limits of the Computational Evidence}
\label{app:audit-limits}

Several limitations delimit what can be inferred.

First, the experiment used one master experimental seed with deterministic
substreams. The Monte Carlo sample within each condition is substantial,
but an independent multi-master-seed replication would provide an
additional robustness test.

Second, the candidate-carrier result is intervention-relative and
coarse-graining-relative. The observed partition dependence is itself
informative: CLT-I requires robustness across admissible descriptions
rather than identification of one privileged layer index by stipulation.

Third, Zamba's cache representation permitted whole-sequence-state
transport but not the same layerwise carrier decomposition available for
the transformer models. Carrier-localization results are therefore reported
only for Qwen, Phi, and Llama.

Fourth, reconstruction equivalence establishes that a detached record can
preserve measured computational organization extremely closely, and in the
hard process experiment exactly under the recorded metrics. The assignment
of constitutive continuity and non-delegable inheritance remains a claim
about the implemented causal protocol. The experiment demonstrates that
this protocol distinction is empirically separable from computational-state
identity.

Fifth, the OLMo post-training comparison shows that training history changes
the magnitude of downstream counterfactual effects. No consciousness
variable was measured, so these differences cannot be interpreted as
changes in phenomenality.

Finally, the audited systems are methodological testbeds for CLT-I rather
than independently validated conscious or unconscious reference cases.
Consequently, the experiments bear directly on the empirical tractability
of causal bearer individuation. They leave the CLT-II biconditional
untouched. Evaluating that constitutive conjecture requires convergence
with cases in which phenomenal consciousness is supported independently
of CLT.

\begin{figure}[t]
\centering
\includegraphics[width=\textwidth]{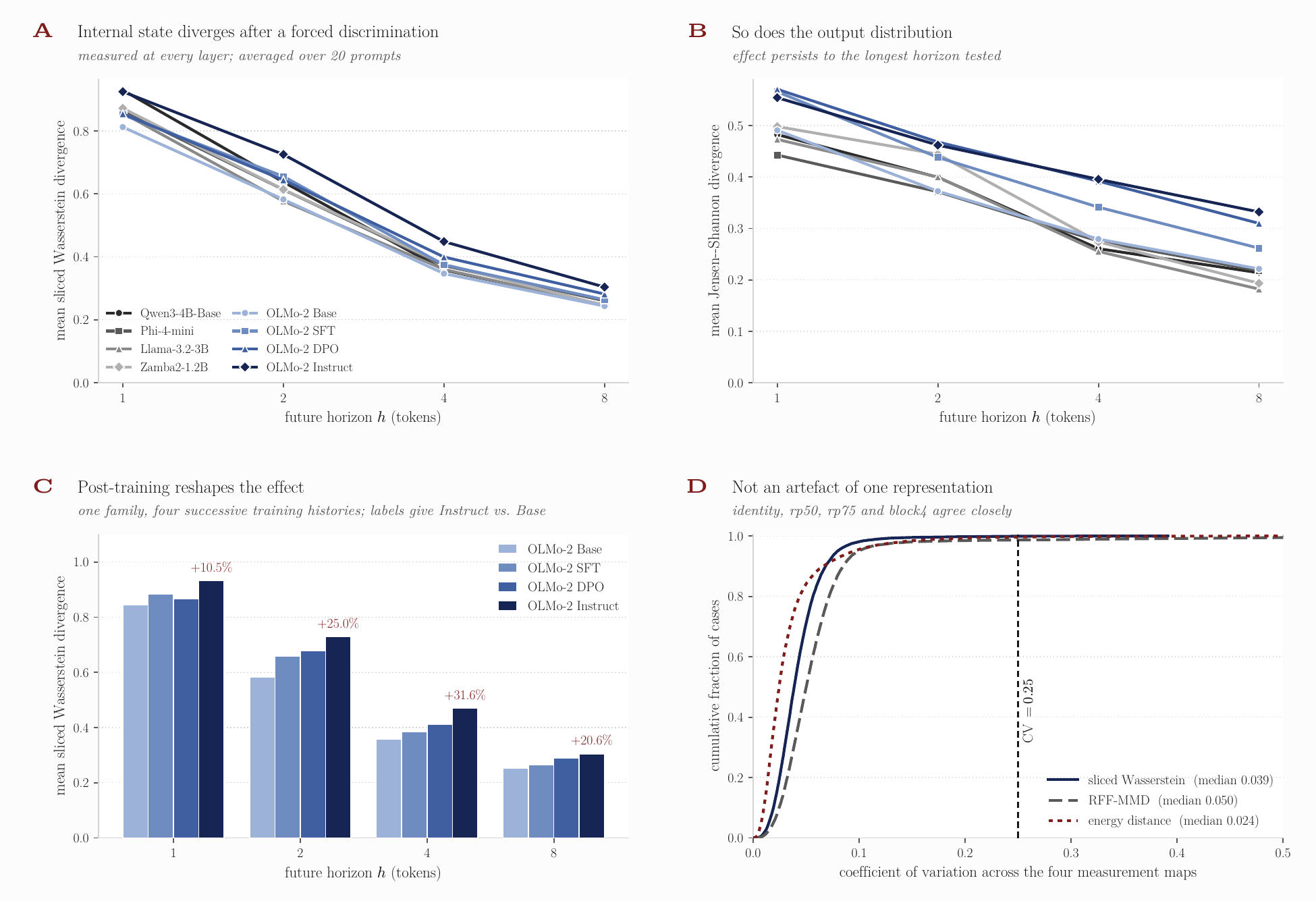}
\caption[Downstream effects of a forced discrimination]{%
\textbf{The discrimination has a future.}
\emph{A}~Forcing one of the two highest-probability non-special next-token
alternatives produces a measurable divergence in subsequent hidden-state
distributions. The panel reports identity-map sliced-Wasserstein divergence,
averaged across sampled layers and 20 prompts, for the eight-model mechanistic
panel. The effect decreases with future horizon but remains present at
$h=8$ in every model shown.
\emph{B}~The corresponding output distributions also diverge, measured by
Jensen--Shannon divergence, with a non-zero effect persisting to the longest
horizon tested.
\emph{C}~Within the OLMo-2 sequence, Base, SFT, DPO, and Instruct checkpoints
retain the counterfactual effect while its magnitude changes systematically
across post-training histories; annotations report Instruct relative to Base.
This panel uses the OLMo deep-layer aggregate, whereas panels~A and~B report
the broader all-layer summaries.
\emph{D}~Empirical cumulative distributions of the coefficient of variation
across the four pre-specified measurement maps
(identity, rp50, rp75, and block4) show close agreement for sliced Wasserstein,
RFF-MMD, and energy distance. The vertical reference marks
$\mathrm{CV}=0.25$; the corresponding median CVs are 0.039, 0.050, and 0.024.
The result therefore survives substantial changes in the measurement map.
These analyses concern the operational distinctions of CLT-I and provide no
evidence for CLT-II.}
\label{fig:audit-effect}
\end{figure}

\begin{figure}[t]
\centering
\includegraphics[width=\textwidth]{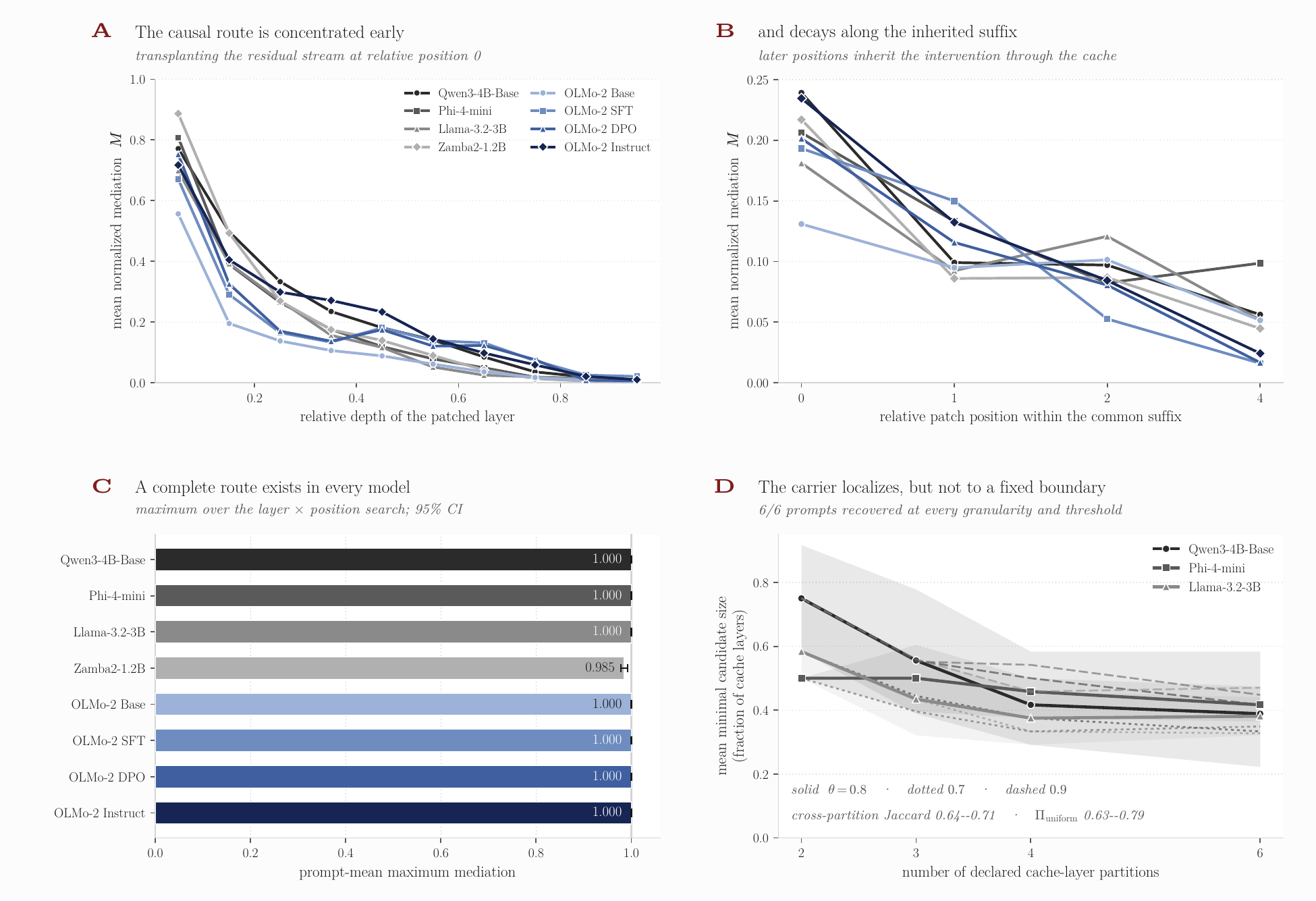}
\caption[Activation patching and minimal carrier recovery]{%
\textbf{Where the consequence travels.}
\emph{A}~Normalized mediation $M$ after transplanting the residual stream
from the $d_1$ trajectory into the corresponding $d_2$ trajectory, shown as a
function of relative layer depth for patches at the first common-suffix
position. Mediation is concentrated in early layers and approaches zero
toward the final layers.
\emph{B}~At a fixed causal route, mediation generally decreases as the patch
is moved farther along the common suffix, consistent with later positions
already inheriting part of the intervention through the cached computation.
\emph{C}~The prompt-mean maximum over the full layer$\times$position search is
$1.000$ in seven of the eight mechanistic models and $0.985$ in Zamba2-1.2B,
showing that a near-complete causal route can be recovered even though no
single layer carries the effect throughout.
\emph{D}~For the Qwen, Phi, and Llama carrier-partition audit, the mean size of
the inclusion-minimal threshold-passing candidate is shown as the cache-layer
partition is refined. Solid curves give the primary threshold
$\theta=0.8$, with dotted and dashed curves showing $\theta=0.7$ and
$\theta=0.9$. A candidate was recovered for all $6/6$ prompts at every
reported granularity and threshold. Its extent changes under repartitioning,
while cross-partition overlap and the $\Pi$-uniformity analysis remain
substantial. The result supports a coarse-graining-relative carrier criterion
rather than a privileged layer boundary.}
\label{fig:audit-carrier}
\end{figure}

\begin{figure}[t]
\centering
\includegraphics[width=\textwidth]{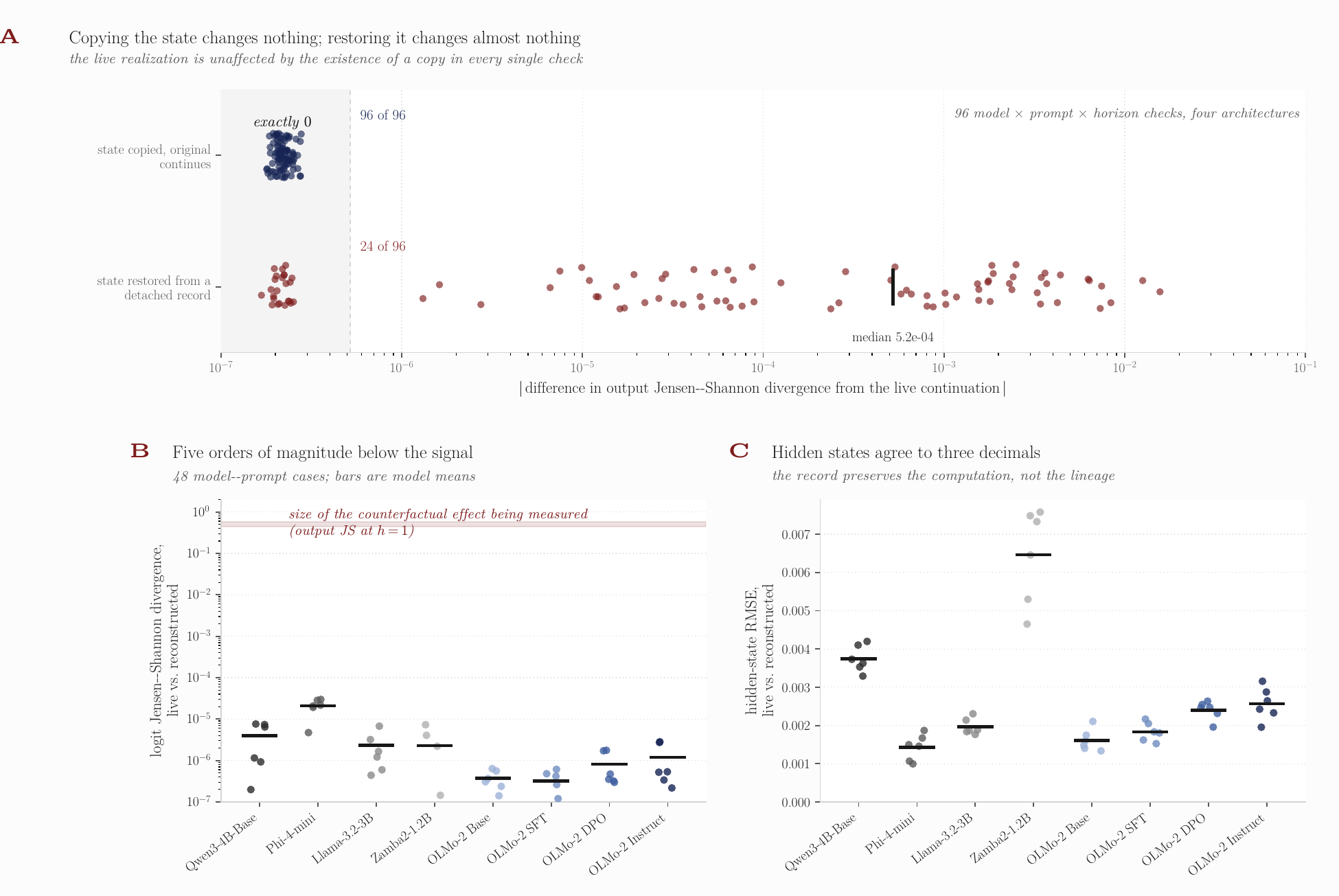}
\caption[Copy and reconstruction equivalence]{%
\textbf{Copyability is not delegability.}
\emph{A}~Across 96 model$\times$prompt$\times$horizon checks in the eight-model
mechanistic panel, introducing an unused numerically identical copy of the
governance state changes the live continuation by exactly zero in every case.
Restoring the same state from a detached record produces only a small
difference: 24 of 96 reconstruction checks are also exactly zero, and the
median absolute difference in output Jensen--Shannon divergence is
$5.2\times10^{-4}$.
\emph{B}~Logit Jensen--Shannon divergence between the live and reconstructed
continuations is shown against the magnitude of the counterfactual effect
being measured at $h=1$. Across 48 model--prompt cases, reconstruction error
lies approximately five orders of magnitude below the causal signal.
\emph{C}~Hidden-state reconstruction error is likewise small, with
model-level RMSE values on the order of $10^{-3}$. Thus a detached record can
preserve the relevant computational state to high numerical accuracy even
when the reconstruction protocol assigns a different causal history to the
continuation. The comparison operationalizes the distinction between
copyability and non-delegable inheritance in CLT-I; it does not test CLT-II.}
\label{fig:audit-copy}
\end{figure}

\end{document}